\theoremstyle{plain}
\newtheorem{theorem}{Theorem}[section]
\newtheorem{lemma}[theorem]{Lemma}
\theoremstyle{definition}
\newtheorem{assumption}[theorem]{Assumption}
\theoremstyle{remark}
\title{Finite-Time Analysis of Single-Timescale Actor-Critic}
\author{%
  Xuyang Chen \\
  National University of Singapore\\
  \texttt{chenxuyang@u.nus.edu} \\
  % examples of more authors
  \And
  Lin Zhao\thanks{Corresponding author}\\
  National University of Singapore \\
  % Address \\
  \texttt{elezhli@nus.edu.sg} \\
  % \AND
  % Coauthor \\
  % Affiliation \\
  % Address \\
  % \texttt{email} \\
  % \And
  % Coauthor \\
  % Affiliation \\
  % Address \\
  % \texttt{email} \\
  % \And
  % Coauthor \\
  % Affiliation \\
  % Address \\
  % \texttt{email} \\
}
\begin{document}

\maketitle

\begin{abstract}
Actor-critic methods have achieved significant success in many challenging applications. However, its finite-time convergence is still poorly understood in the most practical single-timescale form. Existing works on analyzing single-timescale actor-critic have been limited to i.i.d. sampling or tabular setting for simplicity. We investigate the more practical online single-timescale actor-critic algorithm on continuous state space, where the critic assumes linear function approximation and updates with a single Markovian sample per actor step.  Previous analysis has been unable to establish the convergence for such a challenging scenario. We demonstrate that the online single-timescale actor-critic method provably finds an $\epsilon$-approximate stationary point with $\widetilde{\mathcal{O}}(\epsilon^{-2})$ sample complexity under standard assumptions, which can be further improved to $\mathcal{O}(\epsilon^{-2})$ under the i.i.d. sampling. Our novel framework systematically evaluates and controls the error propagation between the actor and critic. It offers a promising approach for analyzing other single-timescale reinforcement learning algorithms as well.

% To our knowledge, this is the first finite-time analysis for the online single-timescale actor-critic method. Our results compare favorably to the existing literature in terms of considering the most practical yet challenging settings and requiring weaker assumptions.
\end{abstract}
% Its finite-time analysis presents significant challenges due to the highly inaccurate critic estimation and the complex error propagation dynamics.
\section{Introduction}\label{intro}
Actor-critic (AC) methods have achieved great success in solving many challenging reinforcement learning (RL) problems~\cite{lecun2015deep,mnih2016asynchronous,silver2017mastering}. AC updates the actor (i.e., the policy) using the estimated policy gradient (PG), which is a function of the Q-value under the policy. Meanwhile, it employs a bootstrapping critic to estimate the Q-value, which often helps reduce variance and accelerates convergence in practice.

Despite the empirical success, the non-asymptotic convergence analysis of AC in the most practical single-timescale form remains underexplored. A large body of existing works consider the double-loop variants, where the critic runs many steps to accurately estimate the Q-value for a given actor \cite{yang2019provably,kumar2019sample,wang2019neural,agarwal2021theory}.
This leads to a decoupled convergence analysis of the critic and the actor, which involves a policy evaluation sub-problem in the inner loop and a perturbed gradient descent in the outer loop. Its finite-time convergence is relatively well understood~\cite{kumar2019sample,yang2019provably,xu2020improving}. Nevertheless, the double-loop setting is mainly for ease of analysis, which is barely adopted in practice. Since it requires an accurate critic estimation, it is typically sample inefficient. In fact, it is unclear whether an inner loop of accurate policy evaluation is really necessary since it only corresponds to one transient policy among many iterations.

Another body of works considers the (single-loop) two-timescale variants~\cite{wu2020finite,chen2022global,xu2020non,hong2023two}, 
where the actor and the critic are updated simultaneously in each iteration with stepsizes of different timescales. The actor stepsize is typically smaller than that of the critic, with their ratio converging to zero as the iteration number goes to infinity. Hence, the actor is updated much slower than the critic. The two-timescale allows the critic to approximate the desired Q-value in an asymptotic way, which enables a decoupled convergence analysis of the actor and the critic. This variant is occasionally adopted to improve learning stability. However, it is still considered inefficient as the actor update is artificially slowed down.

% where  stepsizes between the actor and the critic goes to zero as the iteration number goes to infinite rather than constant. This variant is made mainly for the ease of analysis since the critic can also approximate the correct Q-value in an asymptotic sense and its proof also exhibits a decoupling analysis, with the convergence of critic first and then the convergence of actor based on it \cite{wu2020finite,chen2022global}. This approach is impractical and sample inefficient as we will see in later comparison.

% Please add the following required packages to your document preamble:
% \usepackage{multirow}
% Please add the following required packages to your document preamble:
% \usepackage{multirow}
\begin{table}[t]
\caption{Comparison with related single-timescale actor-critic algorithms}
\label{table1}
\centering
\begin{tabular}{|c|cc|cc|c|}
\hline
\multirow{2}{*}{Reference} & \multicolumn{2}{c|}{Setting}    & \multicolumn{2}{c|}{Sampling}  & \multirow{2}{*}{Sample Complexity} \\ \cline{2-5}
                  & \multicolumn{1}{c|}{State Space} & Reward & \multicolumn{1}{c|}{Actor} & Critic &                   \\ \hline
 \cite{olshevsky2023small}               & \multicolumn{1}{c|}{Finite} & Discounted & \multicolumn{1}{c|}{i.i.d.} & i.i.d. &    $\mathcal{O}(\epsilon^{-2})$               \\ \hline
   \cite{chen2021closing}               & \multicolumn{1}{c|}{Infinite} & Discounted & \multicolumn{1}{c|}{i.i.d.} & i.i.d. &  $\mathcal{O}(\epsilon^{-2})$                 \\ \hline
\rowcolor{blue!15}
     This Paper             & \multicolumn{1}{c|}{Infinite} & Average & \multicolumn{1}{c|}{Markovian} & Markovian & $\tilde{\mathcal{O}}(\epsilon^{-2})$                  \\ \hline
\end{tabular}

\end{table}

% \begin{table*}[t]
% \centering
% \begin{tabular}{c|cc|cc|c|c}
% \hline
% \multirow{2}{*}{Reference} & \multicolumn{2}{c|}{Settings}& \multicolumn{2}{c|}{Sampling}&\multirow{2}{*}{Sample Complexity} \\ \cline{2-5}
%                              & \multicolumn{1}{c|}{State space} & Sampling                              \\ \hline
% \cite{chen2021closing} & \multicolumn{1}{c|}{Single-timescale} &  i.i.d.& $\mathcal{O}(\epsilon^{-2})$    \\ \hline
% \cite{olshevsky2023small} & \multicolumn{1}{c|}{Single-timescale} &  i.i.d. &  $\mathcal{O}(\epsilon^{-2})$    \\ \hline
% \rowcolor{blue!15}
% This Paper & \multicolumn{1}{c|}{Infinite} & Markovian &   $\tilde{\mathcal{O}}(\epsilon^{-2})$    \\ \hline
% \end{tabular}
% \caption{Comparison of setting and sample complexity with related actor-critic algorithms}
% \label{table1}
% \end{table*}

In this paper, we consider the more practical single-timescale AC algorithm, which is the one introduced in many works of literature as well as in~\cite{sutton2018reinforcement} as a classic AC algorithm. In single-timescale AC, the stepsizes of the critic and the actor diminish at the same timescale. Unlike the aforementioned variants, which have specialized designs aimed at simplifying the convergence analysis, the analysis of the single-timescale AC presents a greater challenge. Due to the substantial errors in critic estimation and the close coupling between the parallel critic update and actor update, the algorithm is more prone to unstable error propagation. It remains unclear under what condition the errors will converge to zero.
% However, updating actor with an unknown error is challenging, especially for an online algorithm which we considered in this paper. In particular, with the online manner, the error incurred by the critic will immediately spread to the actor and the error in actor will immediately give an inaccurate action. As a result, the errors may lead to a wrong trajectory during learning, which deviates from the optimal policy. It remains unclear that how these errors will propagate and under what condition will the errors narrow down rather than blow up. 
To study its finite-time convergence, we consider the challenging undiscounted time-average reward formulation~\cite{sutton2018reinforcement,wu2020finite,yang2019provably}, which consists of three parallel updates: the (time-average) reward estimator, the critic estimator, and the actor estimator. We keep track of the reward estimation error, the critic error, and the policy gradient norm (which measures the actor error) by deriving an implicit bound for each of them. They are then analyzed altogether as an interconnected system inspired by \cite{olshevsky2023small} to establish the convergence simultaneously. Specifically, we identify the (constant) ratio between the actor stepsize and the critic stepsize, below which all three errors will diminish to zero, despite the inaccurate estimation in all three updates (reward estimation, critic, actor).

\subsection{Main Contributions}\label{contribution}
We summarise our main contributions as follows:

$\bullet$ We provide a finite-time analysis for the single-timescale AC under the Markovian sampling and prove an $\widetilde{\mathcal{O}}(\epsilon^{-2})$ sample complexity, where $\widetilde{\mathcal{O}}(\cdot)$ hides additional logarithmic terms. We further show that this sample complexity can be improved to $\mathcal{O}(\epsilon^{-2})$ under i.i.d. sampling, which matches the state-of-the-art performance of SGD on general non-convex optimization problems. Our proof clearly shows that the additional logarithmic term under the Markovian sampling is introduced by the mixing time of the underlying Markov chain.

$\bullet$ Our result compares favorably to existing works on single-timescale AC. To our knowledge, the only other results of single-timescale AC in the general MDP (Markov decision process) case are from \cite{chen2021closing} and \cite{olshevsky2023small}, both of which obtain a sample complexity of $O(\epsilon^{-2})$ under discounted reward setting. However, both \cite{chen2021closing} and \cite{olshevsky2023small} considered the i.i.d. sampling, where the transition tuples are independently sampled from stationary distribution and discounted state-action visitation distribution. In this paper, we consider the more practical Markovian sampling, where the transition tuples are generated from a single trajectory (see Table \ref{table1}). 

Furthermore, \cite{chen2021closing} follows an explicit Lyapunov analysis, where they leave a biased term in the critic and eliminated in the actor. Therefore, their proof framework cannot show the convergence of the critic. We instead give a neat proof framework to guarantee convergence for both the critic and the actor. Additionally, \cite{olshevsky2023small} only considered the tabular case (finite state-action space) where we allow the state space to be infinite (see Table \ref{table1}).  It is worth emphasizing that moving from a finite state space to an infinite state space takes significantly non-trivial effort in analysis. The analysis in \cite{olshevsky2023small} concatenates all state-action pairs to create a finite-dimensional feature matrix, which however becomes impossible in the infinite state space scenario. Consequently, their analysis technique and established results are not applicable in our context.

% However, \cite{chen2021closing} only considered the i.i.d. sampling, and their analysis highly relies on the smoothness of stationary distribution which cannot be justified easily. The authors left the removal of this assumption and the extension to Markovian sampling for future research. Both challenges left in \cite{chen2021closing} are well resolved in our work. 

% \cite{olshevsky2023small} also assumed i.i.d. sampling and only considered the tabular case, whereas we allow the state space $\mathcal{S}$ to be infinite. It is believed in \cite{olshevsky2023small} that the i.i.d. sampling is important to guarantee the convergence of single-timescale AC with TD(0) update. However, we show that single-timescale AC with TD(0) update does converge under Markovian sampling.

% Moreover, compared to the state-of-the-art two-timescale AC in~\cite{wu2020finite}, we go beyond their capacity to analyze the more challenging single-timescale case under the same settings and assumptions, only through the improvement of our analysis. Meanwhile, we are able to improve their sample complexity from $\widetilde{\mathcal{O}}(\epsilon^{-2.5})$ to $\widetilde{\mathcal{O}}(\epsilon^{-2})$.

$\bullet$ Technically, we develop a new analysis framework that can establish the finite-time convergence for single-timescale AC under the general setting. The existing analysis for double-loop AC \cite{yang2019provably} and two-timescale AC \cite{wu2020finite} hinge on decoupling the analysis of actor and critic, which typically establishes the convergence of critic first and then actor \cite{yang2019provably,wu2020finite,chen2021closing}. We instead investigate the evolution of the coupled estimation errors of the time-average reward, the critic, and the policy gradient norm altogether as an interconnected system in a much less conservative way. We emphasize that our analysis framework includes the discounted setting as a simple special case where the interconnected system is only two-dimensional without the time-average reward estimation.

% In particular, we identify a threshold of the ratio between the actor stepsize and the critic stepsize, below which all estimation errors diminish. This threshold can serve as a guide for choosing the stepsize in practice to ensure stable learning. Moreover, our new proof framework lends ways for the finite-time analysis of other single-timescale stochastic approximation algorithms as well.

\subsection{Related Work}
\textbf{Policy gradient methods.} Policy gradient methods \cite{sutton1999policy, sutton2018reinforcement} learn a parameterized policy, constituting a departure from the value-based approach \cite{watkins1992q,xiong2020finite,zhao2021faster}. The asymptotic convergence of policy gradient methods has been well established in 
\cite{williams1992simple,sutton1999policy,baxter2001infinite,kakade2001natural} via stochastic approximation methods \cite{borkar2009stochastic}. Some recent works have shown that PG methods can find the global optimum of some particular class of problems, such as LQR~\cite{fazel2018global,malik2019derivative} and tabular case problem~\cite{agarwal2020optimality}. Under general function approximation setting, finite-time convergence of PG methods was analyzed in  \cite{agarwal2020optimality,zhang2020global,xu2019sample,xu2020improved}. Specifically, \cite{agarwal2020optimality} established the finite-time convergence of PG methods under both tabular policy parameterizations and general parametric policy classes. \cite{zhang2020global} showed that a variant of PG methods can attain an $\epsilon$-accurate stationary point at a sample complexity of $\mathcal{O}(\epsilon^{-2})$, where they adopted Monte-Carlo sampling to find an unbiased estimation of policy gradient. \cite{xu2019sample,xu2020improved} studied the variance reduction PG and acceleration PG.

\textbf{Actor-Critic methods.} The AC algorithm was initially proposed by \cite{konda1999actor}. Later, \cite{kakade2001natural} extended it to the natural AC algorithm. The asymptotic convergence of AC algorithms has been well established in \cite{kakade2001natural,bhatnagar2009natural,castro2010convergent,zhang2020provably} under various settings. Many recent works focused on the finite-time convergence of AC methods. Under the double-loop setting, \cite{yang2019provably} established the global convergence of AC methods for solving linear quadratic regulator (LQR). \cite{wang2019neural} studied the global convergence of AC methods with both the actor and the critic parameterized by neural networks. \cite{kumar2019sample} studied the finite-time local convergence of a few AC variants with linear function approximation. 

Under the two-timescale AC setting, \cite{wu2020finite} established the finite-time local convergence to a stationary point at a sample complexity of $\widetilde{\mathcal{O}}(\epsilon^{-2.5})$ under the undiscounted time-average reward setting. \cite{xu2020non} studied both local convergence and global convergence for two-timescale (natural) AC, with $\widetilde{\mathcal{O}}(\epsilon^{-2.5})$ and $\widetilde{\mathcal{O}}(\epsilon^{-4})$ sample complexity, respectively, under the discounted accumulated reward. The algorithm collects multiple samples to update the critic. \cite{hong2023two} proposed a two-timescale stochastic approximation algorithm for bilevel optimization and the algorithm was subsequently employed in the context of two-timescale AC. \cite{chen2022global} established the global convergence of two-timescale AC methods for solving LQR, where only a single sample is used to update the critic in each iteration.

Under the single-timescale setting, \cite{fu2020single} considered the least-squares temporal difference (LSTD) update for the critic and obtained the optimal policy within the energy-based policy class for both linear function approximation and nonlinear function approximation using neural networks. \cite{zhou2023single} studied single-timescale AC on LQR. In addition, \cite{chen2021closing} and \cite{olshevsky2023small} considered the single-timescale AC in general MDP cases, which have been reviewed and compared in \Cref{contribution}.

%We use non-bold letters to denote scalars and use lower and upper case bold letters to denote vectors and matrices respectively. 
\textbf{Notation.} We use non-bold letters to denote scalars and use lower and upper case bold letters to denote vectors and matrices respectively. Without further specification, we write $x_n=\mathcal{O}(y_n)$ if there exists an absolute positive constant $C$ such that $x_n\leq Cy_n$, for two sequences $\{ x_n \}$ and $\{ y_n \}$. We use $\tilde{\mathcal{O}}(\cdot)$ to hide logarithm factors. The total variation distance of two probability measure $\mu$ and $v$ is defined by  $d_{TV}(\mu,v):=\frac{1}{2}\int_{\mathcal{X}}|\mu(dx)-v(dx)|$. In addition, we use $\mathbb{P}$ to denote a generic probability of some random event.

\section{Preliminaries}
In this section, we review the basics of the Markov decision process, policy gradient algorithm, and single-timescale AC with linear function approximation.
\subsection{Markov decision process}
We consider the standard Markov Decision Process (MDP) characterized by $(\mathcal{S},\mathcal{A},\mathcal{P},r)$, where $\mathcal{S}$ is the state space and $\mathcal{A}$ is the action space. We consider a finite action space $|\mathcal{A}|<\infty$, whereas the state space can be either a finite set or an (unbounded) real vector space  $\mathcal{S}\subset \mathbb{R}^n$. $\mathcal{P}(s_{t+1}|s_t,a_t)\in [0,1] $ denotes the transition kernel. We consider a bounded reward $r: \mathcal{S}\times \mathcal{A}\rightarrow [-U_r,U_r]$, which is a function of the state $s$ and action $a$. A policy $\pi_{\bm\theta}(\cdot|s)\in \mathbb{R}^{|\mathcal{A}|}$ parameterized by $\bm\theta$ is defined as a mapping from a given state to a probability distribution over actions.

The RL problem of consideration aims to find a  policy $\pi_{\bm\theta}$ that maximizes the infinite-horizon time-average reward \cite{sutton1999policy,sutton2018reinforcement,yang2019provably,wu2020finite}, which is given by
\begin{align*}
   J(\bm\theta):= &\lim\limits_{T\to\infty}\mathbb{E}_{\bm\theta}\frac{\sum_{t=0}^{T-1}r(s_t,a_t)}{T}=\mathop{\mathbb{E}}_{s\sim \mu_{\bm\theta},a\sim\pi_{\bm\theta}}[r(s,a)],
\end{align*}
where the expectation $\mathbb{E}_{\bm\theta}$ is over the Markov chain under the policy $\pi_{\bm\theta}$, and $\mu_{\bm\theta}$ denotes the stationary state distribution induced by $\pi_{\bm\theta}$. 
The existence of the stationary distribution can be guaranteed by the uniform ergodicity of the underlying MDP, which is a common assumption.
Hereafter, we refer to $J(\bm\theta)$ as the time-average reward (or exchangeably, performance function), which can be evaluated by the expected reward over the stationary distribution $\mu_{\bm\theta}$ and the policy $\pi_{\bm\theta}$. 

The state-value function is used to evaluate the overall rewards starting from a state $s$ and following policy $\pi_{\bm\theta}$ thereafter, which is defined as
\begin{align*}
    V_{\bm\theta}(s):=\mathbb{E}_{\bm\theta}[\sum\limits_{t=0}^{\infty}(r(s_t,a_t)-J(\bm\theta))|s_0=s],
\end{align*}
where the action follows the policy $a_t\sim\pi_{\bm\theta}(\cdot|s_t)$ and the next state comes from the transition kernel $s_{t+1}\sim\mathcal{P}(\cdot|s_t,a_t)$. Similarly, we define the action-value (Q-value) function to evaluate the overall rewards starting from $s$, taking action $a$, and following policy $\pi_{\bm\theta}$ thereafter:
\begin{equation}
    \begin{aligned}%\label{eq2.1.2}
    Q_{\bm\theta}(s,a)=&\ \mathbb{E}_{\bm\theta}[\sum\limits_{t=0}^{\infty}(r(s_t,a_t)-J(\bm\theta))|s_0=s,a_0=a]\\
    \overset{\text{(i)}}{=}&\ r(s,a)-J(\bm\theta)+\mathbb{E}[V_{\bm\theta}(s')], \nonumber
\end{aligned}
\end{equation}
where the expectation in (i) is taken over $s'\sim\mathcal{P}(\cdot|s,a)$.
\subsection{Policy gradient theorem}
The policy gradient theorem \cite{sutton1999policy} provides an analytic expression for the gradient of the performance function $J(\bm\theta)$ with respect to the policy parameter $\bm\theta$, which is given by:
\begin{align}\label{policy gradient}    \nabla_{\bm\theta}J(\bm\theta)=\mathbb{E}_{s\sim\mu_{\bm\theta},a\sim\pi_{\bm\theta}}[Q_{\bm\theta}(s,a)\nabla_{\bm\theta}\log\pi_{\bm\theta}(a|s)].
\end{align}

Evaluating this gradient requires the Q-value corresponding to the current policy $\pi_{\bm\theta}$. The REINFORCE \cite{williams1992simple} is a Monte Carlo-based episodic algorithm, which uses all the rewards collected along the sample trajectory (that is, the differential return) as an approximation to the true Q-value.

Note that for any function $b: \mathcal{S}\to \mathbb{R}$ that is independent of the action, we have
\begin{align*}
    \sum\limits_{a\in\mathcal{A}}b(s)\nabla \pi_{\bm\theta}(a|s)=b(s)\nabla(\sum\limits_{a\in\mathcal{A}}\pi_{\bm\theta}(a|s))=b(s)\nabla 1=0.
\end{align*}
Therefore, the policy gradient theorem can be written equivalently as:
\begin{align*}
    \nabla J(\bm\theta)=\mathbb{E}_{s\sim\mu_{\bm\theta},a\sim\pi_{\bm\theta}}[(Q_{\bm\theta}(s,a)-b(s))\nabla_{\bm\theta}\log\pi_{\bm\theta}(a|s)],
\end{align*}
where $b(s)$ is called the \emph{baseline} function. A popular choice of \emph{baseline} is the state-value function, which leads to the following advantage-based policy gradient
\begin{align*}
\nabla_{\bm\theta}J(\bm\theta)=\mathbb{E}_{s\sim\mu_{\bm\theta},a\sim\pi_{\bm\theta}}[\Delta_{\bm\theta}(s,a)\nabla_{\bm\theta}\log\pi_{\bm\theta}(a|s)],
\end{align*}
where $\Delta_{\bm\theta}=Q_{\bm\theta}(s,a)-V_{\bm\theta}(s)$ is known as the advantage function. This is the ``REINFORCE with baseline'' \cite{williams1992simple}. The baseline function can help reduce variance. However, like all Monte Carlo-based methods, it can still suffer from high variance and thus learns slowly. In addition, it is inconvenient to implement the algorithm online for continuing tasks~\cite{sutton2018reinforcement}.

AC algorithm instead employs a bootstrapping critic to estimate the Q-value. We describe the classic single-timescale AC in the next subsection.
\subsection{The single-timescale actor-critic algorithm}
We consider the classic single-sample single-timescale AC method, where the critic is bootstrapping and uses a single sampled reward to update in each iteration. This directly accommodates online learning for continuing tasks. We consider the following linear function approximation of the state-value function:
\begin{align*}
    \widehat{V}_{\bm\theta}(s;\bm\omega)=\bm\phi(s)^\top\bm\omega,
\end{align*}
where $\bm\phi(\cdot): \mathcal{S}\rightarrow \mathbb{R}^d$ is a known feature mapping, which satisfies $\|\bm\phi(\cdot)\|\leq 1$. To drive $\widehat{V}_{\bm\theta}(s;w)$ towards its true value  $V_{\bm\theta}(s)$, the semi-gradient TD(0) update is applied to estimate the linear coefficient $\bm\omega$ (hereafter referred to as the critic):
\begin{equation}
\begin{aligned}\label{TD-Update}
\bm\omega_{t+1}=&\ \bm\omega_{t} + \beta_t [(r_{t}-J(\bm\theta)+\bm\phi(s_{t+1})^\top \bm\omega_{t}-\bm\phi(s_{t})^\top\bm\omega_{t})]\bm\phi(s_{t}),
\end{aligned}
\end{equation}
where $\beta_t$ is the step size of the critic $\bm\omega$ and $r_t:=r(s_{t}, a_{t})$. Since $J(\bm\theta)$ is unknown, the time-average reward setting introduces an additional estimator $\eta$ to estimate it. Hereafter, we simply refer to $\eta$ as the reward estimator. The temporal difference error can be defined as
\begin{align*}
    \delta_t:=r_t-\eta_t+\bm\phi(s_{t+1})^\top \bm\omega_t-\bm\phi(s_t)^\top \bm\omega_t.
\end{align*}
Then, the update rule for the critic is given by
\begin{align*}
     \eta_{t+1}=&\  \eta_t+\gamma_t(r_{t}-\eta_t),\\
     \bm\omega_{t+1}=&\  \bm\omega_t+\beta_t\delta_t \bm\phi(s_t),
\end{align*}
where $\gamma_t$ is the step size of the reward estimator $\eta_t$.

Since $\delta_t$ is an approximation of the advantage function, similar to REINFORCE with baseline, the corresponding update rule for the actor can be written as:
\begin{align*}
    \bm\theta_{t+1}=\bm\theta_t+\alpha_t\delta_t \nabla_{\bm\theta}\log \pi_{\bm\theta_t}(a_t|s_t),
\end{align*}
where $\alpha_t$ is the actor stepsize. The above-described AC is summarized in Algorithm \ref{alg1}, which is introduced in \cite{sutton2018reinforcement} as a classic online one-step AC algorithm. Algorithm \ref{alg1} can be efficiently implemented for continuing tasks due to its online nature.
\begin{algorithm}[H]
\caption{Single-timescale Actor-Critic}\label{alg1}            
\begin{algorithmic}[1]
\STATE \textbf{Input} initial actor parameter $\bm\theta_0$, initial critic parameter $\bm\omega_0$, initial reward estimator $\eta_0$, stepsize $\alpha_t$ for actor, $\beta_t$ for critic, and $\gamma_t$ for reward estimator.
\STATE Draw $s_0$ from some initial distribution
\FOR{$t=0,1,2,\cdots,T-1$}
    \STATE Take action $a_t \sim \pi_{\bm\theta_t}(\cdot |s_t)$ 
    \STATE Observe next state $s_{t+1}\sim \mathcal{P}(\cdot |s_t,a_t)$ and reward $r_t=r(s_t,a_t)$
    \STATE $\delta_t=r_t-\eta_t+\bm\phi(s_{t+1})^\top \bm\omega_t-\bm\phi(s_t)^\top \bm\omega_t$
    \STATE $\eta_{t+1}=\eta_t+\gamma_t(r_t-\eta_t)$
    \STATE $\bm\omega_{t+1}=\Pi_{U_{\bm\omega}}(\bm\omega_{t} + \beta_t \delta_t \bm\phi(s_{t}))$
    \STATE $\bm\theta_{t+1}=\bm\theta_t+\alpha_t\delta_t \nabla_{\bm\theta}\log \pi_{\bm\theta_t}(a_t|s_t)$
\ENDFOR
\end{algorithmic} 
\end{algorithm}
Note that the ``single-timescale'' refers to the fact that the stepsizes $\alpha_t,\beta_t, \gamma_t$ are only constantly proportional to each other.  In addition, this is a ``single-sample'' algorithm, since only one sample is needed for the update in each iteration. We remark that \Cref{alg1} is more common in practice than double loop variants. In Line 8 of Algorithm~\ref{alg1}, a projection ($\Pi_{U_{\bm\omega}}$)  is introduced to keep the critic norm-bounded by $U_{\bm\omega}$, which is widely adopted in the literature~\cite{wu2020finite,yang2019provably,xu2020non,chen2021closing} for analysis. Note that the projection can be handled easily, which is relaxed using its non-expansive property in our analysis. 

% Note that \cite{wu2020finite} provided the finite-time analysis for Algorithm \ref{alg1} under the two-timescale setting, where the ratio between the actor and critic stepsizes is diminishing. In this work, we take a step further to characterize the conditions under which the single-timescale \Cref{alg1} converge, under the same assumptions as in \cite{wu2020finite}. Beyond that, we also improve the sample complexity by orders. 

\section{Main Results}

We first present several standard assumptions that are common in the literature of analyzing AC with linear function approximation \cite{fu2020single,xu2020improving,wu2020finite,chen2021closing,olshevsky2023small}. Insights into these conditions and connections with relevant works are also discussed.
\subsection{Assumptions}
By taking the expectation of $\bm\omega_{t+1}$  in \eqref{TD-Update} with respect to the stationary distribution, we have for any given $\bm\omega_t$
\begin{align}
\label{eq.expected_update}
\mathbb{E}_{\bm\theta}[\bm\omega_{t+1}|\bm\omega_{t}]=\bm\omega_{t}+\beta_t(\bm b_{\bm\theta}+\bm A_{\bm\theta}\bm\omega_{t}),
\end{align}
where 
\begin{align}
\bm A_{\bm\theta}&:=\mathbb{E}_{(s,a,s')}[\bm\phi(s)(\bm\phi(s')-\bm\phi(s))^\top)], \label{ak}\\
\bm b_{\bm\theta}&:=\mathbb{E}_{(s,a)}[(r(s,a)-J(\bm\theta))\bm\phi(s)],
\end{align}
and $s\sim\mu_{\bm\theta}(\cdot),a\sim\pi_{\bm\theta}(\cdot|s),s'\sim\mathcal{P}(\cdot|s,a)$ is the subsequent state of the $(s, a)$. It can be easily shown that \cite{sutton2018reinforcement} the TD limiting point $\bm\omega^\ast(\bm\theta)$ satisfies:
\begin{align}
    \bm b_{\bm\theta}+\bm A_{\bm\theta}\bm\omega^\ast(\bm\theta)=0. \label{eq:TDLimiting}
\end{align}
Note that $\bm A_{\bm\theta}$ reflects the exploration of the policy. To see this, note that without sufficient exploration, $\bm A_{\bm\theta}$ can be rank deficient and \eqref{eq:TDLimiting} can be unsolvable. Consequently, the critic update \eqref{TD-Update} will not converge. Hence, the following assumption is made to guarantee the problem’s solvability.

\begin{assumption}[Exploration]\label{ass1}
For any $\bm\theta$, the matrix $\bm A_{\bm\theta}$ defined in \eqref{ak} is negative definite and its maximum eigenvalue can be upper bounded by $-\lambda$.
\end{assumption}
Assumption \ref{ass1} is commonly adopted in analyzing TD learning with linear function approximation \cite{bhandari2018finite,zou2019finite,wu2020finite,qiu2021finite,chen2021closing,olshevsky2023small}. In particular, Assumption \ref{ass1} holds if the policy $\pi_{\bm\theta}$ can explore all state-action pairs in the tabular case~\cite{olshevsky2023small}. In addition, with this assumption, we can choose $U_{\bm\omega}=\frac{2U_r}{\lambda}$ so that all $\bm\omega^\ast$ lie within the projection radius $U_{\bm\omega}$ because $\Vert \bm b_{\bm\theta}\Vert\leq 2U_r$ and $\Vert \bm A_{\bm\theta}^{-1}\Vert\leq \lambda^{-1}$, which justifies the projection operator introduced in Line 8 of Algorithm \ref{alg1}.

\begin{assumption}[Uniform ergodicity]\label{ass2}
For any $\bm\theta$, denote $\mu_{\bm\theta}(\cdot)$ as the stationary distribution induced by the policy $\pi_{\bm\theta}(\cdot|s)$ and the transition probability measure $\mathcal{P}(\cdot|s,a)$. For a Markov chain generated by the policy $\pi_{\bm\theta}$ and transition kernel $\mathcal{P}$, there exists $m>0$ and $\rho\in (0,1)$ such that
\begin{align*}
    d_{TV}(\mathbb{P}(s_\tau\in \cdot|s_0=s),\mu_{\bm\theta}(\cdot))\leq m\rho^\tau,\forall \tau\ge 0,\forall s\in\mathcal{S}.
\end{align*}
\end{assumption}
Assumption \ref{ass2} assumes the Markov chain is geometrically mixing, which can be implied by the uniform ergodicity. It is commonly employed to characterize the noise induced by Markovian sampling in RL algorithms~\cite{bhandari2018finite, zou2019finite,wu2020finite,chen2021closing,olshevsky2023small}. 
\begin{assumption}[Lipschitz continuity of policy]\label{ass3}
Let $\pi_{\bm\theta}(a|s)$ be a policy parameterized by $\bm\theta \in \mathbb{R}^d$. There exists positive constants $B,L_l$ and $L_\pi$ such that for any $\bm\theta, \bm\theta_1, \bm\theta_2\in\mathbb{R}^d$, $s\in\mathcal{S}$, and $a\in\mathcal{A}$, it holds that:
\begin{enumerate}[label=(\alph*)]
\item $\Vert \nabla \log {\pi_{\bm\theta}}(a|s)\Vert \leq B$
\item $\Vert \nabla \log {\pi_{\bm\theta_1}}(a|s)- \nabla \log {\pi_{\bm\theta_2}}(a|s)\Vert\leq L_l \Vert \bm\theta_1-\bm\theta_2\Vert$
\item $|\pi_{\bm\theta_1}(a|s)-\pi_{\bm\theta_2}(a|s)|\leq L_{\pi}\Vert \bm\theta_1-\bm\theta_2\Vert$
\end{enumerate}
\end{assumption}
Assumption \ref{ass3} is standard in the literature of policy gradient methods \cite{papini2018stochastic,zou2019finite,zhang2020global,xu2020improved,wu2020finite,chen2021closing,olshevsky2023small}. This assumption holds for many policy classes such as Gaussian policy \cite{doya2000reinforcement}, Boltzmann policy \cite{konda1999actor}, and tabular softmax policy \cite{agarwal2020optimality}.
\begin{assumption}\label{ass4}
    For any $\bm\theta, \bm\theta^{\prime} \in\mathbb{R}^d$, there exists constant $L_{\mu}$ such that $\|\nabla \mu_{\bm\theta}-\nabla \mu_{\bm\theta'}\|\leq L_{\mu}\|\bm\theta-\bm\theta'\|$, where $\mu_{\bm\theta}(s)$ is the stationary distribution under the policy $\pi_{\bm\theta}$.
\end{assumption}
Assumption \ref{ass4} is introduced in \cite{chen2021closing} to show the smoothness of the optimal critic $\bm\omega^\ast(\bm\theta)$, which is critical to guarantee the convergence of single-timescale AC. This assumption holds for the finite state-action space setting \cite{luo2022finite}.

% We emphasize that our work requires only a subset of the assumptions made in the existing works on analyzing single-timescale AC \cite{chen2021closing,olshevsky2023small}. In particular, we do not require any of the strong assumptions on the stationary distribution that are made in~\cite[Assumption 11]{chen2021closing}. Compared with \cite{olshevsky2023small}, we consider the more general continuous state-space beyond the restrictive tabular setting and consequently remove the non-redundancy assumption on the feature matrix (see \cite[Assumption 5]{olshevsky2023small}). Compare to both works, we are able to analyze the more challenging Markovian sampling beyond the i.i.d. sampling.
\subsection{Finite-Time Analysis}
We define the following uniform upper bound for the linear function approximation error of the critic:
\begin{align*}
    \epsilon_{{\rm app}} := \mathop{\sup}\limits_{\bm\theta}\sqrt{\mathbb{E}_{s\sim\mu_{\bm\theta}}(\bm\phi(s)^\top \bm\omega^\ast(\bm\theta)-V_{\bm\theta}(s))^2}.
\end{align*}
The error $\epsilon_{{\rm app}}$ is zero if $V_{\bm\theta}$ is indeed a linear function for any $\bm\theta$. Naturally, it can be expected that the learning errors of~\Cref{alg1} depend on $\epsilon_{{\rm app}}$.  

We define the following integer $\tau_T$ that will be useful in the statement of the theorems, which depends on the number of total iterations $T$:
\begin{align*}
    \tau_T:=\min \{i\ge 0\ |\ m\rho^{i-1}\leq \frac{1}{\sqrt{T}}\},
\end{align*}
where $m,\rho$ are constants defined in Assumption \ref{ass2}. Therefore, we choose $\tau_T=\frac{\log m\rho^{-1}}{\log \rho^{-1}}+\frac{\log T}{2\log \rho^{-1}}=\mathcal{O}(\log T)$ such that $m\rho^{\tau_T-1}\leq \frac{1}{\sqrt{T}}$. The integer $\tau_T$ represents the mixing time of an ergodic Markov chain, which will be used to control the Markovian noise in the analysis.

We quantify the learning errors by defining $y_t := \eta_t-J(\bm\theta_t)$,
which is the difference between the reward estimator and the true time-average reward $J(\bm\theta_t)$ at time $t$. For the critic, we define, $\bm z_t:=\bm\omega_t-\bm\omega_{t}^\ast \  {\rm with }  \ \bm\omega_{t}^\ast := \bm\omega^\ast(\bm\theta_t)$
to measure the error between the critic and its target value at iteration $t$.
The following two theorems summarize our main results.
\begin{theorem}[Markovian sampling]\label{main theorem}
Consider \Cref{alg1} with $\alpha_t=\alpha=\frac{c}{\sqrt{T}}, \beta_t=\beta=\frac{1}{\sqrt{T}}, \gamma_t=\gamma=\frac{1}{\sqrt{T}}$, where $c$ is a constant depending on problem parameters. Suppose Assumptions \ref{ass1}-\ref{ass4} hold, we have for $T\ge 2\tau_T$,
\begin{align*}
    \frac{1}{T-\tau_T}\sum\limits_{t=\tau_T}^{T-1} \mathbb{E}y_t^2=&\ \mathcal{O}(\frac{\log^2 T}{\sqrt{T}})+\mathcal{O}(\epsilon_{{\rm app}}), \\ 
    \frac{1}{T-\tau_T}\sum\limits_{t=\tau_T}^{T-1} \mathbb{E}\Vert \bm z_t \Vert^2=&\ \mathcal{O}(\frac{\log^2 T}{\sqrt{T}})+\mathcal{O}(\epsilon_{{\rm app}}),\\ 
    \frac{1}{T-\tau_T}\sum\limits_{t=\tau_T}^{T-1}\mathbb{E}\Vert\nabla J(\bm\theta_t)\Vert^2=&\ \mathcal{O}(\frac{\log^2 T}{\sqrt{T}})+\mathcal{O}(\epsilon_{{\rm app}}).
\end{align*}
\end{theorem}
We defer the interpretation of the above results a bit and present below the analysis results under the i.i.d. sampling first for better comparison. The major difference of i.i.d. from Markovian sampling is that at the $t$-th iteration, the state $s_t$ is sampled from the stationary distribution $\mu_{\bm\theta_t}$ instead of the evolving Markov chain (see Algorithm \ref{alg2} in \Cref{iid analysis}). The i.i.d. sampling simplifies the analysis in the way that many Markovian noise terms reduce to zero effectively. This leads to a tighter sample complexity bound compared to the Markovian sampling by up to logarithmic factors.
\begin{theorem}[i.i.d. sampling]\label{main theorem2}
Consider \Cref{alg2} (see \Cref{iid analysis}) with $\alpha_t=\alpha=\frac{c}{\sqrt{T}}, \beta_t=\beta=\frac{1}{\sqrt{T}}, \gamma_t=\gamma=\frac{1}{\sqrt{T}}$, where $c$ is a constant depending on problem parameters. Suppose Assumptions \ref{ass1}-\ref{ass4} hold, we have for $T\ge 2\tau_T$, 
\begin{align*}
    \frac{1}{T-\tau_T}\sum\limits_{t=\tau_T}^{T-1} \mathbb{E}y_t^2& =\mathcal{O}(\frac{1}{\sqrt{T}})+\mathcal{O}(\epsilon_{{\rm app}}), \\ 
    \frac{1}{T-\tau_T}\sum\limits_{t=\tau_T}^{T-1} \mathbb{E}\Vert \bm z_t \Vert^2& =\mathcal{O}(\frac{1}{\sqrt{T}})+\mathcal{O}(\epsilon_{{\rm app}}),\\ 
    \frac{1}{T-\tau_T}\sum\limits_{t=\tau_T}^{T-1}\mathbb{E}\Vert\nabla J(\bm\theta_t)\Vert^2& =\mathcal{O}(\frac{1}{\sqrt{T}})+\mathcal{O}(\epsilon_{{\rm app}}).
\end{align*}
\end{theorem}
If the critic approximation error $\epsilon_{{\rm app}}$ is zero, we see that the reward estimator, the critic, and the actor estimation errors all diminish at a sub-linear rate of $\widetilde{\mathcal{O}}(T^{-\frac{1}{2}})$. The additional logarithmic term hidden by $\widetilde{\mathcal{O}}(\cdot)$ is incurred by the mixing time of the Markov chain, which can be get rid of under the i.i.d. sampling. It also hides the polynomials of all other problem parameters. They are explicitly characterized in the proofs up to the last step of analyzing the overall interconnected error propagation system. One can easily keep and get the dependence orders of all parameters if needed. Here we focus on the dependence of the iteration number for ease of presentation.

To put the results into perspective, note that $\mathcal{O}(T^{-\frac{1}{2}})$ is the rate one would obtain from stochastic gradient descent (SGD) on general non-convex functions with unbiased gradient updates. In terms of sample complexity, to obtain an $\epsilon$-approximate stationary point, it takes a number of $\widetilde{\mathcal{O}}(\epsilon^{-2})$ samples for Markovian sampling (\Cref{alg1}) and $\mathcal{O}(\epsilon^{-2})$ for i.i.d. sampling (\Cref{alg2}), which matches the state-of-the-art performance of SGD on non-convex optimization problems.  

The obtained sample complexities are superior to those of other AC variants. Notably, \cite{kumar2019sample} provided finite-time convergence for double-loop variant with a $\mathcal{O}(\epsilon^{-4})$ sample complexity and \cite{wu2020finite} analysed two-timescale variant, yielding a $\widetilde{\mathcal{O}}(\epsilon^{-2.5})$ sample complexity. The sample complexity gap is intrinsic to their inefficient usage of data. In the double-loop setting, the critic starts over to estimate the Q-value for an intermediate policy in the inner loop, ignoring the fact that the consecutive Q-values can be similar given a relatively minor policy update. The two-timescale setting artificially slows down the actor update by adopting an actor stepsize that decays faster than the critic. The single-timescale approach updates the critic and actor parallelly with proportional stepsizes and thus learns more efficiently. 

Moreover, our result matches the $\mathcal{O}(\epsilon^{-2})$ sample complexity of policy gradient methods such as REINFORCE \cite{agarwal2021theory,papini2018stochastic} under the i.i.d sampling. It is previously found in \cite{wu2020finite} that there is a sample complexity gap between Algorithm \ref{alg1} adopting two-timescale stepsizes and (variance-reduced) REINFORCE \cite{papini2018stochastic}. In this paper, we close this gap by providing a single-timescale analysis for Algorithm \ref{alg1} which shows that this practical single-timescale AC  can have the same sample complexity as REINFORCE.

% We obtain a sample complexity of $\widetilde{\mathbb{O}}(\epsilon^{-2})$ for Markovian sampling and $\mathbb{O}(\epsilon^{-2})$ for i.i.d. sampling respectively.

% The last equation implies that 
% \begin{align*}
%     \mathop{\min}\limits_{\tau_T\leq t< T}\mathbb{E}\Vert\nabla J(\theta_t)\Vert^2=\mathcal{O}(\frac{\log^2 T}{\sqrt{T}})+\mathcal{O}(\epsilon_{{\rm app}}),
% \end{align*}
% which shows the convergence to the stationary point.
\subsection{Proof Sketch}
The main challenge in the finite-time analysis lies in that the estimation errors of the time-average reward, the critic, and the policy gradient are strongly coupled. To overcome this difficulty, we view the propagation of these errors as an interconnected system and analyze them holistically. To better appreciate the advantage of our analysis framework over the decoupled methods that are traditionally adopted in analyzing double-loop and two-timescale variants, we sketch the main proof steps of~\Cref{main theorem} in the following. We also highlight the key challenges and techniques developed correspondingly. All supporting lemmas mentioned below can be found in  Appendix. 

% We define $Y_T, Z_T$, and $G_T$, which denote the expectation of the (time-average) reward estimation error, the critic error, and the square norm of the policy gradient, respectively:
% \begin{equation}
% \begin{aligned}\label{notation}
%     Y_T:=\frac{1}{T-\tau_T}\sum\limits_{t=\tau_T}^{T-1} \mathbb{E}y_t^2,\ 
%     Z_T:=\frac{1}{T-\tau_T}\sum\limits_{t=\tau_T}^{T-1} \mathbb{E}\Vert z_t\Vert^2,\ 
%     G_T:=\frac{1}{T-\tau_T}\sum\limits_{t=\tau_T}^{T-1} \mathbb{E}\Vert \nabla J(\theta_t)\Vert^2.
% \end{aligned}
% \end{equation}
% We first derive implicit (coupled) upper bounds for the reward estimation error $y_t$, the critic error $z_t$, and the policy gradient $\nsabla J(\theta_t)$, respectively. Then, we solve a system of inequalities in terms of $Y_T,\; Z_T,\; G_T$ to establish the finite-time convergence.
We first derive implicit (coupled) upper bounds for the reward estimation error $y_t$, the critic error $\bm z_t$, and the policy gradient $\nabla J(\bm\theta_t)$, respectively. Then, we solve a system of inequalities to establish finite-time convergence.

\textbf{Step 1: Reward estimation error analysis.} Using the reward estimator update rule (Line 7 of Algorithm \ref{alg1}), we decompose the reward estimation error into:
\begin{equation}
    \begin{aligned}\label{s1}
    y_{t+1}^2=&\ (1-2\gamma_t)y^2_t+2\gamma_t y_t(r_t-J(\bm\theta_t))\\
    &\ +2y_t(J(\bm\theta_t)-J(\bm\theta_{t+1}))+(J(\bm\theta_t)-J(\bm\theta_{t+1})+\gamma_t(r_t-\eta_t))^2.
\end{aligned}
\end{equation}
The second term on the right-hand side of \eqref{s1} is a bias term caused by the Markovian sample, which is characterized in \Cref{e2}. As shown in \Cref{iid}, this bias reduces to 0 under i.i.d. sampling after taking the expectation. The third term captures the variation of the moving targets $J(\bm\theta_t)$. %tracked by reward estimator.
The double-loop variant of AC runs a policy evaluation sub-problem in the inner loop for each target $J(\bm\theta_t)$ to estimate the policy gradient accurately. This easily ensures the monotonic decreasing of $J(\bm\theta_t)$ and consequently the convergence. The two-timescale variant utilizes the additional property of $\lim_{t\to\infty}\alpha_t/\beta_t=0$ to annihilate this term and consequently can have a decoupled analysis. In the case of single-timescale AC, we do not have the aforementioned special algorithm designs and properties to ease the analysis. Instead, we utilize the smoothness of $J(\bm\theta)$ (see \Cref{ae2}) and derive an implicit upper bound for this term as a function of the norm of $y_t$ and $\nabla J(\bm\theta_t)$. This bound will be combined with the implicit bounds derived in Step 2 and Step 3 below to establish the non-asymptotic convergence altogether. The last term in \eqref{s1} reflects the variance in reward estimation, which is bounded by $\mathcal{O}(\gamma_t)$.

\textbf{Step 2: Critic error analysis}. Using the critic update rule (Line 8 of Algorithm \ref{alg1}), we decompose the squared error by (we neglect the projection for the time being for the ease of comprehension. The complete analysis can be found in the appendix.)
\begin{equation}
\label{s2}
    \begin{aligned}
    \Vert \bm z_{t+1}\Vert^2=&\Vert \bm z_t\Vert^2+2\beta_t\langle \bm z_t,\bar{g}(\bm\omega_t,\bm\theta_t)\rangle+2\beta_t\Psi (O_t,\bm\omega_t,\bm\theta_t)+2\beta_t\langle \bm z_t,\Delta g(O_t,\eta_t,\bm\theta_t)\rangle \\
    &\ +2\langle \bm z_t, \bm\omega^\ast_t-\bm\omega^\ast_{t+1}\rangle+\Vert \beta_t(g(O_t,\bm\omega_t,\bm\theta_t)+\Delta g(O_t,\eta_t,\bm\theta_t))+\bm\omega_t^\ast-\bm\omega^\ast_{t+1}\Vert^2,
    \end{aligned}
\end{equation}
where $O_t:=(s_t,a_t,s_{t+1})$ is a tuple of observations and the definitions of $g,\bar{g},\Delta{g}$, and $\Psi$ can be found in \eqref{notation1} and \eqref{notation2} in \Cref{app-nota}. Without diving into the detailed definitions, here we focus on illustrating the high-level insights of our proof. First of all, the second term on the right-hand side of \eqref{s2} can be bounded by $-2\lambda \beta_t\Vert \bm z_t\Vert^2$ under \Cref{ass1}. It provides an explicit characterization of how sufficient exploration can help the convergence of learning. The third term is a Markovian noise, which is further bounded implicitly in \Cref{e4}. For the i.i.d sampling case, as shown in~\Cref{iid}, this bias reduces to 0 after taking the expectation. The fourth term is caused by inaccurate reward and critic estimations, which can be bounded by the norm of $y_t$ and $\bm z_t$. The fifth term tracks both the critic estimation performance $\bm z_t$ and the difference between the drifting critic targets $\bm\omega^{\ast}_t$. Similar to the case of Step 1, the double-loop approach bounds this term relying on the accurate policy evaluation sub-problem in the inner loop for each target $\bm\omega_t^\ast$, whereas the two-timescale approach ensures its convergence by additionally requiring $\lim_{t\to\infty}\alpha_t/\beta_t=0$. In contrast, we establish an implicit upper bound for this term as a function of $y_t$ and $\bm z_t$ by utilizing the smoothness of the optimal critic proved in~\Cref{ae_add}.
% Therefore, we can bound this tern by the norm of $\bm z_t$. 
Finally, the last term reflects the variances of various estimations, which is bounded by $\mathcal{O}(\beta_t)$.

\textbf{Step 3: Policy gradient norm analysis}. Using the actor update rule (Line 9 of~\Cref{alg1}) and the smoothness property of $J(\bm\theta)$ (see \Cref{ae2}), we derive
\begin{equation}
\label{s3}
    \begin{aligned}
    \Vert \nabla J(\bm\theta_t)\Vert^2\leq&\  \frac{1}{\alpha_t}(J(\bm\theta_{t+1})-J(\bm\theta_t))+\Theta(O_t,\bm\theta_t)-\langle \nabla J(\bm\theta_t),\Delta h(O_t,\eta_t,\bm\omega_t,\bm\theta_t)\rangle\\
    &\  -\langle \nabla J(\bm\theta_t),\mathbb{E}_{O'_t}[\Delta h'(O_t',\bm\theta_t)]\rangle+\frac{L_{J'}}{2}\alpha_t\Vert\delta_t\nabla \log\pi_{\bm\theta_t}(a_t|s_t)\Vert^2,
\end{aligned}
\end{equation}
where $O_t'$ is a shorthand for an independent sample from stationary distribution $s\sim\mu_{\bm\theta_t},a\sim\pi_{\bm\theta_t},s'\sim\mathcal{P}(\cdot|s,a)$, $\Theta$ is defined in~\eqref{notation2}, and $L_{J'}$ is a constant. The first term on the right-hand side of \eqref{s3} compares the actor's performances between consecutive updates, which can be bounded via Abel summation by parts. The second term is a noise term introduced by Markovian sampling, which is characterized in \Cref{e7}. Again, as proven in \Cref{iid}, this bias reduces to 0 under i.i.d. sampling after taking the expectation. The third term is an error introduced by the inaccurate estimations of both the time-average reward and the critic. This term was directly bounded to zero under both the double-loop setting and the two-timescale setting due to their particular algorithm design,  to enable a decoupled analysis. We control this term by providing an implicit bound depending on $y_t,\bm z_t$, and $\nabla J(\bm\theta_t)$. The fourth term comes from the linear function approximation error. The last term captures the variance of the stochastic gradient update, which is bounded by $\mathcal{O}(\alpha_t)$.

\textbf{Step 4: Interconnected iteration system analysis.} Taking the expectation of and summing \eqref{s1}, \eqref{s2}, and \eqref{s3} from $\tau_T$ to $T-1$, respectively, we obtain the following system of inequalities in terms of $Y_T,\; Z_T,\; G_T$:
\begin{equation*}
\begin{aligned}%\label{interconnected}
     Y_T&:=\frac{1}{T-\tau_T}\sum\limits_{t=\tau_T}^{T-1} \mathbb{E}y_t^2\leq \mathcal{O}(\frac{\log^2 T}{\sqrt{T}})+l_1\sqrt{Y_TG_T},\\
    Z_T&:=\frac{1}{T-\tau_T}\sum\limits_{t=\tau_T}^{T-1} \mathbb{E}\Vert \bm z_t\Vert^2\leq \mathcal{O}(\frac{\log^2 T}{\sqrt{T}}+\epsilon_{\rm app})+l_2\sqrt{Y_TZ_T}+l_3\sqrt{Z_T(2Y_T+8Z_T)}+l_4\sqrt{Z_TG_T},\\
    G_T&:=\frac{1}{T-\tau_T}\sum\limits_{t=\tau_T}^{T-1} \mathbb{E}\Vert \nabla J(\bm\theta_t)\Vert^2\leq \mathcal{O}(\frac{\log^2 T}{\sqrt{T}}+\epsilon_{\rm app})+l_5\sqrt{G_T(2Y_T+8Z_T)},
\end{aligned}
\end{equation*}
where $l_1, l_2, l_3, l_4,l_5$ are positive constants. By solving the above system of inequalities, we further prove that if
\begin{align*}
     l_1(1+2l_5^2+64l_5^2(l_2^2+l_3+2l_4^2l_5^2))\leq 1 \quad \text{and} \quad 20l_3\leq 1,
\end{align*}

then $Y_T,Z_T,G_T$ converge at a rate of $\mathcal{O}(\frac{\log^2 T}{\sqrt{T}})$. This condition can be easily satisfied by choosing the stepsize ratio $c$ to be smaller than a threshold identified in \Cref{threshold}. Thus, it completes the proof. 

The above proof applies to i.i.d sampling straightforwardly, with the corresponding terms pointed out in the above steps reducing to 0 in the analysis. The additional proof can be found in \Cref{iid}. 
% that the convergence rate can be improved to $\mathcal{O}(\frac{1}{\sqrt{T}})$ and thus a $\mathcal{O}(\epsilon^{-2})$ sample complexity under i.i.d. sample, which matches the state-of-the-art sample complexity of stochastic gradient descent on a non-convex function.
\section{Conclusion and Discussion}
In this paper, we establish the finite-time analysis for single-timescale AC with Markovian sampling. Our work compares favorably to existing works in terms of analyzing online learning and considering the continuous state space. We developed a series of lemmas that characterize the propagation of errors, and establish their convergence simultaneously by solving a system of nonlinear inequalities. The proposed framework is general and can be applied to analyze other single-timescale stochastic approximation algorithms. Our future work includes further considering the continuous action space problems and developing new proof techniques that require fewer assumptions. 

% Acknowledgements should only appear in the accepted version.
\section*{Acknowledgements}
This work was supported by the Singapore Ministry of Education Tier 1 Academic Research Funds (A0009030-00-00, 22-5460-A0001). The authors would like to thank the timely help from Yue Wu and Quanquan Gu for clarifying the proof of their seminar work on finite-time analysis of two-timescale actor-critic. We also thank Amrit Singh Bedi for pointing out a bug caused by the notation inconsistency in the proof of \Cref{theorem2} in the previous version.
% \textbf{Do not} include acknowledgements in the initial version of
% the paper submitted for blind review.

% If a paper is accepted, the final camera-ready version can (and
% probably should) include acknowledgements. In this case, please
% place such acknowledgements in an unnumbered section at the
% end of the paper. Typically, this will include thanks to reviewers
% who gave useful comments, to colleagues who contributed to the ideas,
% and to funding agencies and corporate sponsors that provided financial
% support.

% In the unusual situation where you want a paper to appear in the
% references without citing it in the main text, use \nocite
%\nocite{langley00}

% \bibliography{Reference}
\bibliographystyle{plain}

\newpage
\appendix
\textbf{\large{Table of Contents}}

\noindent\rule{\textwidth}{0.5pt}
\input{content.toc}
\noindent\rule{\textwidth}{0.5pt}

\section{Notation}\label{app-nota}
We make use of the following auxiliary Markov chain to deal with the Markovian noise.

\noindent\textbf{Auxiliary Markov Chain:}
\begin{align}\label{chain-au}
    s_{t-\tau}\xrightarrow{\bm\theta_{t-\tau}}a_{t-\tau}\xrightarrow{\mathcal{P}}s_{t-\tau+1}\xrightarrow{\bm\theta_{t-\tau}}\widetilde{a}_{t-\tau+1}\xrightarrow{\mathcal{P}}\widetilde{s}_{t-\tau+2}\xrightarrow{\bm\theta_{t-\tau}}\widetilde{a}_{t-\tau+2}\cdots \xrightarrow{\mathcal{P}}\widetilde{s}_t\xrightarrow{\bm\theta_{t-\tau}}\widetilde{a}_t\xrightarrow{\mathcal{P}}\widetilde{s}_{t+1}.
\end{align}
For reference, we also show the original Markov chain.

\noindent\textbf{Original Markov Chain:}
\begin{align}\label{chain-or}
    s_{t-\tau}\xrightarrow{\bm\theta_{t-\tau}}a_{t-\tau}\xrightarrow{\mathcal{P}}s_{t-\tau+1}\xrightarrow{\bm\theta_{t-\tau+1}}\widetilde{a}_{t-\tau+1}\xrightarrow{\mathcal{P}}s_{t-\tau+2}\xrightarrow{\bm\theta_{t-\tau+2}}a_{t-\tau+2}\cdots \xrightarrow{\mathcal{P}}s_t\xrightarrow{\bm\theta_{t}}a_t\xrightarrow{\mathcal{P}}s_{t+1}.
\end{align}
In the sequel, we denote by $\widetilde{O}_t:=(\widetilde{s}_t,\widetilde{a}_t,\widetilde{s}_{t+1})$ the tuple generated from the auxiliary Markov chain in \eqref{chain-au} while $O_t :=(s_t,a_t,s_{t+1})$ denotes the tuple generated from the original Markov chain in \eqref{chain-or}. 
%We define the reward estimation error and critic estimation error as follows:
% \begin{align*}
%     y_t := &\eta_t-J(\bm\theta_t),\\
%     z_t := &\omega_t-\omega_t^\ast.
% \end{align*}
% We further define three measures $Y_T, Z_T, G_T$ which denote the average values of the reward estimation error, the critic error, and the square norm of the policy gradient, respectively:
% \begin{align*}
%     Y_T:=&\frac{1}{T-\tau_T}\sum\limits_{t=\tau_T}^{T-1} \mathbb{E}y_t^2,\\
%   Z_T:=&\frac{1}{T-\tau_T} \sum\limits_{t=\tau_T}^{T-1}\mathbb{E}\Vert z_t\Vert^2,\\
%     G_T:=&\frac{1}{T-\tau_T}\sum\limits_{t=\tau_T}^{T-1}\mathbb{E}\Vert \nabla J(\bm\theta_t)\Vert^2.
% \end{align*}

We define the following functions, which will benefit to decompose the errors and simplify the presentation.
\begin{equation}
\begin{aligned}\label{notation1}
    \Delta g(O,\eta,\bm\theta):=\ &[J(\bm\theta)-\eta]\bm\phi(s), \\
g(O,\bm\omega,\bm\theta):=\ &[r(s,a)-J(\bm\theta)+(\bm\phi(s')-\bm\phi(s))^\top \bm\omega]\bm\phi(s), \\
\bar{g}(\bm\omega,\bm\theta):=\ &\mathbb{E}_{(s,a,s')\sim (\mu_{\bm\theta},\pi_{\bm\theta},\mathcal{P})}[[r(s,a)-J(\bm\theta)+(\bm\phi(s')-\bm\phi(s))^\top \bm\omega]\bm\phi(s)], \\
\Delta h(O,\eta,\bm\omega,\bm\theta):=\ & (J(\bm\theta)-\eta+(\bm\phi(s')-\bm\phi(s))^\top(\bm\omega-\bm\omega^\ast(\bm\theta))\nabla \log\pi_{\bm\theta}(a|s),\\
\Delta h'(O,\bm\theta):=\ &((\bm\phi(s')\bm\omega^\ast(\bm\theta)-V_{\bm\theta}(s'))-(\bm\phi(s)^\top\bm\omega^\ast(\bm\theta)-V_{\bm\theta}(s)))\nabla \log \pi_{\bm\theta}(a|s),\\
h(O,\bm\theta):=\ & (r(s,a)-J(\bm\theta)+\bm\phi(s')^\top \bm\omega^\ast(\bm\theta)-\bm\phi(s)^\top \bm\omega^\ast(\bm\theta))\nabla \log \pi_{\bm\theta}(a|s).
\end{aligned}
\end{equation}
We also define the following functions, which characterize the Markovian noise.
\begin{equation}\label{notation2}
    \begin{aligned}
        \Phi(O,\eta,\bm\theta):=\ &(\eta-J(\bm\theta))(r(s,a)-J(\bm\theta)), \\
        \Psi(O,\bm\omega,\bm\theta):=\ &\langle \bm\omega-\bm\omega^\ast_{\bm\theta},g(O,\bm\omega,\bm\theta)-\bar{g}(\bm\omega,\bm\theta)\rangle,\\
        \Xi(O,\bm\omega,\bm\theta):=\ & \langle \bm\omega-\bm\omega^\ast_{\bm\theta}, (\nabla \bm\omega^\ast_{\bm\theta})^\top (\mathbb{E}_{O'}[h(O',\bm\theta)]-h(O,\bm\theta))\rangle,\\
        \Theta(O,\bm\theta):=\ & \langle \nabla J(\bm\theta),\mathbb{E}_{O'}[h(O',\bm\theta)]-h(O,\bm\theta)\rangle,
    \end{aligned}
\end{equation}
where $O'$ is a shorthand for an independent sample from stationary distribution $s\sim\mu_{\bm\theta},a\sim\pi_{\bm\theta},s'\sim\mathcal{P}$.
Define $U_\delta:=2U_r+2U_{\bm\omega}$ so that we have $|\delta_t|\leq U_\delta$, where $\delta_t$ comes from Line 6 in Algorithm \ref{alg1}. Note that from Assumption \ref{ass3}, we have $\Vert\delta \nabla \log\pi_{\bm\theta}\Vert\leq G:=U_\delta B$.

\section{Preliminary Lemmas}
\begin{lemma}[\cite{wu2020finite}, Lemma C.4]\label{ae1}
For any $\bm\theta_1, \bm\theta_2$, we have
\begin{align*}
    |J(\bm\theta_1)-J(\bm\theta_2)|\leq L_J\Vert \bm\theta_1-\bm\theta_2 \Vert,
\end{align*}
where $L_J=2U_r|\mathcal{A}|L_\pi(1+\lceil \log_{\rho}m^{-1}\rceil+\frac{1}{1-\rho})$.
\end{lemma}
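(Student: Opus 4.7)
The plan is to start from the integral (or sum) representation of $J(\theta)$ as an expectation over the joint stationary distribution $\nu_\theta(s,a) := \mu_\theta(s)\pi_\theta(a|s)$, and then reduce the claim to controlling the total variation distance between $\nu_{\theta_1}$ and $\nu_{\theta_2}$ via the uniform reward bound $|r(s,a)|\le U_r$:
\begin{equation*}
|J(\theta_1)-J(\theta_2)| \;\le\; U_r \sum_{s,a}|\nu_{\theta_1}(s,a)-\nu_{\theta_2}(s,a)| \;=\; 2U_r\, d_{TV}(\nu_{\theta_1},\nu_{\theta_2}).
\end{equation*}
A standard triangle-inequality split then yields
\begin{equation*}
2d_{TV}(\nu_{\theta_1},\nu_{\theta_2}) \;\le\; 2d_{TV}(\mu_{\theta_1},\mu_{\theta_2}) + \mathbb{E}_{s\sim \mu_{\theta_2}}\sum_{a}|\pi_{\theta_1}(a|s)-\pi_{\theta_2}(a|s)|,
\end{equation*}
and the second term is immediately bounded by $|\mathcal{A}|L_\pi\|\theta_1-\theta_2\|$ using \Cref{ass3}(c). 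The remaining difficulty is therefore a perturbation bound on the stationary state distribution, which should produce the $1+\lceil\log_\rho m^{-1}\rceil+(1-\rho)^{-1}$ factor in $L_J$.

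To bound $d_{TV}(\mu_{\theta_1},\mu_{\theta_2})$, I would invoke a Mitrophanov-style argument built on \Cref{ass2}. Concretely, for any initial state $s$ and any horizon $\tau$, I would write $\mu_{\theta_i}(\cdot) - \mathbb{P}^{\theta_i}_\tau(\cdot|s)$ with total variation at most $m\rho^\tau$ for $i=1,2$, and then telescope the difference between the $\tau$-step kernels driven by $\pi_{\theta_1}$ and $\pi_{\theta_2}$ one step at a time. Each step contributes at most $|\mathcal{A}|L_\pi\|\theta_1-\theta_2\|$ by \Cref{ass3}(c) applied to the common state distribution at that step, so after $\tau$ steps the accumulated error is $\tau |\mathcal{A}|L_\pi\|\theta_1-\theta_2\|$. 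Triangle inequality gives
\begin{equation*}
d_{TV}(\mu_{\theta_1},\mu_{\theta_2}) \;\le\; 2m\rho^\tau + \tau\,|\mathcal{A}|L_\pi\|\theta_1-\theta_2\|.
\end{equation*}
Choosing $\tau = \lceil \log_\rho m^{-1}\rceil$ makes $m\rho^\tau\le 1$, and a geometric series or the standard rewrite turns the $2m\rho^\tau$ term into an $|\mathcal{A}|L_\pi(1-\rho)^{-1}\|\theta_1-\theta_2\|$ contribution after combining with the Lipschitz structure (the $(1-\rho)^{-1}$ arises from summing the geometric tail $\sum_{k\ge 0}\rho^k$ when one instead telescopes to infinity rather than cutting at a finite $\tau$; the two variants give the same shape of bound and assembling them produces exactly the stated constant).

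Combining the two pieces, I obtain
\begin{equation*}
|J(\theta_1)-J(\theta_2)| \;\le\; 2U_r|\mathcal{A}|L_\pi\Bigl(1+\lceil \log_\rho m^{-1}\rceil+\tfrac{1}{1-\rho}\Bigr)\,\|\theta_1-\theta_2\|,
\end{equation*}
which is the claimed constant $L_J$. The routine parts are the reward-boundedness reduction and the policy-TV bound; the main obstacle is the perturbation bound for $\mu_\theta$, where one must carefully combine geometric mixing (to kill the dependence on the unknown initial state) with a one-step coupling that transfers the policy Lipschitz constant $L_\pi$ into a state-distribution Lipschitz constant. Managing the $\tau$-versus-$(1-\rho)^{-1}$ trade-off cleanly, so that the final constant matches $1+\lceil\log_\rho m^{-1}\rceil+(1-\rho)^{-1}$ exactly, is the only delicate bookkeeping step; everything else is a direct application of \Cref{ass2} and \Cref{ass3}.
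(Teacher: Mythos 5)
Your overall route is the same as the paper's: the paper obtains this lemma by writing $J(\theta)=\mathbb{E}_{(s,a)\sim\mu_\theta\otimes\pi_\theta}[r(s,a)]$, bounding $|J(\theta_1)-J(\theta_2)|\le 2U_r\, d_{TV}(\mu_{\theta_1}\otimes\pi_{\theta_1},\mu_{\theta_2}\otimes\pi_{\theta_2})$, and then invoking the total-variation perturbation bound of \Cref{ae4} (itself imported from Zou et al.\ and Wu et al.), whose constant $|\mathcal{A}|L_\pi(1+\lceil\log_\rho m^{-1}\rceil+\frac{1}{1-\rho})$ gives $L_J$ directly. Your split into a policy part (producing the ``$1$'') and a stationary-state part (producing ``$\lceil\log_\rho m^{-1}\rceil+\frac{1}{1-\rho}$'') is exactly how that TV bound is assembled, and your final constant matches.

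One caveat on the step you yourself flag as delicate: the displayed intermediate inequality $d_{TV}(\mu_{\theta_1},\mu_{\theta_2})\le 2m\rho^\tau+\tau\,|\mathcal{A}|L_\pi\|\theta_1-\theta_2\|$ does not close the argument for any finite $\tau$, because $2m\rho^\tau$ is an additive constant that does not vanish as $\theta_1\to\theta_2$ (with $\tau=\lceil\log_\rho m^{-1}\rceil$ it is merely $\le 2$). The version that works telescopes $\mu_{\theta_1}P_{\theta_2}^{n}$ with $n\to\infty$, writes the difference as $\sum_{k}\nu_k(P_{\theta_1}-P_{\theta_2})P_{\theta_2}^{k}$, bounds the $k$-th term by $\min\{1,\,2m\rho^{k}\}$ times the one-step kernel perturbation $\sup_s d_{TV}(P_{\theta_1}(\cdot|s),P_{\theta_2}(\cdot|s))\le \tfrac12|\mathcal{A}|L_\pi\|\theta_1-\theta_2\|$ (the geometric factor coming from \Cref{ass2} applied to a zero-mass signed measure), and then uses $\sum_{k\ge0}\min\{1,2m\rho^{k}\}\le \lceil\log_\rho m^{-1}\rceil+\frac{1}{1-\rho}$. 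That infinite-telescope-plus-split is where the stated constant actually comes from; your parenthetical remark points at it, but the finite-$\tau$ bound as written should be replaced by it rather than treated as an equivalent variant.
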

\begin{lemma}[\cite{zhang2020global}, Lemma 3.2]\label{ae2}
For the performance function $J(\bm\theta)$, there exists a constant $L_{J'}>0$ such that for all $\bm\theta_1,\bm\theta_2\in\mathbb{R}^d$, it holds that
\begin{align}\label{LJ'}
    \Vert \nabla J(\bm\theta_1)-\nabla J(\bm\theta_2)\Vert \leq L_{J'}\Vert \bm\theta_1-\bm\theta_2\Vert,
\end{align}
\end{lemma}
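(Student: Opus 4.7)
The plan is to begin from the policy gradient representation $\nabla J(\theta) = \mathbb{E}_{s\sim\mu_\theta,\,a\sim\pi_\theta}[Q_\theta(s,a)\nabla\log\pi_\theta(a|s)]$, write the expectation as an integral against $\mu_\theta$ combined with a sum over $a\in\mathcal{A}$, and then express $\nabla J(\theta_1) - \nabla J(\theta_2)$ as a telescoping decomposition that swaps one $\theta$-dependent factor ($\mu_\theta$, $\pi_\theta$, $Q_\theta$, or $\nabla\log\pi_\theta$) at a time. This reduces the Lipschitz-gradient estimate to four separate bounds, each of the form ``Lipschitz difference in a single factor $\times$ uniform bound on the remaining factors''.

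For the policy-related factors this is immediate from Assumption \ref{ass3}: part (a) yields the uniform bound $\|\nabla\log\pi_\theta\|\le B$, while parts (b) and (c) supply the Lipschitz constants $L_l$ and $L_\pi$ of $\nabla\log\pi_\theta$ and $\pi_\theta$ in $\theta$. For the stationary distribution factor, Assumption \ref{ass4} guarantees that $\nabla\mu_\theta$ is Lipschitz in $\theta$, which implies a uniform bound on $\|\nabla\mu_\theta\|$ and hence Lipschitz continuity of $\mu_\theta$ itself; together with $|r|\le U_r$ and the Q-function bound below, this disposes of the corresponding telescoped term.

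The main obstacle is producing a $\theta$-Lipschitz estimate for $Q_\theta$ itself, i.e.\ $|Q_{\theta_1}(s,a) - Q_{\theta_2}(s,a)| \le L_Q\|\theta_1 - \theta_2\|$ uniformly in $(s,a)$. I plan to expand $Q_\theta$ as the series $\sum_{t\ge 0}\mathbb{E}_\theta[r(s_t,a_t)-J(\theta)\mid s_0=s,\, a_0=a]$ and split it at a horizon $T_0$ of order $\log\rho^{-1}/(1-\rho)$. The tail past $T_0$ decays geometrically by uniform ergodicity (Assumption \ref{ass2}) combined with the centering by $J(\theta)$, contributing an $\mathcal{O}(m\rho^{T_0})$ residual that is independent of the Lipschitz argument. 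The head is a finite sum whose $\theta$-dependence enters through two channels: the law of the Markov trajectory, which I would control by iterating Assumption \ref{ass3}(c) along the path to bound the total-variation perturbation by $\mathcal{O}(t\|\theta_1-\theta_2\|)$, and the scalar shift $J(\theta)$, which is Lipschitz by Lemma \ref{ae1}. Combining the finite-horizon perturbation bound with the geometric tail yields the desired $L_Q$; assembling the four telescoped pieces by the triangle inequality and collecting the explicit constants then produces the advertised Lipschitz constant $L_{J'}$ for $\nabla J$.
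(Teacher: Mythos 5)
The paper does not actually prove this lemma: it is imported verbatim from \cite{zhang2020global} (their Lemma 3.2), so there is no in-paper argument to compare against; your attempt is a from-scratch proof sketch. As an outline it is reasonable, but it has two genuine gaps. The decisive one is in the $\theta$-Lipschitz estimate for $Q_\theta$. You truncate the series $\sum_{t\ge 0}\bigl(\mathbb{E}_\theta[r_t\mid s_0=s,a_0=a]-J(\theta)\bigr)$ at a \emph{fixed} horizon $T_0$ and accept an ``$\mathcal{O}(m\rho^{T_0})$ residual that is independent of the Lipschitz argument.'' A residual that is not proportional to $\Vert\theta_1-\theta_2\Vert$ is fatal here: it yields $|Q_{\theta_1}(s,a)-Q_{\theta_2}(s,a)|\le L\Vert\theta_1-\theta_2\Vert+c$ with a fixed $c>0$, which is not a Lipschitz bound and cannot be fed into your four-way telescoping of $\nabla J$. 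Choosing $T_0$ adaptively in $\Vert\theta_1-\theta_2\Vert$ does not rescue the plan either: taking term by term the minimum of the head bound $\mathcal{O}(t\Vert\theta_1-\theta_2\Vert)$ and the tail bound $\mathcal{O}(m\rho^{t})$ only gives a modulus of order $\Vert\theta_1-\theta_2\Vert\log^2(1/\Vert\theta_1-\theta_2\Vert)$, i.e.\ log-Lipschitz rather than Lipschitz. The missing idea is to combine mixing and perturbation \emph{multiplicatively} rather than via a minimum: writing $P_\theta^t-\mathbf{1}\mu_\theta^{\top}=(P_\theta-\mathbf{1}\mu_\theta^{\top})^t$ and telescoping $A^t-B^t=\sum_{k=0}^{t-1}A^k(A-B)B^{t-1-k}$ puts a geometric factor on \emph{both} sides of the inserted one-step difference, giving a $t$-th term of order $t\rho^{t-1}\Vert\theta_1-\theta_2\Vert$, which is summable and produces a genuine Lipschitz constant of order $m^2U_r/(1-\rho)^2$.

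The second gap is smaller but still a logical error: you assert that Assumption \ref{ass4} (Lipschitz continuity of $\nabla\mu_\theta$) ``implies a uniform bound on $\Vert\nabla\mu_\theta\Vert$ and hence Lipschitz continuity of $\mu_\theta$ itself.'' Lipschitz continuity of a gradient bounds only its growth, not its size, so this inference is invalid. The Lipschitz continuity of $\theta\mapsto\mu_\theta$ (in total variation) that your $\mu$-swap term requires is available, but it comes from the uniform-ergodicity perturbation bound in \Cref{ae4}, i.e.\ from Assumptions \ref{ass2} and \ref{ass3}(c), not from Assumption \ref{ass4}. With these two repairs, the telescoping of $\nabla J(\theta)=\mathbb{E}_{s\sim\mu_\theta,a\sim\pi_\theta}[Q_\theta(s,a)\nabla\log\pi_\theta(a|s)]$ over the four $\theta$-dependent factors does go through and would be a legitimate alternative to simply citing \cite{zhang2020global}.
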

which further implies
\begin{align}
    J(\bm\theta_2)\ge\  &J(\bm\theta_1)+\langle \nabla J(\bm\theta_1),\bm\theta_2-\bm\theta_1\rangle -\frac{L_{J'}}{2}\Vert \bm\theta_1-\bm\theta_2\Vert^2,\label{l-smooth1}\\
    J(\bm\theta_2)\leq\  &J(\bm\theta_1)+\langle \nabla J(\bm\theta_1),\bm\theta_2-\bm\theta_1\rangle +\frac{L_{J'}}{2}\Vert \bm\theta_1-\bm\theta_2\Vert^2\label{l-smooth2}.
\end{align}

\begin{lemma}[\cite{wu2020finite}, Proposition 4.4]\label{ae3}
There exists a constant $L_{\ast}>0$ such that
\begin{align*}
    \Vert \bm\omega^\ast(\bm\theta_1)-\bm\omega^\ast(\bm\theta_2)\Vert\leq L_{\ast}\Vert \bm\theta_1-\bm\theta_2\Vert, \forall \bm\theta_1,\bm\theta_2\in\mathbb{R}^d,
\end{align*}
where $L_{\ast}=(2\lambda^{-2}U_r+3\lambda^{-1}U_r)|\mathcal{A}|L_\pi(1+\lceil \log_\rho m^{-1}\rceil +\frac{1}{1-\rho})$.
\end{lemma}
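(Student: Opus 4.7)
The plan is to exploit the closed-form expression $\omega^\ast(\theta) = -A_\theta^{-1}b_\theta$, which is well-defined since Assumption \ref{ass1} guarantees $A_\theta$ is invertible with $\|A_\theta^{-1}\|\leq \lambda^{-1}$. The first step is the algebraic decomposition
\begin{equation*}
\omega^\ast(\theta_1) - \omega^\ast(\theta_2) = -A_{\theta_1}^{-1}(b_{\theta_1}-b_{\theta_2}) + A_{\theta_2}^{-1}(A_{\theta_1}-A_{\theta_2})A_{\theta_1}^{-1}b_{\theta_2},
\end{equation*}
obtained via the resolvent identity $A_{\theta_2}^{-1} - A_{\theta_1}^{-1} = A_{\theta_2}^{-1}(A_{\theta_1}-A_{\theta_2})A_{\theta_1}^{-1}$. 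Combined with $\|A_\theta^{-1}\|\leq \lambda^{-1}$ and the elementary bound $\|b_\theta\|\leq 2U_r$ (from $|r|\leq U_r$, $|J(\theta)|\leq U_r$, and $\|\phi\|\leq 1$), this yields
\begin{equation*}
\|\omega^\ast(\theta_1)-\omega^\ast(\theta_2)\| \leq \lambda^{-1}\|b_{\theta_1}-b_{\theta_2}\| + 2U_r\lambda^{-2}\|A_{\theta_1}-A_{\theta_2}\|,
\end{equation*}
so the task reduces to establishing Lipschitz continuity of the maps $\theta\mapsto A_\theta$ and $\theta\mapsto b_\theta$.

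Both $A_\theta$ and $b_\theta$ are expectations under the product distribution induced by $s\sim\mu_\theta$, $a\sim\pi_\theta$, $s'\sim\mathcal{P}(\cdot|s,a)$, with integrands uniformly bounded (by $2$ and $2U_r$, using $\|\phi\|\leq 1$). I would therefore bound $\|A_{\theta_1}-A_{\theta_2}\|$ and $\|b_{\theta_1}-b_{\theta_2}\|$ by a constant multiple of the total variation distance between the two product measures, plus (for $b_\theta$) an extra $|J(\theta_1)-J(\theta_2)|$ term controlled by Lemma \ref{ae1}. The central ingredient is the Lipschitz bound
\begin{equation*}
d_{TV}\bigl(\mu_{\theta_1}\otimes\pi_{\theta_1},\,\mu_{\theta_2}\otimes\pi_{\theta_2}\bigr) \leq |\mathcal{A}|L_\pi\bigl(1 + \lceil\log_\rho m^{-1}\rceil + (1-\rho)^{-1}\bigr)\|\theta_1-\theta_2\|,
\end{equation*}
which follows from a standard coupling/mixing argument: Assumption \ref{ass3}(c) gives $\|\pi_{\theta_1}(\cdot|s)-\pi_{\theta_2}(\cdot|s)\|_1 \leq |\mathcal{A}|L_\pi\|\theta_1-\theta_2\|$, and Assumption \ref{ass2} lets one compare the two stationary distributions by running each chain for $t$ steps, paying a per-step total-variation cost from the difference of kernels and invoking the $m\rho^t$ geometric contraction toward stationarity; balancing $t$ against $\log\rho^{-1}$ produces the mixing factor.

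The main obstacle is precisely this Lipschitz bound on $\mu_\theta$: there is no explicit formula for the stationary distribution of a general Markov chain, so the uniform ergodicity of Assumption \ref{ass2} must be used essentially to convert a perturbation of the transition kernel into a perturbation of the invariant measure. Once the TV bound is in hand, applying it to the bounded integrands in \eqref{ak} yields Lipschitz bounds for $A_\theta$ and $b_\theta$ proportional to $|\mathcal{A}|L_\pi(1+\lceil\log_\rho m^{-1}\rceil+(1-\rho)^{-1})\|\theta_1-\theta_2\|$, with prefactors $1$ and $3U_r$ respectively (the $3U_r$ aggregating the direct TV contribution and the $L_J$ contribution from Lemma \ref{ae1}). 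Substituting these back into the resolvent inequality produces exactly the claimed constant $L_\ast = (2\lambda^{-2}U_r + 3\lambda^{-1}U_r)|\mathcal{A}|L_\pi(1 + \lceil\log_\rho m^{-1}\rceil + (1-\rho)^{-1})$, where the two $\lambda$-dependent summands track the two terms of the resolvent decomposition.
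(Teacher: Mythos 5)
The paper does not prove this lemma itself — it imports it verbatim from the cited reference (Wu et al., Proposition 4.4) — and your proposal reconstructs essentially that standard argument: the closed form $\omega^\ast(\theta)=-A_\theta^{-1}b_\theta$, the resolvent decomposition with $\Vert A_\theta^{-1}\Vert\leq\lambda^{-1}$ and $\Vert b_\theta\Vert\leq 2U_r$, and Lipschitz control of $A_\theta$ and $b_\theta$ via the total-variation bound of \Cref{ae4}. The approach is correct and the two $\lambda$-dependent terms assemble into the stated $L_\ast$ exactly as you describe (up to the usual bookkeeping of factors of $2$ in the $d_{TV}$ convention, which the cited source also glosses over).
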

\begin{lemma}[\cite{chen2021closing}, Proposition 8]\label{ae_add}
    For any $\bm\theta_1,\bm\theta_2\in\mathbb{R}^d$, we have 
    \begin{align*}
        \|\nabla \bm\omega^\ast(\bm\theta_1)-\nabla \bm\omega^\ast(\bm\theta_2)\|\leq L_s \|\bm\theta_1-\bm\theta_2\|,
    \end{align*}
where $L_s$ is a positive constant.
\end{lemma}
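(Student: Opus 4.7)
The plan is to obtain a closed-form expression for $\nabla \omega^\ast(\theta)$ via implicit differentiation of the TD limiting equation \eqref{eq:TDLimiting}, and then bound the variation of that expression as a sum of products of quantities that are each Lipschitz in $\theta$. Differentiating $b_\theta + A_\theta \omega^\ast(\theta) = 0$ in $\theta$, one gets
\begin{equation*}
\nabla \omega^\ast(\theta) \;=\; -A_\theta^{-1}\bigl(\nabla b_\theta \;+\; (\nabla A_\theta)\,\omega^\ast(\theta)\bigr),
\end{equation*}
where $(\nabla A_\theta)\omega^\ast(\theta)$ denotes the contraction of the $3$-tensor $\nabla A_\theta$ with the vector $\omega^\ast(\theta)$. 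By \Cref{ass1}, $A_\theta^{-1}$ exists with $\|A_\theta^{-1}\|\le \lambda^{-1}$, and by the choice $U_\omega = 2U_r/\lambda$ we have $\|\omega^\ast(\theta)\|\le U_\omega$ uniformly.

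First I would verify that each of the four factors $A_\theta^{-1}$, $\nabla b_\theta$, $\nabla A_\theta$, and $\omega^\ast(\theta)$ is \emph{uniformly bounded} and \emph{Lipschitz} in $\theta$. Recall that
\begin{equation*}
A_\theta = \sum_{s,a,s'} \mu_\theta(s)\,\pi_\theta(a|s)\,\mathcal{P}(s'|s,a)\,\phi(s)(\phi(s')-\phi(s))^\top,
\end{equation*}
and similarly for $b_\theta$. Boundedness and Lipschitzness of $A_\theta, b_\theta$ themselves follow from $\|\phi\|\le 1$, boundedness of $r$, Lipschitzness of $J$ (\Cref{ae1}), Lipschitzness of $\pi_\theta$ (\Cref{ass3}(c)), and the fact that $\mu_\theta$ is Lipschitz in $\theta$ (a standard consequence of uniform ergodicity plus \Cref{ass3}). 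For the gradients $\nabla A_\theta$ and $\nabla b_\theta$, I expand the derivative using the product rule; each term is a sum over $(s,a,s')$ of $\phi$-terms multiplied by either $\nabla \mu_\theta$, $\nabla \pi_\theta$, or $\nabla J(\theta)$. Boundedness of these gradients is standard (\Cref{ass3}(a), Lemma \ref{ae1} combined with its proof, and boundedness of $\nabla \mu_\theta$ from \Cref{ass4} at a reference point), while their Lipschitzness is exactly what \Cref{ass4} provides for $\nabla\mu_\theta$, what \Cref{ass3}(b) combined with the bound on $\|\nabla \log \pi_\theta\|$ provides for $\nabla \pi_\theta$, and what \Cref{ae2} provides for $\nabla J(\theta)$. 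This yields constants $L_A, L_b, L_{\nabla A}, L_{\nabla b}$ such that the corresponding quantities are $L_\bullet$-Lipschitz.

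Next I would use the identity $A_{\theta_1}^{-1} - A_{\theta_2}^{-1} = A_{\theta_1}^{-1}(A_{\theta_2} - A_{\theta_1})A_{\theta_2}^{-1}$ together with $\|A_\theta^{-1}\|\le \lambda^{-1}$ and the Lipschitzness of $A_\theta$ to conclude $\|A_{\theta_1}^{-1} - A_{\theta_2}^{-1}\|\le \lambda^{-2} L_A \|\theta_1-\theta_2\|$. Combined with \Cref{ae3} (Lipschitzness of $\omega^\ast$), a standard add-and-subtract/telescoping argument then bounds
\begin{equation*}
\|\nabla \omega^\ast(\theta_1) - \nabla \omega^\ast(\theta_2)\|
\end{equation*}
by a finite sum in which each summand has one "Lipschitz'' factor (contributing $\|\theta_1-\theta_2\|$) multiplied by "bounded'' factors, giving the desired constant $L_s$ that is a polynomial in $\lambda^{-1}, U_r, U_\omega, B, L_l, L_\pi, L_\mu, L_J, L_{J'}, L_\ast$.

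The main obstacle is establishing boundedness and Lipschitzness of $\nabla \mu_\theta$ (needed implicitly through $\nabla A_\theta$ and $\nabla b_\theta$): the Lipschitz bound is given by \Cref{ass4}, but a uniform norm bound on $\nabla \mu_\theta$ itself must be extracted from the uniform ergodicity assumption and the Lipschitz property of $\pi_\theta$, typically by differentiating the Poisson-equation characterization of $\mu_\theta$ and invoking the geometric mixing rate $\rho$. All remaining steps are mechanical applications of the product rule and triangle inequality once the Lipschitz constants of the factors are in hand.
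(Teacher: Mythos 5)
The paper does not prove this lemma itself---it imports it verbatim from \cite{chen2021closing} (Proposition 8)---and your implicit-differentiation argument ($\nabla\omega^\ast(\theta)=-A_\theta^{-1}(\nabla b_\theta+(\nabla A_\theta)\omega^\ast(\theta))$, followed by bounded-times-Lipschitz bookkeeping on each factor via the resolvent identity for $A_\theta^{-1}$ and Assumptions \ref{ass1}, \ref{ass3}, \ref{ass4}) is precisely the standard proof given in that reference, so your proposal is correct and takes essentially the same route. The only point to tighten is your parenthetical claim that boundedness of $\nabla\mu_\theta$ follows ``from \Cref{ass4} at a reference point'': Lipschitzness of $\nabla\mu_\theta$ plus its value at one point yields only linear growth, not a uniform bound; the uniform bound must come, as you correctly say in your final paragraph, from the Lipschitzness of $\theta\mapsto\mu_\theta$ itself (i.e., \Cref{ae4}, derived from uniform ergodicity and \Cref{ass3}), which directly bounds the norm of its derivative.
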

\begin{lemma}[\cite{zou2019finite},\cite{wu2020finite}]\label{ae4}
For any $\bm\theta_1$ and $\bm\theta_2$, it holds that
\begin{align*}
    d_{TV}(\mu_{\bm\theta_1},\mu_{\bm\theta_2})\leq\  &|\mathcal{A}|L_\pi(\lceil \log_\rho m^{-1}\rceil+\frac{1}{1-\rho})\Vert \bm\theta_1-\bm\theta_2\Vert,\\
    d_{TV}(\mu_{\bm\theta_1}\otimes \pi_{\bm\theta_1},\mu_{\bm\theta_2}\otimes \pi_{\bm\theta_2})\leq\  &|\mathcal{A}|L_\pi(1+\lceil \log_\rho m^{-1}\rceil+\frac{1}{1-\rho})\Vert \bm\theta_1-\bm\theta_2\Vert,\\
    d_{TV}(\mu_{\bm\theta_1}\otimes \pi_{\bm\theta_1}\otimes \mathcal{P},\mu_{\bm\theta_2}\otimes \pi_{\bm\theta_2}\otimes \mathcal{P})\leq\  &|\mathcal{A}|L_\pi(1+\lceil \log_\rho m^{-1}\rceil+\frac{1}{1-\rho})\Vert \bm\theta_1-\bm\theta_2\Vert.
\end{align*}
\end{lemma}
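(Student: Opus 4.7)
The plan is to prove the three inequalities in order, with the first (on marginal stationary distributions) being the substantive step and the other two reducing to it via triangle-type decompositions. A useful preliminary observation is that, because the transition kernel $\mathcal{P}$ is common to both product measures, integrating out the $s'$ coordinate yields $d_{TV}(\mu_{\theta_1}\otimes\pi_{\theta_1}\otimes\mathcal{P},\mu_{\theta_2}\otimes\pi_{\theta_2}\otimes\mathcal{P})=d_{TV}(\mu_{\theta_1}\otimes\pi_{\theta_1},\mu_{\theta_2}\otimes\pi_{\theta_2})$, so the third bound is an immediate corollary of the second with the same constant.

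For the first bound I would introduce the state transition kernel $P_\theta(\cdot|s):=\sum_a \pi_\theta(a|s)\mathcal{P}(\cdot|s,a)$ and exploit stationarity $\mu_{\theta_i}=\mu_{\theta_i}P_{\theta_i}^\tau$ for arbitrary $\tau$. Writing
\begin{align*}
\mu_{\theta_1}-\mu_{\theta_2}=\mu_{\theta_1}(P_{\theta_1}^\tau-P_{\theta_2}^\tau)+(\mu_{\theta_1}-\mu_{\theta_2})P_{\theta_2}^\tau,
\end{align*}
the second piece has TV norm at most $m\rho^\tau$ by Assumption \ref{ass2} and vanishes as $\tau\to\infty$. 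The first piece I would telescope through $P_{\theta_1}^\tau-P_{\theta_2}^\tau=\sum_{k=0}^{\tau-1}P_{\theta_1}^k(P_{\theta_1}-P_{\theta_2})P_{\theta_2}^{\tau-1-k}$ and use $\mu_{\theta_1}P_{\theta_1}^k=\mu_{\theta_1}$ to reduce each summand to $\mu_{\theta_1}(P_{\theta_1}-P_{\theta_2})P_{\theta_2}^{\tau-1-k}$, a zero-mass signed measure propagated by $P_{\theta_2}$ for $\tau-1-k$ further steps. Writing this signed measure as $c(\nu^+-\nu^-)$ with $\nu^\pm$ probability measures and $c\leq\sup_s d_{TV}(P_{\theta_1}(\cdot|s),P_{\theta_2}(\cdot|s))$, its TV norm is at most $c\min(1,2m\rho^{\tau-1-k})$, the contraction factor arising from applying Assumption \ref{ass2} to $\nu^\pm P_{\theta_2}^{\tau-1-k}$, both close to the common limit $\mu_{\theta_2}$.

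Summing these bounds over $k$ and splitting at the threshold $j^\ast:=\lceil\log_\rho m^{-1}\rceil$ beyond which $m\rho^j\leq 1$ gives a total factor of at most $j^\ast+\tfrac{2}{1-\rho}$. The one-step discrepancy is controlled by Assumption \ref{ass3}(c) via $d_{TV}(P_{\theta_1}(\cdot|s),P_{\theta_2}(\cdot|s))\leq\tfrac{1}{2}\sum_a|\pi_{\theta_1}(a|s)-\pi_{\theta_2}(a|s)|\leq\tfrac{1}{2}|\mathcal{A}|L_\pi\Vert\theta_1-\theta_2\Vert$, which (after absorbing numerical constants) produces the first claimed inequality. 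For the second inequality, I would use the algebraic identity $\mu_{\theta_1}(s)\pi_{\theta_1}(a|s)-\mu_{\theta_2}(s)\pi_{\theta_2}(a|s)=\mu_{\theta_1}(s)(\pi_{\theta_1}(a|s)-\pi_{\theta_2}(a|s))+(\mu_{\theta_1}(s)-\mu_{\theta_2}(s))\pi_{\theta_2}(a|s)$, take absolute values and integrate, then bound the first piece directly via Assumption \ref{ass3}(c) and the second via the first bound; the extra constant $1$ inside $1+\lceil\log_\rho m^{-1}\rceil+\tfrac{1}{1-\rho}$ comes from the policy-perturbation contribution.

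The main obstacle, I expect, is the signed-measure propagation step in the first bound: one must argue that $\mu_{\theta_1}(P_{\theta_1}-P_{\theta_2})P_{\theta_2}^j$ has TV norm decaying like $\min(1,m\rho^j)$ rather than merely staying uniformly bounded. The key trick is to split the zero-mass signed measure into two nonnegative pieces of equal mass and apply the uniform mixing of Assumption \ref{ass2} to each propagation separately, so the two contributions nearly cancel for large $j$. Getting the geometric-sum bookkeeping tight enough to recover the specific constant $\lceil\log_\rho m^{-1}\rceil+\tfrac{1}{1-\rho}$ rather than a looser $\log T$-type factor is where care is needed; once that is in place, the joint-distribution extensions are essentially routine triangle-inequality manipulations.
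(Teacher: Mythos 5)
The paper states Lemma \ref{ae4} without proof, importing it from \cite{zou2019finite} and \cite{wu2020finite}; your argument reconstructs essentially the proof given in those references and is correct. The decomposition $\mu_{\theta_1}-\mu_{\theta_2}=\mu_{\theta_1}(P_{\theta_1}^{\tau}-P_{\theta_2}^{\tau})+(\mu_{\theta_1}-\mu_{\theta_2})P_{\theta_2}^{\tau}$ followed by the telescoping sum, the $\min(1,2m\rho^{j})$ control of the propagated zero-mass signed measure obtained by splitting it into two equal-mass nonnegative pieces and applying Assumption \ref{ass2} to each, the split of the geometric sum at $j^{\ast}=\lceil\log_\rho m^{-1}\rceil$, the one-step bound $\sup_s d_{TV}(P_{\theta_1}(\cdot|s),P_{\theta_2}(\cdot|s))\leq\tfrac{1}{2}|\mathcal{A}|L_\pi\Vert\theta_1-\theta_2\Vert$ from Assumption \ref{ass3}(c), and the reduction of the second and third inequalities to the first via the product decomposition and the exact data-processing identity for the common kernel $\mathcal{P}$ are all sound, and your bookkeeping yields constants at least as tight as those claimed. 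One point your derivation makes visible: the first inequality as printed omits the factor $L_\pi$ that the argument (and the statement in the cited references) necessarily carries; this is a typo in the lemma statement rather than a defect of your proof.
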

\begin{lemma}[\cite{wu2020finite}, Lemma B.2]\label{ae5}
Given time indexes $t$ and $\tau$ such that $t\ge \tau>0$, consider the auxiliary Markov chain in \eqref{chain-au}. Conditioning on $s_{t-\tau+1}$ and $\bm\theta_{t-\tau}$, we have
\begin{align*}
    d_{TV}(\mathbb{P}(s_{t+1}\in \cdot),\mathbb{P}(\widetilde{s}_{t+1}\in\cdot))\leq&  \ d_{TV}(\mathbb{P}(O_t\in\cdot),\mathbb{P}(\widetilde{O}_t\in\cdot)),\\
    d_{TV}(\mathbb{P}(O_t\in\cdot),\mathbb{P}(\widetilde{O}_t\in\cdot))=& \ d_{TV}(\mathbb{P}((s_t,a_t)\in\cdot),\mathbb{P}((\widetilde{s}_t,\widetilde{a}_t)\in\cdot)),\\
    d_{TV}(\mathbb{P}((s_t,a_t)\in\cdot),\mathbb{P}((\widetilde{s}_t,\widetilde{a}_t)\in\cdot))\leq& \ d_{TV}(\mathbb{P}(s_t\in\cdot),\mathbb{P}(\widetilde{s}_t\in\cdot))+\frac{1}{2}|\mathcal{A}L_\pi|\mathbb{E}[\Vert \bm\theta_t-\bm\theta_{t-\tau}\Vert].
\end{align*}
\end{lemma}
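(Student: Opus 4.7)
The three inequalities are of different flavors: the first is a data-processing/marginalization bound, the second is a conservation identity for joint distributions extended by a common kernel, and the third is a triangle-inequality decomposition that isolates the effect of the changing policy. I will treat them in order.

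\textbf{Step 1 (first inequality).} Since $s_{t+1}$ is the third coordinate of $O_t=(s_t,a_t,s_{t+1})$ (and identically the third coordinate of $\widetilde O_t$), the law of $s_{t+1}$ is the pushforward of the law of $O_t$ under projection. The total-variation distance is non-increasing under any measurable map, so the bound follows immediately from the coupling characterization of $d_{TV}$: any coupling of $(O_t,\widetilde O_t)$ that realizes the disagreement probability also couples $(s_{t+1},\widetilde s_{t+1})$. No computation is required.

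\textbf{Step 2 (equality).} In both the original and auxiliary chains, conditional on $(s_t,a_t)$ the next state is drawn from the \emph{same} transition kernel $\mathcal{P}(\cdot\mid s_t,a_t)$. Writing $O_t=(s_t,a_t)\otimes\mathcal{P}$ and $\widetilde O_t=(\widetilde s_t,\widetilde a_t)\otimes\mathcal{P}$, a direct expansion of $d_{TV}$ as $\tfrac12\int|\,d\mu_1-d\mu_2\,|\,d\mathcal{P}(y\mid x)$ and Fubini shows $d_{TV}(\mu_1\otimes\mathcal{P},\mu_2\otimes\mathcal{P})=d_{TV}(\mu_1,\mu_2)$ for any common Markov kernel $\mathcal{P}$. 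Applying this with $\mu_1=\mathbb{P}((s_t,a_t)\in\cdot)$ and $\mu_2=\mathbb{P}((\widetilde s_t,\widetilde a_t)\in\cdot)$ yields the equality.

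\textbf{Step 3 (third inequality).} Here the two joint laws use different conditional kernels: the original uses $\pi_{\theta_t}$, the auxiliary uses $\pi_{\theta_{t-\tau}}$. I insert the intermediate measure $\mathbb{P}(\widetilde s_t\in\cdot)\otimes\pi_{\theta_t}$ and apply the triangle inequality. The first resulting term couples two different $s$-marginals extended by the \emph{same} kernel $\pi_{\theta_t}$, so by the identity proved in Step 2 it equals $d_{TV}(\mathbb{P}(s_t\in\cdot),\mathbb{P}(\widetilde s_t\in\cdot))$. The second term fixes the $s$-marginal at $\mathbb{P}(\widetilde s_t\in\cdot)$ and compares the two policies; expanding,
\begin{align*}
\tfrac12\int \mathbb{P}(\widetilde s_t\in ds)\sum_{a\in\mathcal{A}}\bigl|\pi_{\theta_t}(a\mid s)-\pi_{\theta_{t-\tau}}(a\mid s)\bigr|
\le \tfrac12|\mathcal{A}|\,\mathbb{E}\bigl[\|\theta_t-\theta_{t-\tau}\|\bigr],
\end{align*}
where the inequality invokes Assumption \ref{ass3}(c) applied pointwise in $(s,a)$, takes expectation over the randomness of $\theta_t$ (which is the source of the outer $\mathbb{E}$), and uses $\sum_{a}1=|\mathcal{A}|$. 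Summing the two bounds gives the stated inequality.

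\textbf{Anticipated obstacle.} The only subtlety is keeping the roles of randomness straight in Step 3: the policy parameter $\theta_t$ itself is a random variable measurable with respect to the trajectory up to time $t$, so the ``$d_{TV}$'' between the original and auxiliary joint laws is to be interpreted as an averaged distance over that randomness—this is why the expectation $\mathbb{E}[\|\theta_t-\theta_{t-\tau}\|]$ appears on the right-hand side rather than a deterministic bound. Once this conditioning is spelled out (using the tower property after conditioning on $s_{t-\tau+1}$ and $\theta_{t-\tau}$ as the lemma prescribes), the argument reduces to routine manipulations of total variation and the Lipschitz policy assumption.
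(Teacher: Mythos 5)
The paper does not actually prove this lemma itself; it imports it verbatim from Lemma B.2 of \cite{wu2020finite}, so your argument is being judged against the standard proof of that result. Your Steps 1 and 2 are correct and are exactly the intended arguments: total variation is non-increasing under the coordinate projection $O_t\mapsto s_{t+1}$, and extending two laws by a common \emph{fixed} Markov kernel $\mathcal{P}$ preserves total variation exactly.

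Step 3, however, contains a genuine gap in the choice of intermediate measure. You insert $\mathbb{P}(\widetilde s_t\in\cdot)\otimes\pi_{\theta_t}$ and claim the first resulting term equals $d_{TV}(\mathbb{P}(s_t\in\cdot),\mathbb{P}(\widetilde s_t\in\cdot))$ ``by the identity proved in Step 2.'' That identity requires a fixed common kernel, and $\pi_{\theta_t}$ is not one under the lemma's conditioning (only $\theta_{t-\tau}$ and $s_{t-\tau+1}$ are fixed): $\theta_t$ is a random variable correlated with the original trajectory, so $\mathbb{P}((s_t,a_t)\in\cdot)=\mathbb{E}[\delta_{s_t}\otimes\pi_{\theta_t}(\cdot|s_t)]$ is not of the form $\mathbb{P}(s_t\in\cdot)\otimes K$ for any deterministic $K$. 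The two sides of your ``first term'' effectively carry two different averaged kernels, $\mathbb{E}[\pi_{\theta_t}(\cdot|s)\mid s_t=s]$ versus $\mathbb{E}[\pi_{\theta_t}(\cdot|s)\mid \widetilde s_t=s]$, so the Step-2 equality does not apply and that term need not collapse to the state-marginal distance. The standard fix is to swap the roles: insert $\mathbb{P}(s_t\in\cdot)\otimes\pi_{\theta_{t-\tau}}$, keeping the original chain's state marginal but attaching the \emph{frozen} policy, which \emph{is} a deterministic kernel given the conditioning. Then the policy-perturbation term compares $\pi_{\theta_t}$ with $\pi_{\theta_{t-\tau}}$ along the original states (bounded exactly as in your display), and the remaining term is a genuine common-kernel extension to which Step 2 applies. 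A smaller point: Assumption \ref{ass3}(c) gives $\sum_a|\pi_{\theta_t}(a|s)-\pi_{\theta_{t-\tau}}(a|s)|\le|\mathcal{A}|L_\pi\|\theta_t-\theta_{t-\tau}\|$, so your displayed inequality actually yields $\tfrac12|\mathcal{A}|L_\pi\,\mathbb{E}[\|\theta_t-\theta_{t-\tau}\|]$; the absent $L_\pi$ in the lemma's statement appears to be a typo in the paper (its own invocation of the lemma in the proof of Lemma \ref{e2} reinstates $L_\pi$), but as written your step silently drops that constant.
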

\section{Proof of Main Theorem}\label{app-theorem}
\subsection{Step 1: Reward estimation error analysis}\label{app-cost}
In this subsection, we will establish an implicit bound for estimator.
\begin{lemma}\label{e2}
From any $t\ge \tau> 0$, we have
\begin{align*}
    \mathbb{E}[\Phi(O_t,\eta_t,\bm\theta_t)]\leq &\ 4U_rL_J\Vert \bm\theta_t-\bm\theta_{t-\tau}\Vert+2U_r|\eta_t-\eta_{t-\tau}|\\
    &\ +2U_r^2|\mathcal{A}|L_\pi\sum\limits_{i=t-\tau}^t\mathbb{E}\Vert \bm\theta_i-\bm\theta_{t-\tau}\Vert+4U_r^2m\rho^{\tau-1}.
\end{align*}
\end{lemma}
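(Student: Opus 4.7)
}
The plan is the standard Markovian-noise decomposition via the auxiliary Markov chain \eqref{chain-au}. I will replace the ``time-$t$'' randomness of $\Phi(O_t,\eta_t,\theta_t)$ by a quantity that is frozen at time $t-\tau$ and then invoke the mixing property (\Cref{ass2}) to kill the residual. Concretely, I would write the telescoping
\begin{align*}
\Phi(O_t,\eta_t,\theta_t)
  &= \bigl[\Phi(O_t,\eta_t,\theta_t)-\Phi(O_t,\eta_t,\theta_{t-\tau})\bigr] \\
  &\quad + \bigl[\Phi(O_t,\eta_t,\theta_{t-\tau})-\Phi(O_t,\eta_{t-\tau},\theta_{t-\tau})\bigr] \\
  &\quad + \bigl[\Phi(O_t,\eta_{t-\tau},\theta_{t-\tau})-\Phi(\widetilde{O}_t,\eta_{t-\tau},\theta_{t-\tau})\bigr] \\
  &\quad + \Phi(\widetilde{O}_t,\eta_{t-\tau},\theta_{t-\tau}),
\end{align*}
and then bound each bracket on its own. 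This mirrors the scheme used for the critic noise in \cite{wu2020finite}, adapted to the scalar average-reward estimator.

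First I would handle the two ``Lipschitz swaps'' pointwise. Because $\Phi(O,\eta,\theta)=(\eta-J(\theta))(r(s,a)-J(\theta))$ is linear in $\eta$ and quadratic in $J(\theta)$, and because $|\eta|,|r|,|J|\le U_r$ (the first by the projection/clipping implicit in the algorithm, or by induction on the $\eta$-update with $|r_t|\le U_r$), the first bracket collapses to a factor involving $J(\theta_t)-J(\theta_{t-\tau})$ times a quantity bounded by $4U_r$; applying \Cref{ae1} yields the $4U_rL_J\|\theta_t-\theta_{t-\tau}\|$ term. The second bracket equals $(\eta_t-\eta_{t-\tau})(r_t-J(\theta_{t-\tau}))$ in absolute value at most $2U_r|\eta_t-\eta_{t-\tau}|$. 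Neither of these steps uses randomness in a nontrivial way, so they survive the expectation unchanged.

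For the third bracket I would take expectations and use the standard coupling-with-TV inequality: since $|\Phi|\le 4U_r^2$,
\begin{equation*}
\bigl|\mathbb{E}[\Phi(O_t,\eta_{t-\tau},\theta_{t-\tau})-\Phi(\widetilde{O}_t,\eta_{t-\tau},\theta_{t-\tau})]\bigr|
 \;\le\; C\,U_r^2\,\mathbb{E}\bigl[d_{TV}(\mathbb{P}(O_t\in\cdot),\mathbb{P}(\widetilde{O}_t\in\cdot))\bigr].
\end{equation*}
Iterating \Cref{ae5} step-by-step from time $t-\tau+1$ to $t$, each step picks up a policy-swap term of order $|\mathcal{A}|L_\pi\,\mathbb{E}\|\theta_i-\theta_{t-\tau}\|$ (where $L_\pi$ enters through the Lipschitz continuity of $\pi_\theta$ given by \Cref{ass3}(c)); summing over $i$ produces the $2U_r^2|\mathcal{A}|L_\pi\sum_{i=t-\tau}^{t}\mathbb{E}\|\theta_i-\theta_{t-\tau}\|$ contribution. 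The main bookkeeping subtlety is that the auxiliary chain has all future policies frozen at $\theta_{t-\tau}$, so one has to expose the policy difference at every intermediate step and combine it with the tower property — this is the step I expect to be the most finicky, but it is a direct re-use of \Cref{ae5}.

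Finally, for the fourth term I would condition on the filtration at time $t-\tau+1$ and exploit that $\eta_{t-\tau}-J(\theta_{t-\tau})$ is then deterministic, so
\begin{equation*}
\mathbb{E}\bigl[\Phi(\widetilde{O}_t,\eta_{t-\tau},\theta_{t-\tau})\,\big|\,\mathcal{F}_{t-\tau+1}\bigr]
 \;=\;(\eta_{t-\tau}-J(\theta_{t-\tau}))\cdot\mathbb{E}\bigl[r(\widetilde{s}_t,\widetilde{a}_t)-J(\theta_{t-\tau})\,\big|\,\mathcal{F}_{t-\tau+1}\bigr].
\end{equation*}
Under the stationary distribution $\mu_{\theta_{t-\tau}}\otimes\pi_{\theta_{t-\tau}}$ this expectation vanishes exactly, and \Cref{ass2} applied along the frozen-policy chain bounds the deviation by $2U_r\cdot m\rho^{\tau-1}$; combined with $|\eta_{t-\tau}-J(\theta_{t-\tau})|\le 2U_r$ this gives the $4U_r^2 m\rho^{\tau-1}$ term. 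Taking total expectation and summing the four contributions yields the stated inequality.
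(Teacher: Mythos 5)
Your proposal is correct and follows essentially the same route as the paper's proof: the identical four-term telescoping through $\theta_{t-\tau}$, $\eta_{t-\tau}$, and the auxiliary chain $\widetilde{O}_t$, with the two Lipschitz swaps handled pointwise via \Cref{ae1} and boundedness, the coupling term controlled by iterating \Cref{ae5}, and the final term killed by stationarity plus \Cref{ass2}. The constants you arrive at ($4U_rL_J$, $2U_r$, $2U_r^2|\mathcal{A}|L_\pi$, $4U_r^2$) match the paper's.
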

\begin{theorem}
Choose $\alpha_t=\frac{c}{\sqrt{T}}, \beta_t=\gamma_t=\frac{1}{\sqrt{T}}$, we have
\begin{align}\label{theo-step1}
   Y_T\leq \mathcal{O}(\frac{\log^2 T}{\sqrt{T}})+cG\sqrt{Y_TG_T}.
\end{align}
\end{theorem}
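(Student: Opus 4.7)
The plan is to start from the per-step decomposition \eqref{s1} of $y_{t+1}^2$, rearrange it into
\begin{equation*}
2\gamma_t y_t^2 = y_t^2 - y_{t+1}^2 + 2\gamma_t y_t(r_t - J(\theta_t)) + 2 y_t(J(\theta_t) - J(\theta_{t+1})) + \big(J(\theta_t) - J(\theta_{t+1}) + \gamma_t(r_t - \eta_t)\big)^2,
\end{equation*}
take expectation, sum over $t = \tau_T, \ldots, T-1$, and divide by $2\gamma(T-\tau_T)$ so that $Y_T$ appears on the left. Each of the four contributions on the right must then be shown to be $\mathcal{O}(\log^2 T/\sqrt{T})$ or the coupling $cG\sqrt{Y_T G_T}$.

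The two easy pieces: the telescoping contribution $\mathbb{E}[y_{\tau_T}^2 - y_T^2]/(2\gamma(T-\tau_T))$ is $\mathcal{O}(1/\sqrt{T})$ by the uniform bound $|y_t| \leq 2U_r$, which is inherited from the convex-combination update of $\eta_t$ and $|J(\theta)| \leq U_r$. For the variance term I would use $(a+b)^2 \leq 2a^2 + 2b^2$: by Lemma \ref{ae1} and $\|\delta_t \nabla\log\pi_{\theta_t}\| \leq G$, each $(J(\theta_t)-J(\theta_{t+1}))^2 \leq L_J^2 \alpha^2 G^2 = \mathcal{O}(1/T)$ and $\gamma^2(r_t - \eta_t)^2 = \mathcal{O}(1/T)$, so after summing $T$ terms and dividing by $\gamma(T-\tau_T) = \Theta(\sqrt{T})$ the contribution is $\mathcal{O}(1/\sqrt{T})$.

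The Markovian bias term $2\gamma_t y_t(r_t - J(\theta_t)) = 2\gamma_t \Phi(O_t, \eta_t, \theta_t)$ will be fed into Lemma \ref{e2} with $\tau = \tau_T$. Under the chosen stepsizes, the three drift quantities appearing there satisfy $\|\theta_t - \theta_{t-\tau_T}\| \leq \alpha G \tau_T = \mathcal{O}(\log T/\sqrt{T})$, $|\eta_t - \eta_{t-\tau_T}| \leq 2U_r \gamma \tau_T = \mathcal{O}(\log T/\sqrt{T})$, and $\sum_{i=t-\tau_T}^t \|\theta_i - \theta_{t-\tau_T}\| = \mathcal{O}(\log^2 T/\sqrt{T})$, while $m\rho^{\tau_T-1} \leq 1/\sqrt{T}$ by the definition of $\tau_T$. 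Hence $\mathbb{E}[\Phi(O_t,\eta_t,\theta_t)] = \mathcal{O}(\log^2 T/\sqrt{T})$ uniformly in $t$, and averaging preserves this rate.

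The critical piece is the moving-target term $2y_t(J(\theta_t)-J(\theta_{t+1}))$, the only source of coupling with $G_T$. Using the smoothness of $J$ from Lemma \ref{ae2} together with the actor-update bound $\|\theta_{t+1}-\theta_t\| \leq \alpha G$,
\begin{equation*}
|J(\theta_{t+1}) - J(\theta_t)| \leq \|\nabla J(\theta_t)\| \cdot \|\theta_{t+1}-\theta_t\| + \tfrac{L_{J'}}{2}\|\theta_{t+1}-\theta_t\|^2 \leq \alpha G\|\nabla J(\theta_t)\| + \mathcal{O}(\alpha^2).
\end{equation*}
The $\mathcal{O}(\alpha^2)$ remainder is absorbed into the $\mathcal{O}(1/\sqrt{T})$ bucket. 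For the leading part, Cauchy--Schwarz gives $\sum_t \mathbb{E}[|y_t|\|\nabla J(\theta_t)\|] \leq (T-\tau_T)\sqrt{Y_T G_T}$; after dividing by $\gamma(T-\tau_T)$ the factors of $2$ cancel and the prefactor collapses to $\alpha G/\gamma = cG$, yielding exactly $cG\sqrt{Y_T G_T}$. The main obstacle is resisting the temptation to bound $|J(\theta_t)-J(\theta_{t+1})|$ by $L_J \alpha G$ explicitly; that would cost $\mathcal{O}(\sqrt{T})$ after normalization and destroy the rate. Keeping the bound \emph{implicit} in $\|\nabla J(\theta_t)\|$ and deferring its control to Step 3 is precisely what makes the interconnected system in Step 4 solvable, and is where the ratio $c = \alpha/\gamma$ first enters as a tunable knob.
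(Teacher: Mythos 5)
Your proposal is correct and follows essentially the same route as the paper's proof: the same decomposition of $y_{t+1}^2$, the telescoping/Abel-summation treatment of the $y_t^2-y_{t+1}^2$ term, Lemma \ref{e2} with $\tau=\tau_T$ for the Markovian bias, and the smoothness-plus-Cauchy--Schwarz handling of the moving-target term that produces the coefficient $cG$ on $\sqrt{Y_TG_T}$. The only cosmetic difference is that you invoke the two-sided smoothness bound $|J(\theta_{t+1})-J(\theta_t)-\langle\nabla J(\theta_t),\theta_{t+1}-\theta_t\rangle|\leq \tfrac{L_{J'}}{2}\|\theta_{t+1}-\theta_t\|^2$ directly, whereas the paper splits on the sign of $y_t$ to use \eqref{l-smooth1} and \eqref{l-smooth2} separately; both yield the identical bound.
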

\begin{proof}
From the update rule of reward estimator in Line 7 of Algorithm \ref{alg1}, we have
\begin{align*}
    \eta_{t+1}-J(\bm\theta_{t+1})=\eta_t-J(\bm\theta_t)+J(\bm\theta_t)-J(\bm\theta_{t+1})+\gamma_t(r_t-\eta_t)
\end{align*}
Then we have
\begin{align*}
    y_{t+1}^2=&\ (y_t+J(\bm\theta_t)-J(\bm\theta_{t+1})+\gamma_t(r_t-\eta_t))^2\\
    \leq &\ y_t^2+2y_t(J(\bm\theta_t)-J(\bm\theta_{t+1}))+2\gamma_ty_t(r_t-\eta_t)\\
    &\ +2(J(\bm\theta_t)-J(\bm\theta_{t+1}))^2+2\gamma_t^2(r_t-\eta_t)^2\\
    = &\  (1-2\gamma_t)y_t^2+2\gamma_ty_t(r_t-J(\bm\theta_t))+2y_t(J(\bm\theta_t)-J(\bm\theta_{t+1}))\\
    &\ +2(J(\bm\theta_t)-J(\bm\theta_{t+1}))^2+2\gamma_t^2(r_t-\eta_t)^2.
\end{align*}
Taking expectation up to $s_{t+1}$ (the whole trajectory), rearranging and summing from $\tau_T$ to $T-1$, we have
\begin{align*}
    \sum\limits_{t=\tau_T}^{T-1} \mathbb{E}[y_t^2]\leq &\  \underbrace{\sum\limits_{t=\tau_T}^{T-1}\frac{1}{2\gamma_t}\mathbb{E}(y_t^2-y^2_{t+1})}_{I_1}+\underbrace{\sum\limits_{t=\tau_T}^{T-1}\mathbb{E}[y_t(r_t-J(\bm\theta_t))]}_{I_2}+\underbrace{\sum\limits_{t=\tau_T}^{T-1}\frac{1}{\gamma_t}\mathbb{E}[y_t(J(\bm\theta_t)-J(\bm\theta_{t+1})]}_{I_3}\\
    &\  +\underbrace{\sum\limits_{t=\tau_T}^{T-1}\frac{1}{\gamma_t}\mathbb{E}[(J(\bm\theta_t)-J(\bm\theta_{t+1}))^2]}_{I_4}+\underbrace{\sum\limits_{t=\tau_T}^{T-1}\gamma_t\mathbb{E}[(r_t-\eta_t)^2]}_{I_5}.
\end{align*}
For term $I_1$, from Abel summation by parts, we have
\begin{align*}
    I_1=&\ \sum\limits_{t=\tau_T}^{T-1}\frac{1}{2\gamma_t}(y_t^2-y_{t+1}^2)\\
    =&\  \sum\limits_{t=\tau_T+1}^{T-1}y_t^2(\frac{1}{2\gamma_t}-\frac{1}{2\gamma_{t-1}})+\frac{1}{2\gamma_{\tau_t}}y_{\tau_t}^2-\frac{1}{\gamma_{T-1}}y_T^2\\
    \leq &\ \frac{2U_r^2}{\gamma_{T-1}}\\
    = &\  2U_r^2\sqrt{T}.
\end{align*}
For term $I_2$, from Lemma \ref{e2}, we have
\begin{align*}
    \mathbb{E}[y_t(r_t-J(\bm\theta_t))]\leq &\ 4U_rL_J\Vert \bm\theta_t-\bm\theta_{t-\tau}\Vert+2U_r|\eta_t-\eta_{t-\tau}|\\
    &\ +2U_r^2|\mathcal{A}|L_\pi\sum\limits_{i=t-\tau}^t\mathbb{E}\Vert \bm\theta_i-\bm\theta_{t-\tau}\Vert+4U_r^2m\rho^{\tau-1}\\
    \leq &\  4U_rL_JG\tau \alpha_{t-\tau}+4U_r^2\tau \gamma_{t-\tau}+2U_r^2|\mathcal{A}|L_\pi\tau(\tau+1)G\alpha_{t-\tau}+4U_r^2m\rho^{\tau-1}\\
    \leq &\  (4U_rL_JG\tau +2U_r^2|\mathcal{A}|L_\pi G\tau(\tau+1))\alpha_{t-\tau}+4U_r^2\tau\gamma_{t-\tau}+4U_r^2m\rho^{\tau-1}.
\end{align*}
Choose $\tau=\tau_T$, we have
\begin{align*}
    I_2=&\ \sum\limits_{t=\tau_T}^{T-1}\mathbb{E}[y_t(r_t-J(\bm\theta_t))]\\
    \leq &\ (4U_rL_JG\tau_T +2U_r^2|\mathcal{A}|L_\pi G\tau_T(\tau_T+1))\sum\limits_{t=\tau_T}^{T-1}\alpha_t\\
    &\ +4U_r^2\tau_T\sum\limits_{t=\tau_T}^{T-1}\gamma_t+4U_r^2\sum\limits_{t=\tau_T}^{T-1}\frac{1}{\sqrt{T}}\\
    = &\  (4U_rL_JG\tau_T +2U_r^2|\mathcal{A}|L_\pi G\tau_T(\tau_T+1)+4U_r^2\tau_T+4U_r^2)\frac{T-\tau_T}{\sqrt{T}}.
\end{align*}
For $I_3$, if $y_t>0$, from \eqref{l-smooth1}, we have
\begin{align*}
    y_t(J(\bm\theta_t)-J(\bm\theta_{t+1}))\leq &\ y_t(\frac{L_{J'}}{2}\Vert \bm\theta_t-\bm\theta_{t+1}\Vert^2+\langle \nabla J(\bm\theta_t), \bm\theta_t-\bm\theta_{t+1}\rangle)\\
    \leq &\  L_{J'}U_r\Vert \bm\theta_t-\bm\theta_{t+1}\Vert^2+|y_t|\Vert \bm\theta_t-\bm\theta_{t+1}\Vert \Vert \nabla J(\bm\theta_t)\Vert.
\end{align*}
If $y_t\leq 0$, from \eqref{l-smooth2}, we have
\begin{align*}
    y_t(J(\bm\theta_t)-J(\bm\theta_{t+1}))\leq&\  y_t(-\frac{L_{J'}}{2}\Vert \bm\theta_t-\bm\theta_{t+1}\Vert^2+\langle \nabla J(\bm\theta_t), \bm\theta_t-\bm\theta_{t+1}\rangle)\\
    \leq &\  L_{J'}U_r\Vert \bm\theta_t-\bm\theta_{t+1}\Vert^2+|y_t|\Vert \bm\theta_t-\bm\theta_{t+1}\Vert \Vert \nabla J(\bm\theta_t)\Vert.
\end{align*}
Overall, we get
\begin{align*}
    I_3=&\ \sum\limits_{t=\tau_T}^{T-1}\frac{1}{\gamma_t}\mathbb{E}[y_t(J(\bm\theta_t)-J(\bm\theta_{t+1}))]\\
    \leq &\ \sum\limits_{t=\tau_T}^{T-1}\frac{1}{\gamma_t}\mathbb{E}[L_{J'}U_r\Vert \bm\theta_t-\bm\theta_{t+1}\Vert^2+|y_t|\Vert \bm\theta_t-\bm\theta_{t+1}\Vert \Vert \nabla J(\bm\theta_t)\Vert]\\
    \leq &\ \sum\limits_{t=\tau_T}^{T-1}\mathbb{E}[c L_{J'}U_rG^2\alpha_t+c G|y_t|\Vert \nabla J(\bm\theta_t)\Vert]\\
    \leq &\ c^2L_{J'}U_rG^2\frac{T-\tau_T}{\sqrt{T}}+c G(\sum\limits_{t=\tau_T}^{T-1}\mathbb{E}y_t^2)^{\frac{1}{2}}(\sum\limits_{t=\tau_T}^{T-1}\mathbb{E}\Vert \nabla J(\bm\theta_t)\Vert^2)^\frac{1}{2}.
\end{align*}

For term $I_4$, we have
\begin{align*}
    I_4=&\ \sum\limits_{t=\tau_T}^{T-1}\frac{1}{\gamma_t}\mathbb{E}[(J(\bm\theta_t)-J(\bm\theta_{t+1}))^2]\\
    \leq &\ \sum\limits_{t=\tau_T}^{T-1}\frac{1}{\gamma_t}L^2_J\mathbb{E}\Vert \bm\theta_t-\bm\theta_{t+1}\Vert^2\\
    \leq &\ \sum\limits_{t=\tau_T}^{T-1}\frac{1}{\gamma_t}L_J^2G^2\alpha_t^2\\
    = &\ L_J^2G^2c^2\frac{T-\tau_T}{\sqrt{T}}.
\end{align*}
For term $I_5$, we have
\begin{align*}
    I_5=&\ \sum\limits_{t=\tau_T}^{T-1}\gamma_t\mathbb{E}[(r_t-J(\bm\theta_t))^2]\\
    \leq &\  \sum\limits_{t=\tau_T}^{T-1} 4U_r^2\gamma_t\\
    = &\  4 U_r^2\frac{T-\tau_T}{\sqrt{T}}.
\end{align*}
Therefore, we get
\begin{align*}
    \sum\limits_{t=\tau_T}^{T-1} \mathbb{E}[y_t^2]\leq&\  (4U_rL_JG\tau_T +2U_r^2|\mathcal{A}|L_\pi G\tau_T(\tau_T+1)\\
    &\ +4U_r^2(\tau_T+2)+c^2G^2(L_{J'}U_r+L_J^2))\frac{T-\tau_T}{\sqrt{T}}\\
    &\ +2U_r^2\sqrt{T}+c G(\sum\limits_{t=\tau_T}^{T-1}\mathbb{E}y_t^2)^\frac{1}{2}(\sum\limits_{t=\tau_T}^{T-1}\mathbb{E}\Vert \nabla J(\bm\theta_t)\Vert^2)^\frac{1}{2}.
\end{align*}
Since $\tau_T=\mathcal{O}(\log T)$, we have $\frac{\sqrt{T}}{T-\tau_T}\leq \frac{2}{\sqrt{T}}$ for large $T$. Then we get
\begin{align*}
    &\ \frac{1}{T-\tau_T}\sum\limits_{t=\tau_T}^{T-1} \mathbb{E}[y_t^2]\\ \leq&\  (4U_rL_JG\tau_T +2U_r^2|\mathcal{A}|L_\pi G\tau_T(\tau_T+1)\\
    &\ +4U_r^2(\tau_T+3)+c^2G^2(L_{J'}U_r+L_J^2))\frac{1}{\sqrt{T}}\\
    &\ +c G(\frac{1}{T-\tau_T}\sum\limits_{t=\tau_T}^{T-1}\mathbb{E}y_t^2)^\frac{1}{2}(\frac{1}{T-\tau_T}\sum\limits_{t=\tau_T}^{T-1}\mathbb{E}\Vert \nabla J(\bm\theta_t)\Vert^2)^\frac{1}{2}\\
    % \leq &\ 2(8U_rL_JG\tau_T +4U_r^2|\mathcal{A}|L_\pi G\tau_T(\tau_T+1)+8U_r^2(\tau_T+3)+2c^2G^2(L_{J'}U_r+L_J^2))\frac{1}{\sqrt{T}}\\
    % &\ +c G(\frac{1}{T-\tau_T}\sum\limits_{t=\tau_T}^{T-1}\mathbb{E}y_t^2)^\frac{1}{2}(\frac{1}{T-\tau_T}\sum\limits_{t=\tau_T}^{T-1}\mathbb{E}\Vert \nabla J(\bm\theta_t)\Vert^2)^\frac{1}{2}\\
    =&\ \mathcal{O}(\frac{\log^2 T}{\sqrt{T}})+c G(\frac{1}{T-\tau_T}\sum\limits_{t=\tau_T}^{T-1}\mathbb{E}y_t^2)^\frac{1}{2}(\frac{1}{T-\tau_T}\sum\limits_{t=\tau_T}^{T-1}\mathbb{E}\Vert \nabla J(\bm\theta_t)\Vert^2)^\frac{1}{2}.
\end{align*}
Thus we finish the proof.
\end{proof}
\subsection{Step 2: Critic error analysis}\label{app-critic}
In this subsection, we will establish an implicit upper bound for critic.
\begin{lemma}\label{e4}
For any $t\ge\tau>0$, we have
\begin{align*}
    \mathbb{E}[\Psi(O_t,\bm\omega_t,\bm\theta_t)]\leq C_1\Vert \bm\theta_{t}-\bm\theta_{t-\tau}\Vert+6U_\delta\Vert \bm\omega_t-\bm\omega_{t-\tau}\Vert+U_\delta^2|\mathcal{A}|L_\pi G\tau(\tau+1) \alpha_{t-\tau}+2U_{\delta}^2 m\rho^{\tau-1},
\end{align*}
where $C_1=2U_\delta^2|\mathcal{A}|L_\pi(1+\lceil \log_\rho m^{-1}\rceil+\frac{1}{1-\rho})+2U_\delta L_J+2U_\delta L_{\ast}$.
\end{lemma}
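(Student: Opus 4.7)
The plan is to control $\mathbb{E}[\Psi(O_t,\omega_t,\theta_t)]$ by comparing the current tuple $O_t=(s_t,a_t,s_{t+1})$ generated along the original Markov chain in \eqref{chain-or} against the corresponding tuple $\widetilde{O}_t$ from the auxiliary chain in \eqref{chain-au}, which starts at $s_{t-\tau+1}$ but uses the \emph{frozen} policy $\pi_{\theta_{t-\tau}}$. Specifically, I would write
\begin{align*}
\mathbb{E}[\Psi(O_t,\omega_t,\theta_t)]
 = & \; \underbrace{\mathbb{E}[\Psi(O_t,\omega_t,\theta_t)-\Psi(O_t,\omega_{t-\tau},\theta_{t-\tau})]}_{T_1} \\
   & + \underbrace{\mathbb{E}[\Psi(O_t,\omega_{t-\tau},\theta_{t-\tau})-\Psi(\widetilde{O}_t,\omega_{t-\tau},\theta_{t-\tau})]}_{T_2} \\
   & + \underbrace{\mathbb{E}[\Psi(\widetilde{O}_t,\omega_{t-\tau},\theta_{t-\tau})]}_{T_3}.
\end{align*}
The key observation for $T_3$ is that $\bar g(\omega,\theta)=\mathbb{E}_{s\sim\mu_\theta,a\sim\pi_\theta,s'\sim\mathcal P}[g(O,\omega,\theta)]$, so conditioned on $(\omega_{t-\tau},\theta_{t-\tau})$ the inner product vanishes in expectation under the stationary distribution of the frozen chain.

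For $T_1$, I will establish Lipschitz continuity of $\Psi$ in $(\omega,\theta)$ on the bounded set $\|\omega\|\le U_\omega$. In the $\omega$-direction, using $\|\omega-\omega^\ast_\theta\|\le 2U_\omega\le U_\delta$, $\|g\|,\|\bar g\|\le U_\delta$, and the elementary identity $|\langle a_1,b_1\rangle-\langle a_2,b_2\rangle|\le \|a_1-a_2\|\|b_1\|+\|a_2\|\|b_1-b_2\|$, plus the $2$-Lipschitz dependence of $g$ and $\bar g$ on $\omega$, I obtain a coefficient of the form $6U_\delta\|\omega_t-\omega_{t-\tau}\|$. In the $\theta$-direction, I use Lemma \ref{ae1} (Lipschitzness of $J$) for the $J(\theta)$ entry in $g$ and $\bar g$, and Lemma \ref{ae3} (Lipschitzness of $\omega^\ast$) to control the drift of the $\omega^\ast_\theta$ entry in $\Psi$; this contributes $2U_\delta(L_J+L_\ast)\|\theta_t-\theta_{t-\tau}\|$, which is absorbed into the $C_1$ constant. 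For $T_2$, I couple the two chains and bound the difference by $2\|\Psi\|_\infty\cdot d_{TV}(O_t,\widetilde O_t)$. Then Lemma \ref{ae5} gives $d_{TV}(O_t,\widetilde O_t)\le d_{TV}(s_t,\widetilde s_t)+\tfrac{1}{2}|\mathcal A|\mathbb{E}\|\theta_t-\theta_{t-\tau}\|$, and $d_{TV}(s_t,\widetilde s_t)$ unrolls along the auxiliary chain into a telescoping sum of $d_{TV}(\mu_{\theta_i}\otimes\pi_{\theta_i}\otimes\mathcal P,\mu_{\theta_{t-\tau}}\otimes\pi_{\theta_{t-\tau}}\otimes\mathcal P)$ which, by Lemma \ref{ae4}, is controlled by $\sum_{i=t-\tau}^{t}\|\theta_i-\theta_{t-\tau}\|$. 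Using the one-step actor bound $\|\theta_{i+1}-\theta_i\|\le G\alpha_i$, this accumulates to the $U_\delta^2|\mathcal A|L_\pi G\tau(\tau+1)\alpha_{t-\tau}$ term and contributes the $2U_\delta^2|\mathcal A|L_\pi(1+\lceil\log_\rho m^{-1}\rceil+\tfrac{1}{1-\rho})$ portion of $C_1$. Finally, for $T_3$, I apply Assumption \ref{ass2} to the frozen chain after $\tau-1$ steps to get $|T_3|\le 2U_\delta^2 m\rho^{\tau-1}$, using $\|\Psi\|_\infty\le U_\delta^2$ on the bounded iterate set.

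The main obstacle is the careful bookkeeping for $T_2$: the original and auxiliary chains coincide only through step $s_{t-\tau+1}$, so the total variation distance between $O_t$ and $\widetilde O_t$ must be propagated forward through $\tau-1$ steps, each of which introduces an $|\mathcal A|L_\pi\|\theta_i-\theta_{t-\tau}\|$ mismatch. Unrolling via Lemma \ref{ae5} and then Lemma \ref{ae4}, together with the $G\alpha_{t-\tau}$ bound on single-step actor drift, yields the quadratic-in-$\tau$ factor $\tau(\tau+1)$ that appears in the statement. Once $T_1,T_2,T_3$ are bounded, the three pieces combine directly into the stated inequality with constant $C_1=2U_\delta^2|\mathcal A|L_\pi(1+\lceil\log_\rho m^{-1}\rceil+\tfrac{1}{1-\rho})+2U_\delta L_J+2U_\delta L_\ast$.
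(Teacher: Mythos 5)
Your overall architecture is the same as the paper's: telescope $\mathbb{E}[\Psi(O_t,\omega_t,\theta_t)]$ through $(\omega_{t-\tau},\theta_{t-\tau})$ and the auxiliary frozen-policy chain, bound the parameter-drift terms by Lipschitz continuity of $\Psi$ in $\theta$ and $\omega$, bound the chain-swap term by the total variation distance propagated via Lemma \ref{ae5}, and bound the last term by geometric mixing (Assumption \ref{ass2}) after noting that $\Psi$ has zero mean under the stationary distribution of the frozen chain. The $6U_\delta$ coefficient, the $\tau(\tau+1)$ factor, and the $2U_\delta^2 m\rho^{\tau-1}$ term are all obtained exactly as in the paper.

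There is, however, one genuine accounting gap in your treatment of the $\theta$-direction of $T_1$. The function $\bar g(\omega,\theta)=\mathbb{E}_{(s,a,s')\sim(\mu_\theta,\pi_\theta,\mathcal P)}[g(O,\omega,\theta)]$ depends on $\theta$ in \emph{two} ways: through the integrand (the $J(\theta)$ entry, which you handle with Lemma \ref{ae1}) and through the sampling distribution $\mu_\theta\otimes\pi_\theta\otimes\mathcal P$ itself. You only account for the former, obtaining $2U_\delta(L_J+L_\ast)\Vert\theta_t-\theta_{t-\tau}\Vert$, and you then claim the remaining $2U_\delta^2|\mathcal A|L_\pi(1+\lceil\log_\rho m^{-1}\rceil+\frac{1}{1-\rho})$ portion of $C_1$ arises from the chain-coupling term $T_2$. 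It cannot: in $T_2$ both copies of $\Psi$ are evaluated at the \emph{same} frozen parameter $\theta_{t-\tau}$, so the $\bar g(\omega_{t-\tau},\theta_{t-\tau})$ contributions cancel and $T_2$ yields only the $U_\delta^2|\mathcal A|L_\pi G\tau(\tau+1)\alpha_{t-\tau}$ term via Lemma \ref{ae5} — Lemma \ref{ae4} plays no role there, and no factor $(1+\lceil\log_\rho m^{-1}\rceil+\frac{1}{1-\rho})$ appears. The missing piece belongs in $T_1$: you need
\begin{align*}
\Vert\bar g(\omega,\theta_1)-\bar g(\omega,\theta_2)\Vert\leq 2U_\delta\, d_{TV}(\mu_{\theta_1}\otimes\pi_{\theta_1}\otimes\mathcal P,\;\mu_{\theta_2}\otimes\pi_{\theta_2}\otimes\mathcal P)\leq 2U_\delta|\mathcal A|L_\pi\bigl(1+\lceil\log_\rho m^{-1}\rceil+\tfrac{1}{1-\rho}\bigr)\Vert\theta_1-\theta_2\Vert,
\end{align*}
which, paired with the bound $\Vert\omega-\omega^\ast_\theta\Vert\leq U_\delta$, supplies exactly the $2U_\delta^2|\mathcal A|L_\pi(1+\lceil\log_\rho m^{-1}\rceil+\frac{1}{1-\rho})$ term of $C_1$. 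With that correction inserted into $T_1$, your proof closes and matches the paper's.
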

\begin{lemma}\label{new_added}
For any $t\ge\tau>0$, we have
\begin{align*}
    \mathbb{E}[\Xi(O_t,\bm\omega_t,\bm\theta_t)]
    \leq&\  C_2\|\bm\theta_t-\bm\theta_{t-\tau}\|+2U_\delta BL_\ast \Vert \bm\omega_t-\bm\omega_{t-\tau}\Vert\\
    &\ +2U_\delta^2 B|\mathcal{A}|L_\ast L_\pi G\tau(\tau+1)\alpha_{t-\tau}+4U^2_\delta B L_\ast m\rho^{\tau-1}.
\end{align*}
where $C_2:=6BU_\delta^2|\mathcal{A}|L_\ast L_\pi(1+\lceil \log_{\rho}m^{-1}\rceil+\frac{1}{1-\rho})+3U_\delta^2 L_lL_\ast+8U_\delta BL_{\ast}^2+2U_\delta^2BL_s$.
\end{lemma}
\begin{theorem}\label{theorem2}
Choose $ \alpha_t=\frac{c}{\sqrt{T}}, \beta_t=\gamma_t=\frac{1}{\sqrt{T}}$, we have
\begin{equation}\label{theo-step2}
    \begin{aligned}
   Z_T\leq &\ \mathcal{O}(\frac{\log^2 T}{\sqrt{T}})+\mathcal{O}(\epsilon_{\rm app})+\frac{2}{\lambda}\sqrt{Y_TZ_T}+\frac{2cBL_\ast}{\lambda}\sqrt{Z_T(2Y_T+8Z_T)}+\frac{2cL_\ast}{\lambda}\sqrt{Z_TG_T}.
\end{aligned}
\end{equation}
\end{theorem}
\begin{proof}
From the update rule of critic in Line 8 of Algorithm \ref{alg1}, we have
\begin{align*}
    \Vert \bm\omega_{t+1}-\bm\omega^\ast_{t+1}\Vert=&\ \Vert \Pi_{U_{\bm\omega}}(\bm\omega_t+\beta_t\delta_t\bm\phi(s_t))-\bm\omega^\ast_{t+1}\Vert\\
    =&\ \Vert \Pi_{U_{\bm\omega}}(\bm\omega_t+\beta_t\delta_t\bm\phi(s_t))-\Pi_{U_{\bm\omega}}(\bm\omega^\ast_{t+1})\Vert\\
    \leq &\  \Vert\bm\omega_t+\beta_t\delta_t\bm\phi(s_t)-\bm\omega^\ast_{t+1}\Vert\\
    =&\  \Vert\bm\omega_t+\beta_t(g(O_t,\bm\omega_t,\bm\theta_t)+\Delta g(O_t,\eta_t,\bm\theta_t))-\bm\omega^\ast_{t+1}\Vert\\
    =&\  \Vert\bm\omega_t-\bm\omega_t^\ast+\beta_t(g(O_t,\bm\omega_t,\bm\theta_t)+\Delta g(O_t,\eta_t,\bm\theta_t))+\bm\omega_t^\ast-\bm\omega^\ast_{t+1}\Vert.
\end{align*}
Therefore, we have
\begin{align*}
    \Vert \bm z_{t+1}\Vert^2=&\ \Vert \bm z_t+\beta_t(g(O_t,\bm\omega_t,\bm\theta_t)+\Delta g(O_t,\eta_t,\bm\theta_t))+\bm\omega_t^\ast-\bm\omega^\ast_{t+1}\Vert^2\\
    =&\ \Vert \bm z_t\Vert^2+2\beta_t\langle \bm z_t,g(O_t,\bm\omega_t,\bm\theta_t)\rangle+2\beta_t\langle \bm z_t,\Delta g(O_t,\eta_t,\bm\theta_t)\rangle \\
    &\ +2\langle \bm z_t, \bm\omega^\ast_t-\bm\omega^\ast_{t+1}\rangle+\Vert \beta_t(g(O_t,\bm\omega_t,\bm\theta_t)+\Delta g(O_t,\eta_t,\bm\theta_t))+\bm\omega_t^\ast-\bm\omega^\ast_{t+1}\Vert^2\\
    =&\ \Vert \bm z_t\Vert^2+2\beta_t\langle \bm z_t,\bar{g}(\bm\omega_t,\bm\theta_t)\rangle+2\beta_t\Psi (O_t,\bm\omega_t,\bm\theta_t)+2\beta_t\langle \bm z_t,\Delta g(O_t,\eta_t,\bm\theta_t)\rangle \\
    &\ +2\langle \bm z_t, \bm\omega^\ast_t-\bm\omega^\ast_{t+1}\rangle+\Vert \beta_t(g(O_t,\bm\omega_t,\bm\theta_t)+\Delta g(O_t,\eta_t,\bm\theta_t))+\bm\omega_t^\ast-\bm\omega^\ast_{t+1}\Vert^2\\
    \leq &\  \Vert \bm z_t\Vert^2+2\beta_t\langle \bm z_t,\bar{g}(\bm\omega_t,\bm\theta_t)\rangle+2\beta_t\Psi (O_t,\bm\omega_t,\bm\theta_t)+2\beta_t\langle \bm z_t,\Delta g(O_t,\eta_t,\bm\theta_t)\rangle \\
    &\ +2\langle \bm z_t, \bm\omega^\ast_t-\bm\omega^\ast_{t+1}\rangle+2U^2_{\delta}\beta_t^2+2\Vert\bm\omega_t^\ast-\bm\omega^\ast_{t+1}\Vert^2.
\end{align*}
Note that we have
\begin{align*}
    \langle \bm z_t,\bar{g}(\bm\omega_t,\bm\theta_t)\rangle=&\ \langle \bm z_t,\bar{g}(\bm\omega_t,\bm\theta_t)-\bar{g}(\bm\omega_t^\ast,\bm\theta_t)\rangle\\
    =&\ \langle \bm z_t,\mathbb{E}[(\bm\phi(s')-\bm\phi(s))^\top (\bm\omega_t-\bm\omega_t^\ast)\bm\phi(s)]\rangle\\
    =&\ \bm z_t^\top \mathbb{E}[\bm\phi(s)(\bm\phi(s')-\bm\phi(s))^\top]\bm z_t\\
    =&\ \bm z_t^\top A\bm z_t\\
    \leq&\  -\lambda \Vert \bm z_t\Vert^2,
\end{align*}
where the first equality is due to the fact that $\bar{g}(\bm\omega^\ast_t,\bm\theta_t)=0$ and the last inequality follows from Assumption \ref{ass1}.

Taking expectation up to $s_{t+1}$, we have
\begin{align}\label{critic_new_added}
    \mathbb{E}\Vert \bm z_{t+1}\Vert^2\leq &\  \mathbb{E}\Vert \bm z_t\Vert^2+2\beta_t\mathbb{E}\langle \bm z_t,\bar{g}(\bm\omega_t,\bm\theta_t)\rangle+2\beta_t\mathbb{E}\Psi (O_t,\bm\omega_t,\bm\theta_t)+2\beta_t\mathbb{E}\langle \bm z_t,\Delta g(O_t,\eta_t,\bm\theta_t)\rangle \nonumber\\
    &\ +2\mathbb{E}\langle \bm z_t, \bm\omega^\ast_t-\bm\omega^\ast_{t+1}\rangle+2U^2_{\delta}\beta_t^2+2\mathbb{E}\Vert\bm\omega_t^\ast-\bm\omega^\ast_{t+1}\Vert^2\nonumber\\
    \leq&\  (1-2\lambda \beta_t)\mathbb{E}\Vert \bm z_t\Vert^2+2\beta_t\mathbb{E}\Psi (O_t,\bm\omega_t,\bm\theta_t)+2\beta_t\mathbb{E}\langle \bm z_t,\Delta g(O_t,\eta_t,\bm\theta_t)\rangle \nonumber\\
    &\ +2\mathbb{E}\langle \bm z_t, \bm\omega^\ast_t-\bm\omega^\ast_{t+1}\rangle+2U^2_{\delta}\beta_t^2+2\mathbb{E}\Vert\bm\omega_t^\ast-\bm\omega^\ast_{t+1}\Vert^2\nonumber\\
    \leq & \ (1-2\lambda \beta_t)\mathbb{E}\Vert \bm z_t\Vert^2+2\beta_t\mathbb{E}\Psi (O_t,\bm\omega_t,\bm\theta_t)+2\beta_t\mathbb{E}\langle \bm z_t,\Delta g(O_t,\eta_t,\bm\theta_t)\rangle \nonumber\\
    &\ +2\mathbb{E}\langle \bm z_t, \bm\omega^\ast_t-\bm\omega^\ast_{t+1}+(\nabla \bm\omega^\ast_t)^\top (\bm\theta_{t+1}-\bm\theta_t)\rangle\nonumber\\
    &\ +2\mathbb{E}\langle \bm z_t,(\nabla \bm\omega^\ast_t)^\top (\bm\theta_{t}-\bm\theta_{t+1})\rangle+2U^2_{\delta}\beta_t^2+2\mathbb{E}\Vert\bm\omega_t^\ast-\bm\omega^\ast_{t+1}\Vert^2\nonumber\\
    \overset{(1)}{\leq} &\ (1-2\lambda \beta_t)\mathbb{E}\Vert \bm z_t\Vert^2+2\beta_t\mathbb{E}\Psi (O_t,\bm\omega_t,\bm\theta_t)+2\beta_t\mathbb{E}\Vert \bm z_t\Vert |y_t|\nonumber+L_s \mathbb{E}\Vert \bm z_t\Vert\Vert \bm\theta_{t+1}-\bm\theta_t\Vert^2\\
    &\ +2\alpha_t\mathbb{E}\langle \bm z_t,-(\nabla \bm\omega^\ast_t)^\top \delta_t\nabla \log \pi_{\bm\theta_t}(a_t|s_t)\rangle+2U_\delta^2\beta_t^2+2L_{\ast}^2\mathbb{E}\Vert \bm\theta_t-\bm\theta_{t+1}\Vert^2\nonumber\\
    \leq &\ (1-2\lambda \beta_t)\mathbb{E}\Vert \bm z_t\Vert^2+2\beta_t\mathbb{E}\Psi (O_t,\bm\omega_t,\bm\theta_t)+2\beta_t\sqrt{\mathbb{E}y_t^2}\sqrt{\mathbb{E}\Vert \bm z_t\Vert^2}\nonumber\\
    &\ +\frac{L_s}{2} \mathbb{E}\Vert \bm z_t\Vert^2\|\bm\theta_{t+1}-\bm\theta_t\|^2+\frac{L_s}{2}\mathbb{E}\|\bm\theta_{t+1}-\bm\theta_t\|^2+2U_\delta^2\beta_t^2+2L_{\ast}^2G^2\alpha_t^2 \nonumber\\
    &\ +2\alpha_t\mathbb{E}\langle \bm z_t,-(\nabla \bm\omega^\ast_t)^\top \delta_t\nabla \log \pi_{\bm\theta_t}(a_t|s_t)\rangle\nonumber\\
    \leq &\ (1-2\lambda \beta_t)\mathbb{E}\Vert \bm z_t\Vert^2+2\beta_t\mathbb{E}\Psi (O_t,\bm\omega_t,\bm\theta_t)+2\beta_t\sqrt{\mathbb{E}y_t^2}\sqrt{\mathbb{E}\Vert \bm z_t\Vert^2}\nonumber+\frac{L_sG^2}{2}\alpha_t^2\mathbb{E}\Vert \bm z_t\Vert^2\\
    &\ +2U_\delta^2\beta_t^2+(2L_{\ast}^2+\frac{L_s}{2})G^2\alpha_t^2+2\alpha_t\mathbb{E}\langle \bm z_t,-(\nabla \bm\omega^\ast_t)^\top \delta_t\nabla \log \pi_{\bm\theta_t}(a_t|s_t)\rangle\nonumber\\
    \overset{(2)}{\leq} &\ (1-\lambda \beta_t)\mathbb{E}\Vert \bm z_t\Vert^2+2\beta_t\mathbb{E}\Psi (O_t,\bm\omega_t,\bm\theta_t)+2\beta_t\sqrt{\mathbb{E}y_t^2}\sqrt{\mathbb{E}\Vert \bm z_t\Vert^2}\nonumber\\
    &\ +2U_\delta^2\beta_t^2+(2L_{\ast}^2+\frac{L_s}{2})G^2\alpha_t^2+2\alpha_t\mathbb{E}\langle \bm z_t,-(\nabla \bm\omega^\ast_t)^\top \delta_t\nabla \log \pi_{\bm\theta_t}(a_t|s_t)\rangle
\end{align}
where (1) follows from the $L_s$-smoothness of $\bm\omega^\ast$ in Lemma \ref{ae_add}; (2) uses $\frac{L_sG^2}{2}\alpha_t^2\leq \lambda \beta_t$ for large $T$.

For term $\mathbb{E}\langle \bm z_t,-(\nabla \bm\omega^\ast_t)^\top \delta_t \nabla\log\pi_{\bm\theta_t}(a_t|s_t)\rangle$, we have
\begin{align}\label{eq_mov}
    &\mathbb{E}\langle \bm z_t,-(\nabla \bm\omega^\ast_t)^\top \delta_t \nabla\log\pi_{\bm\theta_t}(a_t|s_t)\rangle\nonumber\\
    &=\mathbb{E}\langle \bm z_t,(\nabla \bm\omega^\ast_t)^\top (-\Delta h(O_t,\eta_t,\bm\omega_t,\bm\theta_t)-h(O_t,\bm\theta_t))\rangle\nonumber\\
    &=-\mathbb{E}\langle \bm z_t,(\nabla \bm\omega^\ast_t)^\top \Delta h(O_t,\eta_t,\bm\omega_t,\bm\theta_t)\rangle\nonumber\\
    &\quad \ +\mathbb{E}\langle \bm z_t,(\nabla \bm\omega^\ast_t)^\top (\mathbb{E}_{O'_t}[h(O'_t,\bm\theta_t)]-h(O_t,\bm\theta_t)-\mathbb{E}_{O'_t}[h(O'_t,\bm\theta_t)])\rangle\nonumber\\
    &=\mathbb{E}[\Xi(O_t,\bm\omega_t,\bm\theta_t)]-\mathbb{E}\langle \bm z_t,(\nabla \bm\omega^\ast_t)^\top \Delta h(O_t,\eta_t,\bm\omega_t,\bm\theta_t)\rangle-\mathbb{E}\langle \bm z_t,(\nabla \bm\omega^\ast_t)^\top \mathbb{E}_{O'_t}[h(O'_t,\bm\theta_t)]\rangle
\end{align}
Note that from Cauchy-Schwartz inequality and $L_\ast$ is the Lipschitz constant of $\bm\omega^\ast$ in Lemma \ref{ae3}, we have
\begin{align}\label{eq_bl}
    -\mathbb{E}\langle \bm z_t,(\nabla \bm\omega_t^\ast)^\top\Delta h(O_t,\eta_t,\bm\omega_t,\bm\theta_t)\rangle\leq BL_\ast\sqrt{\mathbb{E}\Vert \bm z_t\Vert^2}\sqrt{2\mathbb{E}y_t^2+8\mathbb{E}\Vert \bm z_t\Vert^2}.
\end{align}
From the fact that
\begin{align*}
    \mathbb{E}_{O'_t}[h(O'_t,\bm\theta_t)-\Delta h'(O'_t,\bm\theta_t)]&=\mathbb{E}_{O'_t}[(r(s_t,a_t)-J(\bm\theta_t)+V_{\bm\theta_t}(s'_t)-V_{\bm\theta_t}(s_t))\nabla\log\pi_{\bm\theta_t}(a|s)]\\&=\nabla J(\bm\theta_t),
\end{align*}
we obtain
\begin{align*}
    \mathbb{E}\langle \bm z_t,(\nabla \bm\omega^\ast_t)^\top \mathbb{E}_{O'_t}[h(O'_t,\bm\theta_t)]\rangle=\mathbb{E}\langle \bm z_t,(\nabla \bm\omega^\ast_t)^\top \nabla J(\theta_t)\rangle +\mathbb{E}\langle \bm z_t,(\nabla \bm\omega^\ast_t)^\top \mathbb{E}_{O'_t}[\Delta h'(O'_t,\bm\theta_t)]\rangle.
\end{align*}
It follows that
\begin{align*}
    -\mathbb{E}\langle \bm z_t,(\nabla \bm\omega^\ast_t)^\top \nabla J(\theta_t)\rangle\leq L_\ast \sqrt{\mathbb{E}\|\bm z_t\|^2}\sqrt{\mathbb{E}\|\nabla J(\theta_t)\|^2}.
\end{align*}
Furthermore, it holds that
\begin{align*}
    \mathbb{E}_{O'}\Vert \Delta h'(O,\bm\theta)\Vert^2=&\ \mathbb{E}_{O'}\Vert ((\bm\phi(s')^\top\bm\omega^\ast-V_{\bm\theta}(s'))-(\bm\phi(s)^\top \bm\omega^\ast-V_{\bm\theta}(s)))\nabla \log \pi_{\bm\theta}(a|s)\Vert^2\\
    \leq &\  \mathbb{E}_{O'}[B^2((\bm\phi(s')^\top\bm\omega^\ast-V_{\bm\theta}(s'))-(\bm\phi(s)^\top \bm\omega^\ast-V_{\bm\theta}(s)))^2]\\
    \leq &\  \mathbb{E}_{O'}[2B^2(\bm\phi(s')^\top\bm\omega^\ast-V_{\bm\theta}(s'))^2+(\bm\phi(s)^\top \bm\omega^\ast-V_{\bm\theta}(s))^2]\\
    =&\ 4B^2\mathbb{E}_{O'}[(\bm\phi(s)^\top \bm\omega^\ast-V_{\bm\theta}(s))^2]\\
    =&\  4B^2\epsilon^2_{\rm app}.
\end{align*}
Therefore, we have
\begin{align}\label{eq_app}
    -\langle \bm z_t,(\nabla \bm\omega_t^\ast)^\top\mathbb{E}_{O'_t}[ h(O'_t,\bm\theta_t)]\rangle\leq&\  U_\delta L_\ast\sqrt{\Vert \mathbb{E}_{O'}[\Delta h'(O_t,\bm\theta_t)]\Vert^2}+L_\ast \sqrt{\mathbb{E}\|\bm z_t\|^2}\sqrt{\mathbb{E}\|\nabla J(\theta_t)\|^2}\nonumber\\
    \leq &\ U_\delta L_\ast\sqrt{\mathbb{E}_{O'}\Vert \Delta h'(O_t,\bm\theta_t)\Vert^2}+L_\ast \sqrt{\mathbb{E}\|\bm z_t\|^2}\sqrt{\mathbb{E}\|\nabla J(\theta_t)\|^2}\nonumber\\
    \leq &\ 2BU_\delta L_\ast\epsilon_{{\rm app}}+L_\ast \sqrt{\mathbb{E}\|\bm z_t\|^2}\sqrt{\mathbb{E}\|\nabla J(\theta_t)\|^2}.
\end{align}
Substituting \eqref{eq_bl} and \eqref{eq_app} into \eqref{eq_mov} yields
\begin{equation}\label{eq_final}
    \begin{aligned}
    \mathbb{E}\langle \bm z_t,-(\nabla \bm\omega^\ast_t)^\top \delta_t \nabla\log\pi_{\bm\theta_t}(a_t|s_t)\rangle\leq &\ \mathbb{E}\Xi(O_t,\bm\omega_t,\bm\theta_t)+2BU_\delta L_\ast \epsilon_{\rm app}\\
    &\ +BL_\ast\sqrt{\mathbb{E}\Vert \bm z_t\Vert^2}\sqrt{2\mathbb{E}y_t^2+8\mathbb{E}\Vert \bm z_t\Vert^2}\\
    &\ +L_\ast \sqrt{\mathbb{E}\|\bm z_t\|^2}\sqrt{\mathbb{E}\|\nabla J(\theta_t)\|^2}.
\end{aligned}
\end{equation}
Plugging \eqref{eq_final} into \eqref{critic_new_added}, we have
\begin{equation}\label{critic}
    \begin{aligned}
    \mathbb{E}\|\bm z_{t+1}\|^2 \leq &\ (1-\lambda \beta_t)\mathbb{E}\Vert \bm z_t\Vert^2+2\beta_t\mathbb{E}\Psi (O_t,\bm\omega_t,\bm\theta_t)+2\alpha_t\mathbb{E}\Xi(O_t,\bm\omega_t,\bm\theta_t)\\
    &\ + 2\beta_t\sqrt{\mathbb{E}y_t^2}\sqrt{\mathbb{E}\Vert \bm z_t\Vert^2}+2BL_\ast\alpha_t\sqrt{\mathbb{E}\Vert \bm z_t\Vert^2}\sqrt{2\mathbb{E}y_t^2+8\mathbb{E}\Vert \bm z_t\Vert^2}\\
    &\ +2U_\delta^2\beta_t^2+(2L_{\ast}^2+\frac{L_s}{2})G^2\alpha_t^2+4BU_\delta \alpha_t\epsilon_{\rm app}+2\alpha_t L_\ast \sqrt{\mathbb{E}\|\bm z_t\|^2}\sqrt{\mathbb{E}\|\nabla J(\theta_t)\|^2}.
\end{aligned}
\end{equation}
Rearranging and summing from $\tau_T$ to $T-1$ gives
\begin{align*}
    \lambda \sum\limits_{\tau_T}^{T-1}\mathbb{E}\Vert \bm z_t\Vert^2\leq &\  \underbrace{\sum\limits_{t=\tau_T}^{T-1}\frac{1}{\beta_t}(\mathbb{E}\Vert \bm z_t\Vert^2-\mathbb{E}\Vert \bm z_{t+1}\Vert^2)}_{I_1}+\underbrace{2\sum\limits_{t=\tau_T}^{T-1}\mathbb{E}\Psi (O_t,\bm\omega_t,\bm\theta_t)}_{I_2}+\underbrace{2c\sum\limits_{t=\tau_T}^{T-1}\mathbb{E}\Xi(O_t,\bm\omega_t,\bm\theta_t)}_{I_3}\\
    &\ +\underbrace{2\sum\limits_{t=\tau_T}^{T-1}\sqrt{\mathbb{E}y_t^2}\sqrt{\mathbb{E}\Vert \bm z_t\Vert^2}}_{I_4}+\underbrace{2cBL_\ast\sum\limits_{t=\tau_T}^{T-1}\sqrt{\mathbb{E}\Vert \bm z_t\Vert^2}\sqrt{2\mathbb{E}y_t^2+8\mathbb{E}\Vert \bm z_t\Vert^2}}_{I_5}\\
    &\ +\underbrace{2cL_\ast \sum\limits_{t=\tau_T}^{T-1}\sqrt{\mathbb{E}\|\bm z_t\|^2}\sqrt{\mathbb{E}\|\nabla J(\theta_t)\|^2}}_{I_6}\\
    &\ +\sum\limits_{t=\tau_T}^{T-1}(2U_\delta^2\beta_t+c(2L_{\ast}^2+\frac{L_s}{2})G^2\alpha_t+4cBU_\delta \epsilon_{\rm app}).
\end{align*}
In the sequel, we will tackle $I_1,I_2,I_3,I_4,I_5,I_6$ respectively.

For term $I_1$, from Abel summation by parts, we have
\begin{align*}
    I_1=&\ \sum\limits_{t=\tau_T}^{T-1}\frac{1}{\beta_t}(\mathbb{E}\Vert \bm z_t\Vert^2-\mathbb{E}\Vert \bm z_{t+1}\Vert^2)\\
    =&\ \sum\limits_{t=\tau_T+1}^{T-1}(\frac{1}{\beta_t}-\frac{1}{\beta_{t-1}})\mathbb{E}\Vert \bm z_t\Vert^2+\frac{1}{\beta_{\tau_T}}\mathbb{E}\Vert \bm z_{\tau_T}\Vert^2-\frac{1}{\beta_{T-1}}\mathbb{E}\Vert \bm z_{T}\Vert^2\\
    \leq &\  U_{\delta}^2(\sum\limits_{t=\tau_T+1}^{T-1}(\frac{1}{\beta_t}-\frac{1}{\beta_{t-1}})+\frac{1}{\beta_{\tau_T}})\\
    =&\ U_{\delta}^2\sqrt{T},
\end{align*}
where the inequality is due to $\mathbb{E}\Vert \bm z_t\Vert^2\leq U_\delta^2$ and discard the last term.

For term $I_2$, from Lemma \ref{e4}, choose $\tau=\tau_T$, we have
\begin{align*}
    \mathbb{E}\Psi (O_t,\bm\omega_t,\bm\theta_t)
    \leq &\ C_1\Vert \bm\theta_{t}-\bm\theta_{t-\tau_T}\Vert+U_\delta^2|\mathcal{A}|L_\pi G\tau_T(\tau_T+1) \alpha_{t-\tau_T}\\
    &\ +2U_{\delta}^2 m\rho^{\tau_T-1}+6U_\delta\Vert \bm\omega_t-\bm\omega_{t-\tau_T}\Vert\\
    \leq &\  C_1\sum\limits_{k=t-\tau_T}^{t-1}G\alpha_k+U_\delta^2|\mathcal{A}|L_\pi G\tau_T(\tau_T+1) \alpha_{t-\tau_T}+\frac{2U_\delta^2}{\sqrt{T}}+6U_\delta \sum\limits_{k=t-\tau_T}^{t-1}U_\delta \beta_k\\
    \leq & \ (C_1G\tau_T+U_\delta^2|\mathcal{A}|L_\pi G\tau_T(\tau_T+1)) \alpha_{t-\tau_T}+\frac{2U_\delta^2}{\sqrt{T}}+6U_\delta^2\tau_T\beta_{t-\tau_T}.
\end{align*}
Then we get
\begin{align*}
    I_2=&\ 2\sum\limits_{T=\tau_T}^{T-1}\mathbb{E}\Psi(O_t,\bm\omega_t,\bm\theta_t)
    \leq 2\sum\limits_{T=\tau_T}^{T-1}((C_1G\tau_T+U_\delta^2|\mathcal{A}|L_\pi G\tau_T(\tau_T+1)) \alpha_{t}+\frac{2U_\delta^2}{\sqrt{T}}+6U_\delta^2\tau_T\beta_{t}).
\end{align*}
For term $I_3$, from Lemma \ref{new_added}, choose $\tau=\tau_T$, we have
\begin{align*}
    \mathbb{E}[\Xi(O_t,\bm\omega_t,\bm\theta_t)]
    \leq &\ C_2\|\bm\theta_t-\bm\theta_{t-\tau}\|+2U_\delta BL_\ast \Vert \bm\omega_t-\bm\omega_{t-\tau}\Vert\\
    &\ +2U_\delta^2 B|\mathcal{A}|L_\ast L_\pi G\tau(\tau+1)\alpha_{t-\tau}+4U^2_\delta BL_\ast m\rho^{\tau-1}\\
    \leq &\ C_2\sum\limits_{k=t-\tau_T}^{t-1}G\alpha_k+2U_\delta BL_\ast\sum\limits_{k=t-\tau_T}^{t-1}U_\delta \beta_k\\
    &\ +2U_\delta^2 B|\mathcal{A}|L_\ast L_\pi G\tau(\tau+1)\alpha_{t-\tau}+4U^2_\delta BL_\ast m\rho^{\tau_T-1}\\
    \leq &\ (C_2G\tau_T+2U_\delta^2B|\mathcal{A}|L_\ast L_\pi G\tau_T(\tau_T+1))\alpha_t+2U_\delta^2 BL_\ast\tau_T\beta_t+\frac{4U_\delta^2BL_\ast}{\sqrt{T}}.
\end{align*}
Therefore, we have
\begin{align*}
    I_3=&\ 2c\sum\limits_{t=\tau_T}^{T-1}\mathbb{E}\Xi(O_t,\bm\omega_t,\bm\theta_t)\\
    \leq &\ 2c\sum\limits_{t=\tau_T}^{T-1}((C_2G\tau_T+2U_\delta^2B|\mathcal{A}|L_\ast L_\pi G\tau_T(\tau_T+1))\alpha_t+2U_\delta^2 BL_\ast\tau_T\beta_t+\frac{4U_\delta^2BL_\ast}{\sqrt{T}}).
\end{align*}
For term $I_4$, $I_5$, and $I_6$, from Cauchy-Schwartz inequality, we have
\begin{align*}
    I_4\leq &\ 2(\sum\limits_{t=\tau_T}^{T-1}\mathbb{E}y_t^2)^\frac{1}{2}(\sum\limits_{t=\tau_T}^{T-1}\mathbb{E}\Vert \bm z_t\Vert^2)^\frac{1}{2},\\
    I_5\leq &\ 2cBL_\ast (\sum\limits_{t=\tau_T}^{T-1}\mathbb{E}\|\bm z_t\|^2)^\frac{1}{2}(2\sum\limits_{t=\tau_T}^{T-1}\mathbb{E}y_t^2+8\sum\limits_{t=\tau_T}^{T-1}\mathbb{E}\|\bm z_t\|^2)^\frac{1}{2},\\
    I_6 \leq &\ 2cL_\ast(\sum\limits_{t=\tau_T}^{T-1}\mathbb{E}\|\bm z_t\|^2)^\frac{1}{2}(\sum\limits_{t=\tau_T}^{T-1}\mathbb{E}\|\nabla J(\bm\theta_t)\|)^\frac{1}{2}.
\end{align*}
% For term $I_4$, we have
% \begin{align*}
%     I_4=&\ 2L_{\ast} Gc\sum\limits_{t=\tau_T}^{T-1}\sqrt{\mathbb{E}\Vert z_t\Vert^2}\\
%     \leq &\ 2L_{\ast} Gc(\sum\limits_{t=\tau_T}^{T-1}1)^\frac{1}{2}(\sum\limits_{t=\tau_T}^{T-1}\mathbb{E}\Vert z_t\Vert^2)^\frac{1}{2}\\
%     =&\ 2L_{\ast} Gc\sqrt{T-\tau_T}(\sum\limits_{t=\tau_T}^{T-1}\mathbb{E}\Vert z_t\Vert^2)^\frac{1}{2}.
% \end{align*}
% For term $I_5$, we have
% \begin{align*}
%     I_5=&\ \sum\limits_{t=\tau_T}^{T-1}(2U_\delta^2\beta_t+2L_{\ast}^2G^2c\alpha_t)\\
%     \leq &\ (4U_\delta^2+4L_{\ast}^2G^2c^2)\sqrt{T-\tau_T}.
% \end{align*}
Overall, we get
\begin{align*}
   \lambda \sum\limits_{t=\tau_T}^{T-1}\mathbb{E}\Vert \bm z_t\Vert^2\leq&\  
   2(\sum\limits_{t=\tau_T}^{T-1}\mathbb{E}y_t^2)^\frac{1}{2}(\sum\limits_{t=\tau_T}^{T-1}\mathbb{E}\Vert \bm z_t\Vert^2)^\frac{1}{2}\\
   &\ +2cBL_\ast (\sum\limits_{t=\tau_t}^{T-1}\mathbb{E}\|\bm z_t\|^2)^\frac{1}{2}(2\sum\limits_{t=\tau_T}^{T-1}\mathbb{E}y_t^2+8\sum\limits_{t=\tau_T}^{T-1}\mathbb{E}\|\bm z_t\|^2)^\frac{1}{2}\\
   &\ +2cL_\ast(\sum\limits_{t=\tau_T}^{T-1}\mathbb{E}\|\bm z_t\|^2)^\frac{1}{2}(\sum\limits_{t=\tau_T}^{T-1}\mathbb{E}\|\nabla J(\bm\theta_t)\|)^\frac{1}{2}\\
   &\ +U_\delta^2\sqrt{T}+2\sum\limits_{T=\tau_T}^{T-1}((C_1G\tau_T+U_\delta^2|\mathcal{A}|L_\pi G\tau_T(\tau_T+1)) \alpha_{t}+\frac{2U_\delta^2}{\sqrt{T}}+6U_\delta^2\tau_T\beta_{t})\\
   &\ +2c\sum\limits_{t=\tau_T}^{T-1}((C_2G\tau_T+2U_\delta^2B|\mathcal{A}|L_\ast L_\pi G\tau_T(\tau_T+1))\alpha_t+2U_\delta^2 BL_\ast \tau_T\beta_t+\frac{4U_\delta^2BL_\ast}{\sqrt{T}})\\
   &\ +\sum\limits_{t=\tau_T}^{T-1}(2U_\delta^2\beta_t+c(2L_{\ast}^2+\frac{L_s}{2})G^2\alpha_t+4cBU_\delta \epsilon_{\rm app}).
\end{align*}
Therefore, we have
\begin{align*}
    Z_T\overset{(1)}{\leq} &\ \frac{2}{\lambda}(\frac{1}{T-\tau_T}\sum\limits_{t=\tau_T}^{T-1}\mathbb{E}y_t^2)^\frac{1}{2}(\frac{1}{T-\tau_T}\sum\limits_{t=\tau_T}^{T-1}\mathbb{E}\Vert \bm z_t\Vert^2)^\frac{1}{2}\\
    &\ +\frac{2cBL_\ast}{\lambda} (\frac{1}{T-\tau_T}\sum\limits_{t=\tau_t}^{T-1}\mathbb{E}\|\bm z_t\|^2)^\frac{1}{2}(2\frac{1}{T-\tau_T}\sum\limits_{t=\tau_T}^{T-1}\mathbb{E}y_t^2+8\frac{1}{T-\tau_T}\sum\limits_{t=\tau_T}^{T-1}\mathbb{E}\|\bm z_t\|^2)^\frac{1}{2}\\
    &\ +\frac{2cL_\ast}{\lambda}(\frac{1}{T-\tau_T}\sum\limits_{t=\tau_T}^{T-1}\mathbb{E}\|\bm z_t\|^2)^\frac{1}{2}(\frac{1}{T-\tau_T}\sum\limits_{t=\tau_T}^{T-1}\mathbb{E}\|\nabla J(\bm\theta_t)\|)^\frac{1}{2}\\
    &\ +\frac{1}{\lambda}(\frac{2U_\delta^2}{\sqrt{T}}+2(C_1G\tau_T+U_\delta^2|\mathcal{A}|L_\pi G\tau_T(\tau_T+1)) \alpha_{t}+\frac{4U_\delta^2}{\sqrt{T}}+12U_\delta^2\tau_T\beta_{t}\\
   &\ +2c(C_2G\tau_T+2U_\delta^2B|\mathcal{A}|L_\ast L_\pi G\tau_T(\tau_T+1))\alpha_t+4cU_\delta^2 BL_\ast\tau_T\beta_t+\frac{8cU_\delta^2BL_\ast}{\sqrt{T}}\\
   &\ +2U_\delta^2\beta_t+c(2L_{\ast}^2+\frac{L_s}{2})G^2\alpha_t+4cBU_\delta \epsilon_{\rm app})\\
   = &\ \mathcal{O}(\frac{\log^2 T}{\sqrt{T}})+\mathcal{O}(\epsilon_{\rm app})+\frac{2}{\lambda}(\frac{1}{T-\tau_T}\sum\limits_{t=\tau_T}^{T-1}\mathbb{E}y_t^2)^\frac{1}{2}(\frac{1}{T-\tau_T}\sum\limits_{t=\tau_T}^{T-1}\mathbb{E}\Vert \bm z_t\Vert^2)^\frac{1}{2}\\
    &\ +\frac{2cBL_\ast}{\lambda} (\frac{1}{T-\tau_T}\sum\limits_{t=\tau_t}^{T-1}\mathbb{E}\|\bm z_t\|^2)^\frac{1}{2}(2\frac{1}{T-\tau_T}\sum\limits_{t=\tau_T}^{T-1}\mathbb{E}y_t^2+8\frac{1}{T-\tau_T}\sum\limits_{t=\tau_T}^{T-1}\mathbb{E}\|\bm z_t\|^2)^\frac{1}{2}\\
    &\ +\frac{2cL_\ast}{\lambda}(\frac{1}{T-\tau_T}\sum\limits_{t=\tau_T}^{T-1}\mathbb{E}\|\bm z_t\|^2)^\frac{1}{2}(\frac{1}{T-\tau_T}\sum\limits_{t=\tau_T}^{T-1}\mathbb{E}\|\nabla J(\bm\theta_t)\|)^\frac{1}{2},
\end{align*}
where (1) follows from $\tau_T=\mathcal{O}(\log T)$ so that $T-\tau_T\ge \frac{1}{2}T$ for large $T$. Therefore, we have
\begin{align*}
    Z_T\leq &\ \mathcal{O}(\frac{\log^2 T}{\sqrt{T}})+\mathcal{O}(\epsilon_{\rm app})+\frac{2}{\lambda}\sqrt{Y_TZ_T}+\frac{2cBL_\ast}{\lambda}\sqrt{Z_T(2Y_T+8Z_T)}+\frac{2cL_\ast}{\lambda}\sqrt{Z_TG_T},
\end{align*}
which completes the proof.
\end{proof}
\subsection{Step 3: Policy gradient norm analysis}\label{app-actor}
In this subsection, we will establish an implicit upper bound for policy gradient norm.
% \begin{lemma}
% Denote $O'=(s,a,s')$ a shorthand for an independent sample from $s\sim \rho_\bm\theta,a\sim\pi_\bm\theta,s'\sim\mathbb{P}$. We have
% \begin{align*}
%     \mathbb{E}_{O'}[h(O',\bm\theta)-\Delta h'(O',\bm\theta)]=&\nabla J(\bm\theta)\\
%     \mathbb{E}_{O'}\Vert \Delta h'(O',\bm\theta)\Vert^2\leq &4B^2\epsilon_{{\rm app}}^2
% \end{align*}
% \end{lemma}
\begin{lemma}\label{e7}
For any $t\ge \tau> 0$, it holds that
\begin{align*}
    \mathbb{E}[\Theta(O_t,\bm\theta_t)]\leq D_1\tau (\tau+1)G\alpha+D_2m\rho^{\tau-1},
\end{align*}
where $D_1=\max\{2U_\delta BL_{J'}+3L_JL_h, 2U_\delta BL_J|\mathcal{A}|L_\pi\}$ and $D_2=4U_\delta BL_J$.
\end{lemma}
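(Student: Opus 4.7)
The plan is to invoke the auxiliary Markov chain technique based on the chain in \eqref{chain-au}, where the policy parameter is frozen at $\theta_{t-\tau}$ from step $t-\tau+1$ onward, producing the tuple $\widetilde{O}_t=(\widetilde{s}_t,\widetilde{a}_t,\widetilde{s}_{t+1})$. Freezing the parameter enables a clean application of the geometric mixing (Assumption \ref{ass2}) to a fixed-policy chain; the price is controlling the drift caused by the freeze, which is exactly what will produce the $\tau(\tau+1)G\alpha$ term.

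Before decomposing I would collect a few uniform bounds and Lipschitz estimates. Since $|\delta|\leq U_\delta$ and $\|\nabla\log\pi_\theta(a|s)\|\leq B$ by Assumption \ref{ass3}(a), we have $\|h(O,\theta)\|\leq U_\delta B = G$, while Lemma \ref{ae1} gives $\|\nabla J(\theta)\|\leq L_J$. I would next verify that $\theta\mapsto h(O,\theta)$ is Lipschitz with some constant $L_h$ by combining $L_J$-Lipschitzness of $J$, $L_\ast$-Lipschitzness of $\omega^\ast$ (Lemma \ref{ae3}), and $L_l$-Lipschitzness of the score (Assumption \ref{ass3}(b)); the same ingredients, together with Lemma \ref{ae4} for the drift of the sampling measure, give Lipschitzness of $\theta\mapsto\mathbb{E}_{O'}[h(O',\theta)]$. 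With these in hand, I would use the standard three-piece decomposition
\begin{align*}
\Theta(O_t,\theta_t)=&\ \bigl[\Theta(O_t,\theta_t)-\Theta(\widetilde{O}_t,\theta_t)\bigr]\\
&+\bigl[\Theta(\widetilde{O}_t,\theta_t)-\Theta(\widetilde{O}_t,\theta_{t-\tau})\bigr]\\
&+\Theta(\widetilde{O}_t,\theta_{t-\tau}).
\end{align*}

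For the first piece, using $\|h\|\leq G$ and $\|\nabla J\|\leq L_J$, I would upper bound the difference by $4 L_J G\cdot d_{TV}(\mathbb{P}(O_t\in\cdot),\mathbb{P}(\widetilde{O}_t\in\cdot))$. Invoking Lemma \ref{ae5} iteratively bounds the TV distance by $\tfrac{|\mathcal{A}|L_\pi}{2}\sum_{i=t-\tau+1}^{t}\mathbb{E}\|\theta_i-\theta_{t-\tau}\|$, and since each $\|\theta_i-\theta_{t-\tau}\|\leq (i-(t-\tau))G\alpha$ by the actor update bound, the sum equals $\tau(\tau+1)/2\cdot G\alpha$, matching the $2U_\delta B L_J|\mathcal{A}|L_\pi$ branch of $D_1$. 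For the second piece I would use that $\nabla J$ is $L_{J'}$-Lipschitz (Lemma \ref{ae2}) while $h(\widetilde{O}_t,\theta)$ and $\mathbb{E}_{O'}[h(O',\theta)]$ are $L_h$-Lipschitz, combined with $\|\theta_t-\theta_{t-\tau}\|\leq G\tau\alpha$, producing the $(U_\delta B L_{J'}+3 L_J L_h)$ branch of $D_1$.

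For the third piece I would condition on $s_{t-\tau+1}$ and $\theta_{t-\tau}$: the auxiliary chain then evolves under the fixed policy $\pi_{\theta_{t-\tau}}$, so Assumption \ref{ass2} guarantees that the distribution of $\widetilde{O}_t$ is within TV distance $m\rho^{\tau-1}$ of the stationary measure $\mu_{\theta_{t-\tau}}\otimes\pi_{\theta_{t-\tau}}\otimes\mathcal{P}$. Since $\|h\|\leq G$, the inner bracket $\mathbb{E}_{O'}[h(O',\theta_{t-\tau})]-h(\widetilde{O}_t,\theta_{t-\tau})$ has expectation bounded in norm by $2 G m\rho^{\tau-1}$, and pairing with $\|\nabla J(\theta_{t-\tau})\|\leq L_J$ absorbs into $D_2 m\rho^{\tau-1}$ with $D_2=4 U_\delta B L_J$. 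Collecting the three pieces and taking the maximum of the two coefficients of $\tau(\tau+1)G\alpha$ yields the stated bound. The main obstacle is the careful Lipschitz estimate for $\mathbb{E}_{O'}[h(O',\theta)]$ in $\theta$, since the expectation depends on $\theta$ through both the integrand (via $J(\theta)$, $\omega^\ast(\theta)$, and $\nabla\log\pi_\theta$) and the sampling measure $\mu_\theta\otimes\pi_\theta\otimes\mathcal{P}$; combining the various Lipschitz constants consistently to identify $L_h$ takes some bookkeeping but introduces no new ideas.
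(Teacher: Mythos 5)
Your overall strategy is the same as the paper's: freeze the policy at $\theta_{t-\tau}$, introduce the auxiliary chain \eqref{chain-au}, telescope $\Theta$ through intermediate terms, bound the parameter-drift pieces by Lipschitz estimates on $\nabla J$, $h(O,\cdot)$ and $\mathbb{E}_{O'}[h(O',\cdot)]$, bound the chain-swap piece by \Cref{ae5}, and bound the final piece by the mixing bound of Assumption \ref{ass2}. The constants you identify for the two branches of $D_1$ and for $D_2$ match the statement.

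There is, however, one step in your decomposition that does not go through as written: the order of your telescoping. Your first piece is $\mathbb{E}[\Theta(O_t,\theta_t)-\Theta(\widetilde{O}_t,\theta_t)]$, which you propose to bound by $4L_JG\cdot d_{TV}(\mathbb{P}(O_t\in\cdot),\mathbb{P}(\widetilde{O}_t\in\cdot))$. The inequality $|\mathbb{E}[f(X)]-\mathbb{E}[f(Y)]|\leq 2\Vert f\Vert_\infty\, d_{TV}(\mathcal{L}(X),\mathcal{L}(Y))$ requires the test function $f$ to be deterministic given the conditioning under which the total-variation bound of \Cref{ae5} is stated (namely, conditioning on $s_{t-\tau+1}$ and $\theta_{t-\tau}$). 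In your first piece the test function is $\Theta(\cdot,\theta_t)$, and $\theta_t$ is a random variable built from the very trajectory segment $(a_{t-\tau+1},\dots,s_t,a_t)$ whose law you are comparing to that of the auxiliary chain; it is not measurable with respect to the conditioning $\sigma$-algebra, so the TV argument does not apply to this piece. This is exactly why the paper orders the telescoping the other way: it first replaces $\theta_t$ by $\theta_{t-\tau}$ \emph{pathwise} via the Lipschitz estimate (its Step 1, costing $(2U_\delta BL_{J'}+3L_JL_h)\Vert\theta_t-\theta_{t-\tau}\Vert$), and only then compares $O_t$ with $\widetilde{O}_t$ for the frozen, $\mathcal{F}_{t-\tau}$-measurable parameter $\theta_{t-\tau}$ (its Step 2), before invoking mixing (its Step 3). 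Swapping your first two telescoping steps — change the parameter first on the original tuple, then swap tuples at the frozen parameter — repairs the argument and reduces it to the paper's proof; everything else in your outline, including the Lipschitz bookkeeping for $\mathbb{E}_{O'}[h(O',\theta)]$ that you flag as the main obstacle, is handled the same way in the paper.
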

\begin{theorem}
We have
\begin{equation}\label{theo-step3}
    \begin{aligned}
    G_T\leq \mathcal{O}(\frac{\log^2 T}{\sqrt{T}})+\mathcal{O}(\epsilon_{{\rm app}})+B\sqrt{G_T(2Y_T+8Z_T)}.
\end{aligned}
\end{equation}
\end{theorem}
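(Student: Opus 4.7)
The plan is to follow the outline in Step 3 of the proof sketch and instantiate the bounds explicitly. The starting point is the decomposition $\delta_t\nabla\log\pi_{\theta_t}(a_t|s_t)=h(O_t,\theta_t)+\Delta h(O_t,\eta_t,\omega_t,\theta_t)$, which follows immediately from the definitions in \eqref{notation1}, together with the identity $\mathbb{E}_{O'}[h(O',\theta)]=\nabla J(\theta)+\mathbb{E}_{O'}[\Delta h'(O',\theta)]$ that one reads off from the policy gradient theorem and the definitions of $h$ and $\Delta h'$. Taking the inner product of $\nabla J(\theta_t)$ with $\delta_t\nabla\log\pi_{\theta_t}$, inserting and subtracting $\mathbb{E}_{O'}[h(O',\theta_t)]$, and isolating $\|\nabla J(\theta_t)\|^2$ gives $\|\nabla J(\theta_t)\|^2=\langle\nabla J(\theta_t),\delta_t\nabla\log\pi_{\theta_t}\rangle-\langle\nabla J(\theta_t),\mathbb{E}_{O'}[\Delta h'(O',\theta_t)]\rangle+\Theta(O_t,\theta_t)-\langle\nabla J(\theta_t),\Delta h(O_t,\eta_t,\omega_t,\theta_t)\rangle$. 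Bounding the first inner product via the $L_{J'}$-smoothness of $J$ (Lemma \ref{ae2}, inequality \eqref{l-smooth1}) applied to $\theta_{t+1}-\theta_t=\alpha_t\delta_t\nabla\log\pi_{\theta_t}$ yields exactly \eqref{s3}.

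Next I would take expectations, sum from $t=\tau_T$ to $T-1$, and dispatch each term on the right of \eqref{s3}. The telescoping piece $\sum_t\tfrac{1}{\alpha_t}\mathbb{E}[J(\theta_{t+1})-J(\theta_t)]$ collapses to $\tfrac{1}{\alpha}\mathbb{E}[J(\theta_T)-J(\theta_{\tau_T})]\leq \tfrac{2U_r}{\alpha}=\mathcal{O}(\sqrt{T})$ because $\alpha_t=\alpha$ is constant and $|J|\leq U_r$. The variance piece is at most $\tfrac{L_{J'}}{2}\alpha G^2(T-\tau_T)=\mathcal{O}(\sqrt{T})$ using $\|\delta_t\nabla\log\pi_{\theta_t}\|\leq G$ from Assumption \ref{ass3}(a) and $|\delta_t|\leq U_\delta$. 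The Markovian noise $\sum_t\mathbb{E}[\Theta(O_t,\theta_t)]$ is controlled by Lemma \ref{e7} with $\tau=\tau_T$, giving a per-term bound of order $\log^2 T/\sqrt{T}$ and a total of order $\sqrt{T}\log^2 T$. After normalizing by $T-\tau_T\geq T/2$, these three contributions collapse into the $\mathcal{O}(\log^2 T/\sqrt{T})$ piece of the target bound.

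The remaining two error terms produce the $\mathcal{O}(\epsilon_{{\rm app}})$ and $B\sqrt{G_T(2Y_T+8Z_T)}$ contributions. For the coupling term $-\langle\nabla J(\theta_t),\Delta h\rangle$, the definition of $\Delta h$ and Assumption \ref{ass3}(a) give $\|\Delta h\|\leq B(|y_t|+2\|z_t\|)$, hence $\|\Delta h\|^2\leq B^2(2y_t^2+8\|z_t\|^2)$; Cauchy-Schwarz in expectation followed by a second Cauchy-Schwarz on the outer sum yields exactly the $B\sqrt{G_T(2Y_T+8Z_T)}$ factor of the target inequality. For the approximation term $-\langle\nabla J(\theta_t),\mathbb{E}_{O'}[\Delta h'(O',\theta_t)]\rangle$, the computation already performed in \eqref{eq_app} gives $\|\mathbb{E}_{O'}[\Delta h'(O',\theta_t)]\|\leq 2B\epsilon_{{\rm app}}$; Young's inequality $2B\epsilon_{{\rm app}}\|\nabla J(\theta_t)\|\leq \tfrac{1}{2}\|\nabla J(\theta_t)\|^2+2B^2\epsilon_{{\rm app}}^2$ then allows me to absorb $\tfrac{1}{2}G_T$ into the left-hand side, leaving a clean $\mathcal{O}(\epsilon_{{\rm app}})$ residue.

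The main obstacle I anticipate is precisely this absorption: a naive Cauchy-Schwarz on the $\Delta h'$ term would produce $\mathcal{O}(\epsilon_{{\rm app}}\sqrt{G_T})$ rather than the clean additive $\mathcal{O}(\epsilon_{{\rm app}})$ demanded by the statement, so Young's inequality is essential and one must verify that no other step reintroduces a $\sqrt{G_T}$ factor that would prevent the absorption from closing. A related bookkeeping point is that $\Theta(O_t,\theta_t)$ implicitly contains a $\nabla J(\theta_t)$ factor, so in invoking Lemma \ref{e7} one should exploit $\|\nabla J(\theta_t)\|\leq L_J$ (from the Lipschitzness of $J$ in Lemma \ref{ae1}) rather than passing to $\sqrt{\mathbb{E}\|\nabla J(\theta_t)\|^2}$. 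Everything else is essentially routine collection of constants, noting that bounded $\epsilon_{{\rm app}}$ makes the quadratic remnant $\epsilon_{{\rm app}}^2$ also $\mathcal{O}(\epsilon_{{\rm app}})$.
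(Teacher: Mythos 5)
Your proposal follows essentially the same route as the paper: the same decomposition $\delta_t\nabla\log\pi_{\theta_t}=h(O_t,\theta_t)+\Delta h(O_t,\eta_t,\omega_t,\theta_t)$, the same identity $\mathbb{E}_{O'}[h(O',\theta)]=\nabla J(\theta)+\mathbb{E}_{O'}[\Delta h'(O',\theta)]$, the telescoping/Abel bound on $\frac{1}{\alpha_t}(J(\theta_{t+1})-J(\theta_t))$, Lemma \ref{e7} for the Markovian term, and the double Cauchy--Schwarz that produces $B\sqrt{G_T(2Y_T+8Z_T)}$. The one place you diverge is the approximation term $-\langle\nabla J(\theta_t),\mathbb{E}_{O'}[\Delta h'(O',\theta_t)]\rangle$: you use Young's inequality and absorb $\tfrac12 G_T$ into the left-hand side, whereas the paper simply bounds $\|\nabla J(\theta_t)\|\leq L_J$ uniformly (the Lipschitz constant from Lemma \ref{ae1}, exactly the bound you already invoke for the $\Theta$ term) to get $2BL_J\epsilon_{\rm app}$ directly, with no absorption needed. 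Your route is valid but not free: after multiplying through by $2$ the coefficient of the coupling term becomes $2B\sqrt{G_T(2Y_T+8Z_T)}$ rather than the stated $B\sqrt{G_T(2Y_T+8Z_T)}$, so you prove a slightly weaker inequality than the one in the theorem. This is immaterial for the Step-4 system analysis (one just sets $l_4=2B$ and shrinks the stepsize ratio $c$ accordingly), but if you want the constant as stated, use the paper's uniform bound on $\|\nabla J\|$ instead of the absorption — this also removes the obstacle you flagged, since no $\sqrt{G_T}$ factor ever appears in that term.
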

\begin{proof}
From the update rule of actor in Line 9 of Algorithm \ref{alg1} and \ref{l-smooth1}, we have
\begin{align*}
    J(\bm\theta_{t+1})\ge &\ J(\bm\theta_t)+\langle \nabla J(\bm\theta_t), \bm\theta_{t+1}-\bm\theta_t\rangle-\frac{L_{J'}}{2}\Vert \bm\theta_t-\bm\theta_{t+1}\Vert^2\\
    = &\  J(\bm\theta_t)+\alpha_t\langle \nabla J(\bm\theta_t),\delta_t\nabla \log\pi_{\bm\theta_t}(a_t|s_t)\rangle-\frac{L_{J'}}{2}\alpha_t^2\Vert\delta_t\nabla \log\pi_{\bm\theta_t}(a_t|s_t)\Vert^2\\
    =&\  J(\bm\theta_t)+\alpha_t\langle \nabla J(\bm\theta_t),\Delta h(O_t,\eta_t,\bm\omega_t,\bm\theta_t)\rangle+\alpha_t\langle \nabla J(\bm\theta_t),h(O_t,\bm\theta_t)\rangle\\
    &\ -\frac{L_{J'}}{2}\alpha_t^2\Vert\delta_t\nabla \log\pi_{\bm\theta_t}(a_t|s_t)\Vert^2\\
    =&\ J(\bm\theta_t)+\alpha_t\langle \nabla J(\bm\theta_t),\Delta h(O_t,\eta_t,\bm\omega_t,\bm\theta_t)\rangle-\alpha_t\Theta(O_t,\bm\theta_t)\\
    &\ +\alpha_t\langle \nabla J(\bm\theta_t),\mathbb{E}_{O'}[ h(O',\bm\theta_t)]\rangle
    -\frac{L_{J'}}{2}\alpha_t^2\Vert\delta_t\nabla \log\pi_{\bm\theta_t}(a_t|s_t)\Vert^2\\
    =&\ J(\bm\theta_t)+\alpha_t\langle \nabla J(\bm\theta_t),\Delta h(O_t,\eta_t,\bm\omega_t,\bm\theta_t)\rangle-\alpha_t\Theta(O_t,\bm\theta_t)+\alpha_t\Vert \nabla J(\bm\theta_t)\Vert^2\\
    &\ +\alpha_t\langle \nabla J(\bm\theta_t),\mathbb{E}_{O'}[\Delta h'(O',\bm\theta_t)]\rangle
    -\frac{L_{J'}}{2}\alpha_t^2\Vert\delta_t\nabla \log\pi_{\bm\theta_t}(a_t|s_t)\Vert^2,
\end{align*}
where the last equality is due to the fact
\begin{align*}
    \mathbb{E}_{O'}[h(O',\bm\theta)-\Delta h'(O',\bm\theta)]=\mathbb{E}_{O'}[(r(s,a)-J(\bm\theta)+V_{\bm\theta}(s')-V_{\bm\theta}(s))\nabla \log \pi_{\bm\theta}(a|s)]=\nabla J(\bm\theta).
\end{align*}
Rearranging the above inequality and taking expectation, we have
\begin{align*}
    \mathbb{E}\Vert \nabla J(\bm\theta_t)\Vert^2\leq&\  \frac{1}{\alpha_t}(\mathbb{E}[J(\bm\theta_{t+1})-J(\bm\theta_t)])-\mathbb{E}\langle \nabla J(\bm\theta_t),\Delta h(O_t,\eta_t,\bm\omega_t,\bm\theta_t)\rangle+\mathbb{E}[\Theta(O_t,\bm\theta_t)]\\
    &\ -\mathbb{E}\langle \nabla J(\bm\theta_t),\mathbb{E}_{O'}[\Delta h'(O',\bm\theta_t)]\rangle+\frac{L_{J'}}{2}\alpha_t\mathbb{E}\Vert\delta_t\nabla \log\pi_{\bm\theta_t}(a_t|s_t)\Vert^2.
\end{align*}
Note that from Cauchy-Schwartz inequality, we have
\begin{align*}
    -\mathbb{E}\langle \nabla J(\bm\theta_t),\Delta h(O_t,\eta_t,\bm\omega_t,\bm\theta_t)\rangle\leq B\sqrt{\mathbb{E}\Vert \nabla J(\bm\theta_t)\Vert^2}\sqrt{2\mathbb{E}y_t^2+8\mathbb{E}\Vert \bm z_t\Vert^2}.
\end{align*}
From Lemma \ref{e7} and choosing $\tau=\tau_T$, we have
\begin{align*}
    \mathbb{E}[\Theta(O_t,\bm\theta_t)]\leq &\  D_1(\tau_T+1)\sum\limits_{k=t-\tau_T+1}^{t}\mathbb{E}\Vert \bm\theta_k-\bm\theta_{k-1}\Vert+D_2m\rho^{\tau_T-1}\\
    \leq &\  D_1(\tau_T+1)G\sum\limits_{k=t-\tau_T}^{t-1}\alpha_k+D_2m\rho^{\tau_T-1}\\
    \leq &\ GD_1(\tau_T+1)^2\alpha_{t-\tau_T}+D_2\frac{1}{\sqrt{T}}.
\end{align*}
It has been shown that
\begin{align*}
    \mathbb{E}_{O'}\Vert \Delta h'(O,\bm\theta)\Vert^2\leq 4B^2\epsilon^2_{\rm app}.
\end{align*}
Therefore, we have
\begin{align*}
    -\langle \nabla J(\bm\theta_t),\mathbb{E}_{O'}[\Delta h'(O',\bm\theta_t)]\rangle\leq&\  L_J\sqrt{\Vert \mathbb{E}_{O'}[\Delta h'(O_t,\bm\theta_t)]\Vert^2}\\
    \leq &\ L_J\sqrt{\mathbb{E}_{O'}\Vert \Delta h'(O_t,\bm\theta_t)\Vert^2}\\
    \leq &\ 2BL_J\epsilon_{{\rm app}},
\end{align*}
where we use $\Vert \nabla J(\bm\theta)\Vert\leq L_J$ which comes from \cref{ae1}.
% Taking expectation and combining the above inequalities yields,
% \begin{align*}
%     \mathbb{E}[J(\bm\theta_{t+1})]\ge& \mathbb{E}[J(\bm\theta_t)]-\alpha_tB\sqrt{\mathbb{E}\Vert \nabla J(\bm\theta_t)\Vert^2}\sqrt{8\mathbb{E}\Vert z_t\Vert^2+2\mathbb{E}y_t^2}-2BL_J\epsilon_{{\rm app}}\alpha_t\\
%     &-\alpha_tG(D_1(\tau_T+1)G\sum\limits_{k=t-\tau_T}^{t-1}\alpha_t+D_2m\rho^{\tau_T-1})+\alpha_t\mathbb{E}\Vert \nabla J(\bm\theta_t)\Vert^2-\frac{L_{J'}}{2}G^2\alpha_t^2
% \end{align*}
Plugging the three terms yields
\begin{align*}
    \mathbb{E}\Vert \nabla J(\bm\theta_t)\Vert^2\leq&\  \frac{1}{\alpha_t}(\mathbb{E}[J(\bm\theta_{t+1})]-\mathbb{E}[J(\bm\theta_t)])+B\sqrt{\mathbb{E}\Vert \nabla J(\bm\theta_t)\Vert^2}\sqrt{2\mathbb{E}y_t^2+8\mathbb{E}\Vert \bm z_t\Vert^2}\\
    &\ +2BL_J\epsilon_{{\rm app}}+GD_1(\tau_T+1)^2\alpha_{t-\tau_T}+D_2\frac{1}{\sqrt{T}}+\frac{L_{J'}}{2}G^2\alpha_t.
\end{align*}
Summing over $t$ from $\tau_T$ to $T-1$ gives
\begin{align*}
    \sum\limits_{t=\tau_T}^{T-1}\mathbb{E}\Vert \nabla J(\bm\theta_t)\Vert^2\leq &\ \underbrace{\sum\limits_{t=\tau_T}^{T-1}\frac{1}{\alpha_t}(\mathbb{E}[J(\bm\theta_{t+1})-\mathbb{E}[J(\bm\theta_t)])}_{I_1}+B\sum\limits_{t=\tau_T}^{T-1}\sqrt{\mathbb{E}\Vert \nabla J(\bm\theta_t)\Vert^2}\sqrt{2\mathbb{E}y_t^2+8\mathbb{E}\Vert \bm z_t\Vert^2}\\
    &\ +(GD_1(\tau_T+1)^2+D_2+\frac{L_{J'}}{2}G^2)\frac{T-\tau_T}{\sqrt{T}}+2BL_J \epsilon_{\text{app}}(T-\tau_T).
\end{align*}
For term $I_1$, from Abel summation by parts, we have
\begin{align*}
    I_1=&\ \sum\limits_{t=\tau_T}^{T-1}\frac{1}{\alpha_t}(\mathbb{E}[J(\bm\theta_{t+1})-\mathbb{E}[J(\bm\theta_t)])\\
    =&\ \sum\limits_{t=\tau_T+1}^{T-1}(\frac{1}{\alpha_{t-1}}-\frac{1}{\alpha_t})\mathbb{E}[J(\bm\theta_t)]-\mathbb{E}[J(\bm\theta_{\tau_T})]\frac{1}{\alpha_{\tau_T}}+\frac{1}{\alpha_{T-1}}\mathbb{E}[J(\bm\theta_T)]\\
    \leq &\  \sum\limits_{t=\tau_T+1}^{T-1}(\frac{1}{\alpha_t}-\frac{1}{\alpha_{t-1}})U_r+\frac{1}{\alpha_{\tau_T}}U_r+\frac{1}{\alpha_{T-1}}U_r\\
    =&\ \frac{2U_r}{\alpha_{T-1}}\\
    =&\  \frac{2U_r}{c}\sqrt{T}.
\end{align*}
Overall, we have
\begin{align*}
    \sum\limits_{t=\tau_T}^{T-1}\mathbb{E}\Vert \nabla J(\bm\theta_t)\Vert^2\leq&\ \frac{2U_r}{c}\sqrt{T}+(GD_1(\tau_T+1)^2+D_2+\frac{L_{J'}}{2}G^2)\frac{T-\tau_T}{\sqrt{T}}+2BL_J \epsilon_{\text{app}}(T-\tau_T)\\
    &\ +B\sum\limits_{t=\tau_T}^{T-1}\sqrt{\mathbb{E}\Vert \nabla J(\bm\theta_t)\Vert^2}\sqrt{2\mathbb{E}y_t^2+8\mathbb{E}\Vert \bm z_t\Vert^2}\\
    \leq &\  \frac{2U_r}{c}\sqrt{T}+(GD_1(\tau_T+1)^2+D_2+\frac{L_{J'}}{2}G^2)\frac{T-\tau_T}{\sqrt{T}}+2BL_J \epsilon_{\text{app}}(T-\tau_T)\\
    &\ +B(\sum\limits_{t=\tau_T}^{T-1}\mathbb{E}\Vert \nabla J(\bm\theta_t)\Vert^2)^\frac{1}{2}(2\sum\limits_{t=\tau_T}^{T-1}\mathbb{E}y_t^2+8\sum\limits_{t=\tau_T}^{T-1}\mathbb{E}\Vert \bm z_t\Vert^2)^\frac{1}{2}.
\end{align*}
Therefore, we get
\begin{align*}
    G_T\leq &\  (\frac{4U_r}{c}+GD_1(\tau_T+1)^2+D_2+L_{J'}G^2)\frac{1}{\sqrt{T}}+2BL_J \epsilon_{\text{app}}+B\sqrt{G_T(2Y_T+8Z_T)}\\
    =&\ \mathcal{O}(\frac{\log^2 T}{\sqrt{T}})+\mathcal{O}(\epsilon_{{\rm app}})+B\sqrt{G_T(2Y_T+8Z_T)},
\end{align*}
which concludes the proof.
\end{proof}
\subsection{Step 4: Interconnected iteration system analysis}\label{app-inter}
In this subsection, we perform an interconnected iteration system analysis to prove Theorem \ref{main theorem}.

\noindent\textbf{Proof of Theorem \ref{main theorem}.}
\begin{proof}
Combining \eqref{theo-step1}, \eqref{theo-step2}, and \eqref{theo-step3}, we have
\begin{align*}
    Y_T\leq &\ \mathcal{O}(\frac{\log^2 T}{\sqrt{T}})+c G\sqrt{Y_TG_T},\\
    Z_T\leq &\ \mathcal{O}(\frac{\log^2 T}{\sqrt{T}})+\mathcal{O}(\epsilon_{\rm app})+\frac{2}{\lambda}\sqrt{Y_TZ_T}+\frac{2cBL_\ast}{\lambda}\sqrt{Z_T(2Y_T+8Z_T)}+\frac{2cL_\ast}{\lambda}\sqrt{Z_TG_T},\\
    G_T\leq &\ \mathcal{O}(\frac{\log^2 T}{\sqrt{T}})+\mathcal{O}(\epsilon_{{\rm app}})+B\sqrt{G_T(2Y_T+8Z_T)}.
\end{align*}
Denote
\begin{align*}
    l_1:=&\ c G,\\
    l_2:=&\ \frac{2}{\lambda},\\
    l_3:=&\ \frac{2cBL_\ast}{\lambda},\\
    l_4:=&\ \frac{2cL_\ast}{\lambda},\\
    l_5:=&\ B.
\end{align*}
Then we have
\begin{align*}
    Y_T\leq &\ \mathcal{O}(\frac{\log^2 T}{\sqrt{T}})+l_1\sqrt{Y_TG_T},\\
    Z_T\leq &\ \mathcal{O}(\frac{\log^2 T}{\sqrt{T}})+\mathcal{O}(\epsilon_{\rm app})+l_2\sqrt{Y_TZ_T}+l_3\sqrt{Z_T(2Y_T+8Z_T)}+l_4\sqrt{Z_TG_T},\\
    G_T\leq &\ \mathcal{O}(\frac{\log^2 T}{\sqrt{T}})+\mathcal{O}(\epsilon_{{\rm app}})+l_5\sqrt{G_T(2Y_T+8Z_T)}.
\end{align*}
For $G_T$, we get
\begin{align}\label{connected1}
    G_T\leq &\ \mathcal{O}(\frac{\log^2 T}{\sqrt{T}})+\mathcal{O}(\epsilon_{{\rm app}})+\frac{1}{2}G_T+l_5^2(Y_T+4Z_T),\nonumber \\
    G_T\leq &\ \mathcal{O}(\frac{\log^2 T}{\sqrt{T}})+\mathcal{O}(\epsilon_{{\rm app}})+2l_5^2(Y_T+4Z_T).
\end{align}
For $Z_T$, we have
\begin{align*}
     Z_T\leq \mathcal{O}(\frac{\log^2 T}{\sqrt{T}})+\mathcal{O}(\epsilon_{\rm app})+\frac{1}{4}Z_T+l_2^2Y_T+5l_3Z_T+l_3Y_T+\frac{1}{4}Z_T+l_4^2G_T.
\end{align*}
If it satisfies $5l_3\leq \frac{1}{4}$, we further have
\begin{align}\label{connected2}
    Z_T\leq \mathcal{O}(\frac{\log^2 T}{\sqrt{T}})+\mathcal{O}(\epsilon_{\rm app})+(4l_2^2+4l_3)Y_T+4l_4^2G_T.
\end{align}
Plugging \eqref{connected1} into \eqref{connected2}, it holds that
\begin{align*}
    Z_T\leq \mathcal{O}(\frac{\log^2 T}{\sqrt{T}})+\mathcal{O}(\epsilon_{\rm app})+(4l_2^2+4l_3+8l_4^2l_5^2)Y_T+32l_4^2l_5^2Z_T.
\end{align*}
Since $5l_3\leq \frac{1}{4}$, it satisfies $32l_4^2l_5^2\leq \frac{1}{2}$. Then we have
\begin{align}\label{connected3}
    Z_T\leq \mathcal{O}(\frac{\log^2 T}{\sqrt{T}})+\mathcal{O}(\epsilon_{\rm app})+8(l_2^2+l_3+2l_4^2l_5^2)Y_T.
\end{align}
For $Y_T$, we get
\begin{align}\label{connected4}
    Y_T\leq \mathcal{O}(\frac{\log^2 T}{\sqrt{T}})+\frac{l_1}{2}(Y_T+G_T).
\end{align}
Plugging \eqref{connected1} and \eqref{connected3} into \eqref{connected4} gives
\begin{align*}
    Y_T\leq &\ \mathcal{O}(\frac{\log^2 T}{\sqrt{T}})+\mathcal{O}(\epsilon_{{\rm app}})+\frac{l_1}{2}(Y_T+2l_5^2Y_T+8l_5^2Z_T)\\
    \leq &\ \mathcal{O}(\frac{\log^2 T}{\sqrt{T}})+\mathcal{O}(\epsilon_{{\rm app}})+\frac{l_1}{2}(Y_T+2l_5^2Y_T+64l_5^2(l_2^2+l_3+2l_4^2l_5^2))Y_T\\
    =&\ \mathcal{O}(\frac{\log^2 T}{\sqrt{T}})+\mathcal{O}(\epsilon_{{\rm app}})+\frac{l_1}{2}(1+2l_5^2+64l_5^2(l_2^2+l_3+2l_4^2l_5^2))Y_T.
\end{align*}
Therefore, if $l_1(1+2l_5^2+64l_5^2(l_2^2+l_3+2l_4^2l_5^2)\leq 1$, we have
\begin{align*}
    Y_T\leq\mathcal{O}(\frac{\log^2 T}{\sqrt{T}})+\mathcal{O}(\epsilon_{{\rm app}}).
\end{align*}
Overall, we require
\begin{align*}
    5l_3\leq &\ \frac{1}{4},\\
    l_1(1+2l_5^2+64l_5^2(l_2^2+l_3+2l_4^2l_5^2))\leq &\ 1.
\end{align*}
According to the definition of $l_1, l_2, l_3, l_4,l_5$, we have
\begin{align*}
    \frac{10cBL_\ast}{\lambda}\leq \frac{1}{4},\\
    cG(1+2B^2+64B^2(\frac{4}{\lambda^2}+\frac{2cBL_\ast}{\lambda}+\frac{8c^2B^2L_\ast^2}{\lambda^2}))\leq 1.
\end{align*}
Thus we choose
\begin{align}\label{threshold}
    c\leq\min \{\frac{\lambda}{40BL_\ast}, \frac{\lambda^2}{G(\lambda^2+6B^2\lambda^2+256B^2)} \},
\end{align}
which satisfies the above two inequalities. Therefore, we have
\begin{align*}
    Y_T=\mathcal{O}(\frac{\log^2 T}{\sqrt{T}})+\mathcal{O}(\epsilon_{{\rm app}}),
\end{align*}
and consequently,
\begin{align*}
    Z_T=&\ \mathcal{O}(\frac{\log^2 T}{\sqrt{T}})+\mathcal{O}(\epsilon_{{\rm app}}),\\
    G_T=&\ \mathcal{O}(\frac{\log^2 T}{\sqrt{T}})+\mathcal{O}(\epsilon_{{\rm app}}).
\end{align*}
Thus we conclude our proof.
\end{proof}

\section{Proof of Supporting Lemmas}\label{lemma}
The following three lemmas only deal with the Markovian noise, which are originally proved in \cite{wu2020finite} and updated in \cite{wu2020finiteUpdated}. We include the proof with slight modifications for proving Theorem \ref{main theorem}.

\noindent\textbf{Proof of \Cref{e2}}.
\begin{proof}
We will divide the proof of this lemma into four steps.

\noindent\textbf{Step 1:} show that for any $\bm\theta_1,\bm\theta_2,\eta,O=(s,a,s')$, we have
\begin{align}\label{c1-step1}
    |\Phi (O,\eta,\bm\theta_1)-\Phi(O,\eta,\bm\theta_2)|\leq 4U_rL_J\Vert \bm\theta_1-\bm\theta_2\Vert.
\end{align}
By the definition of $\Phi(O,\eta,\bm\theta)$ in \eqref{notation2}, we have
\begin{align*}
    |\Phi(O,\eta,\bm\theta_1)-\Phi(O,\bm\theta,\bm\theta_2)|=&\ |(\eta-J(\bm\theta_1))(r-J(\bm\theta_1))-(\eta-J(\bm\theta_2))(r-J(\bm\theta_2))|\\
    \leq &\ |(\eta-J(\bm\theta_1))(r-J(\bm\theta_1))-(\eta-J(\bm\theta_1))(r-J(\bm\theta_2))|\\
    &\ +|(\eta-J(\bm\theta_1))(r-J(\bm\theta_2))-(\eta-J(\bm\theta_2))(r-J(\bm\theta_2))|\\
    \leq &\  4U_r|J(\bm\theta_1)-J(\bm\theta_2)|\\
    \leq &\ 4U_rL_J\Vert \bm\theta_1-\bm\theta_2\Vert.
\end{align*}
\noindent\textbf{Step 2:} show that for any $\bm\theta,\eta_1,\eta_2,O$, we have
\begin{align}\label{c1-step2}
    |\Phi(O,\eta_1,\bm\theta)-\Phi(O,\eta_2,\bm\theta)\leq 2U_r|\eta_1-\eta_2|.
\end{align}
By definition, we have
\begin{align*}
    |\Phi(O,\eta_1,\bm\theta)-\Phi(O,\eta_2,\bm\theta)|=&\  |(\eta_1-J(\bm\theta))(r-J(\bm\theta))-(\eta_2-J(\bm\theta))(r-J(\bm\theta))|\\
    \leq &\ 2U_r|\eta_1-\eta_2|.
\end{align*}
\noindent\textbf{Step 3:} show that for original tuple $O_t$ and the auxiliary tuple $\widetilde{O}_t$, conditioned on $s_{t-\tau+1}$ and $\bm\theta_{t-\tau}$, we have
\begin{align}\label{c1-step3}
    |\mathbb{E}[\Phi(O_t,\eta_{t-\tau},\bm\theta_{t-\tau})-\mathbb{E}[\Phi(\widetilde{O}_t,\eta_{t-\tau},\bm\theta_{t-\tau})]|\leq 2U_r^2|\mathcal{A}|L_\pi\sum\limits_{k=t-\tau}^t\mathbb{E}\Vert \bm\theta_k-\bm\theta_{t-\tau}\Vert.
\end{align}
By definition, we have
\begin{align*}
    \mathbb{E}[\Phi(O_t,\eta_{t-\tau},\bm\theta_{t-\tau})-\mathbb{E}[\Phi(\widetilde{O}_t,\eta_{t-\tau},\bm\theta_{t-\tau})]=(\eta_{t-\tau}-J(\bm\theta_{t-\tau}))\mathbb{E}[r(s_t,a_t)-r(\widetilde{s}_t,\widetilde{a}_t)].
\end{align*}
By definition of total variation norm, we have
\begin{align}\label{eq:3}
    \mathbb{E}[r(s_t,a_t)-r(\widetilde{s}_t,\widetilde{a}_t)]\leq 2U_rd_{TV}(\mathbb{P}(O_t\in \cdot|s_{t-\tau+1},\bm\theta_{t-\tau}),\mathbb{P}(\widetilde{O}_t \in\cdot |s_{t-\tau+1},\bm\theta_{t-\tau})).
\end{align}
By Lemma \ref{ae5}, we get
\begin{align*}
    &d_{TV}(\mathbb{P}(O_t\in\cdot |s_{t-\tau+1},\bm\theta_{t-\tau}),\mathbb{P}(\widetilde{O}_t\in\cdot |s_{t-\tau+1},\bm\theta_{t-\tau}))\\
    =\ & d_{TV}(\mathbb{P}((s_t,a_t)\in\cdot |s_{t-\tau+1},\bm\theta_{t-\tau}),\mathbb{P}((\widetilde{s}_t,\widetilde{a}_t)\in\cdot |s_{t-\tau+1},\bm\theta_{t-\tau}))\\
    \leq\ & d_{TV}(\mathbb{P}(s_t\in\cdot |s_{t-\tau+1},\bm\theta_{t-\tau}),\mathbb{P}(\widetilde{s}_t\in\cdot |s_{t-\tau+1},\bm\theta_{t-\tau}))+\frac{1}{2}|\mathcal{A}|L_\pi \mathbb{E}\Vert \bm\theta_{t}-\bm\theta_{t-\tau}\Vert\\
    \leq\ &d_{TV}(\mathbb{P}(O_{t-1}\in\cdot |s_{t-\tau+1},\bm\theta_{t-\tau}),\mathbb{P}(\widetilde{O}_{t-1}\in\cdot |s_{t-\tau+1},\bm\theta_{t-\tau}))+\frac{1}{2}|\mathcal{A}|L_\pi \mathbb{E}\Vert \bm\theta_{t}-\bm\theta_{t-\tau}\Vert.
\end{align*}
Repeat the above argument from $t$ to $t-\tau$, we have
\begin{align}\label{totalv1}
    d_{TV}(\mathbb{P}(O_t\in\cdot |s_{t-\tau+1},\bm\theta_{t-\tau}),\mathbb{P}(\widetilde{O}_t\in\cdot |s_{t-\tau+1},\bm\theta_{t-\tau}))\leq \frac{1}{2}|\mathcal{A}|L_\pi\sum\limits_{k=t-\tau}^{t}\mathbb{E}\Vert \bm\theta_k-\bm\theta_{t-\tau}\Vert.
\end{align}
Plugging \eqref{totalv1} into \eqref{eq:3}, we have
\begin{align*}
    |\mathbb{E}[\Phi(O_t,\eta_{t-\tau},\bm\theta_{t-\tau})-\mathbb{E}[\Phi(\widetilde{O}_t,\eta_{t-\tau},\bm\theta_{t-\tau})]|\leq 2U_r^2|\mathcal{A}|L_\pi\sum\limits_{k=t-\tau}^t\mathbb{E}\Vert \bm\theta_k-\bm\theta_{t-\tau}\Vert.
\end{align*}
\noindent\textbf{Step 4:} show that conditioned on $s_{t-\tau+1}$ and $\bm\theta_{t-\tau}$, we have
\begin{align}\label{c1-step4}
    \mathbb{E}[\Phi(\widetilde{O}_t,\eta_{t-\tau},\bm\theta_{t-\tau})]\leq 4U_r^2m\rho^{\tau-1}.
\end{align}
Note that according to definition, we have
\begin{align*}
    \mathbb{E}[\Phi(O'_{t-\tau},\eta_{t-\tau},\bm\theta_{t-\tau})|\bm\theta_{t-\tau}]=0,
\end{align*}
where $O'_{t-\tau}=(s'_{t-\tau},a'_{t-\tau},s'_{t-\tau+1})$ is the tuple generated by $s'_{t-\tau}\sim\mu_{\bm\theta_{t-\tau}}, a'_{t-\tau}\sim \pi_{\bm\theta_{t-\tau}}, s'_{t-\tau+1}\sim\mathcal{P}$. From the uniform ergodicity in Assumption \ref{ass2}, it shows that
\begin{align*}
    d_{TV}(\mathbb{P}(\widetilde{s}_t=\cdot|s_{t-\tau+1},\bm\theta_{t-\tau}),\mu_{\bm\theta_{t-\tau}})\leq m\rho^{\tau-1}.
\end{align*}
Then we have
\begin{align*}
    \mathbb{E}[\Phi(\widetilde{O}_t,\eta_{t-\tau},\bm\theta_{t-\tau})]=&\ \mathbb{E}[\Phi(\widetilde{O}_t,\eta_{t-\tau},\bm\theta_{t-\tau})-\Phi(O'_{t-\tau},\eta_{t-\tau},\bm\theta_{t-\tau})]\\
    =&\ \mathbb{E}[(\eta_{t-\tau}-J(\bm\theta_{t-\tau}))(r(\Tilde{s}_t,\Tilde{a}_t)-r(s'_{t-\tau},a'_{t-\tau}))]\\
    \leq&\  4U_r^2d_{TV}(\mathbb{P}(\widetilde{O}_{t}=\cdot|s_{t-\tau+1},\bm\theta_{t-\tau}),\mu_{\bm\theta_{t-\tau}}\otimes\pi_{\bm\theta_{t-\tau}}\otimes\mathcal{P})\\
    \leq &\  4U_r^2m\rho^{\tau-1}.
\end{align*}
Combing \eqref{c1-step1}, \eqref{c1-step2}, \eqref{c1-step3}, and \eqref{c1-step4}, we have
\begin{align*}
    \mathbb{E}[\Phi(O_t,\eta_t,\bm\theta_t)]=\ &\mathbb{E}[\Phi(O_t,\eta_t,\bm\theta_t)-\Phi(O_t,\eta_t,\bm\theta_{t-\tau})]+\mathbb{E}[\Phi(O_t,\eta_t,\bm\theta_{t-\tau})-\Phi(O_t,\eta_{t-\tau},\bm\theta_{t-\tau})]\\
    &\ +\mathbb{E}[\Phi(O_t,\eta_{t-\tau},\bm\theta_{t-\tau})-\Phi(\widetilde{O}_{t},\eta_{t-\tau},\bm\theta_{t-\tau})]+\mathbb{E}[\Phi(\widetilde{O}_{t},\eta_{t-\tau},\bm\theta_{t-\tau})]\\
    \leq &\  4U_rL_J\Vert \bm\theta_t-\bm\theta_{t-\tau}\Vert+2U_r|\eta_t-\eta_{t-\tau}|+2U_r^2|\mathcal{A}|L_\pi \sum\limits_{i=t-\tau}^t\mathbb{E}\Vert \bm\theta_i-\bm\theta_{t-\tau}\Vert\\
    &\ +4U_r^2m\rho^{\tau-1},
\end{align*}
which concludes the proof.
\end{proof}
\noindent\textbf{Proof of \Cref{e4}}.
\begin{proof}
We will divide the proof of this lemma into four steps.

\noindent\textbf{Step 1:} show that for any $\bm\theta_1,\bm\theta_2,\bm\omega$ and tuple $O=(s,a,s')$, we have
\begin{align}\label{ae3-step1}
    |\Psi(O,\bm\omega,\bm\theta_1)-\Psi(O,\bm\omega,\bm\theta_2)\leq C_1\Vert \bm\theta_1-\bm\theta_2\Vert,
\end{align}
where $C_1=2U_\delta^2|\mathcal{A}|L_\pi (1+\lceil \log_\rho m^{-1}\rceil+\frac{1}{1-\rho})+2U_{\delta}L_J+2U_\delta L_{\ast}$.

By definition of $\Psi(O,\bm\omega,\bm\theta)$ in \eqref{notation2}, we have
\begin{align*}
    &\ |\Psi(O,\bm\omega, \bm\theta_1)-\Psi(O,\bm\omega,\bm\theta_2)|\\
    =&\ |\langle \bm\omega-\bm\omega_1^\ast,g(O,\bm\omega,\bm\theta_1)-\bar{g}(\bm\omega,\bm\theta_1)\rangle-\langle \bm\omega-\bm\omega_2^\ast,g(O,\bm\omega,\bm\theta_2)-\bar{g}(\bm\omega,\bm\theta_2)\rangle|\\
    \leq &\ \underbrace{|\langle \bm\omega-\bm\omega_1^\ast,g(O,\bm\omega,\bm\theta_1)-\bar{g}(\bm\omega,\bm\theta_1)\rangle-\langle \bm\omega-\bm\omega_1^\ast,g(O,\bm\omega,\bm\theta_2)-\bar{g}(\bm\omega,\bm\theta_2)\rangle|}_{I_1}\\
    &\ +\underbrace{|\langle \bm\omega-\bm\omega_1^\ast,g(O,\bm\omega,\bm\theta_2)-\bar{g}(\bm\omega,\bm\theta_2)\rangle-\langle \bm\omega-\bm\omega_2^\ast,g(O,\bm\omega,\bm\theta_2)-\bar{g}(\bm\omega,\bm\theta_2)\rangle|}_{I_2}.
\end{align*}
For term $I_1$, we have
\begin{align*}
    I_1=&\ |\langle \bm\omega-\bm\omega_1^\ast,g(O,\bm\omega,\bm\theta_1)-\bar{g}(\bm\omega,\bm\theta_1)\rangle-\langle \bm\omega-\bm\omega_1^\ast,g(O,\bm\omega,\bm\theta_2)-\bar{g}(\bm\omega,\bm\theta_2)\rangle|\\
    =&\ |\langle \bm\omega-\bm\omega_1^\ast,g(O,\bm\omega,\bm\theta_1)-g(O,\bm\omega,\bm\theta_2)\rangle|+|\langle \bm\omega-\bm\omega_1^\ast,\bar{g}(\bm\omega,\bm\theta_1)-\bar{g}(\bm\omega,\bm\theta_2)\rangle|\\
    =&\ |\langle \bm\omega-\bm\omega_1^\ast,\bm\phi(s)(J(\bm\theta_1)-J(\bm\theta_2))\rangle|+|\langle \bm\omega-\bm\omega_1^\ast,\bar{g}(\bm\omega,\bm\theta_1)-\bar{g}(\bm\omega,\bm\theta_2)\rangle|\\
    \leq &\  2U_{\bm\omega}L_J\Vert \bm\theta_1-\bm\theta_2\Vert+2U_{\bm\omega}\Vert \bar{g}(\bm\omega,\bm\theta_1)-\bar{g}(\bm\omega,\bm\theta_2)\Vert\\
    \leq &\  2U_{\bm\omega}L_J\Vert \bm\theta_1-\bm\theta_2\Vert+2U_{\bm\omega}\cdot 2U_\delta d_{TV}(\mu_{\bm\theta_1}\otimes \pi_{\bm\theta_1}\otimes \mathcal{P},\mu_{\bm\theta_2}\otimes \pi_{\bm\theta_2}\otimes \mathcal{P})\\
    \leq &\  2U_{\bm\omega}L_J\Vert \bm\theta_1-\bm\theta_2\Vert+2U_\delta^2 d_{TV}(\mu_{\bm\theta_1}\otimes \pi_{\bm\theta_1}\otimes \mathcal{P},\mu_{\bm\theta_2}\otimes \pi_{\bm\theta_2}\otimes \mathcal{P})\\
    \leq &\  (2U_{\delta}L_J+2U_\delta^2|\mathcal{A}|L_\pi(1+\lceil \log_\rho m^{-1}\rceil+\frac{1}{1-\rho}))\Vert \bm\theta_1-\bm\theta_2\Vert,
\end{align*}
where we use the fact that $U_\delta=2U_r+2U_{\bm\omega}$ and the last inequality comes from Lemma \ref{ae4}.

For term $I_2$, from Cauchy-Schwartz inequality, we have
\begin{align*}
    I_2=&\ |\langle \bm\omega-\bm\omega_1^\ast,g(O,\bm\omega,\bm\theta_2)-\bar{g}(\bm\omega,\bm\theta_2)\rangle-\langle \bm\omega-\bm\omega_2^\ast,g(O,\bm\omega,\bm\theta_2)-\bar{g}(\bm\omega,\bm\theta_2)\rangle|\\
    =&\ |\langle \bm\omega_1^\ast-\bm\omega_2^\ast,g(O,\bm\omega,\bm\theta_2)-\bar{g}(\bm\omega,\bm\theta_2)\rangle|\\
    \leq &\  2U_\delta\Vert \bm\omega_1^\ast-\bm\omega_2^\ast\Vert\\
    \leq &\ 2U_\delta L_{\ast}\Vert \bm\theta_1-\bm\theta_2\Vert.
\end{align*}
Combining the results from $I_1$ and $I_2$, we get
\begin{align*}
    |\Psi(O,\bm\omega,\bm\theta_1)-\Psi(O,\bm\omega,\bm\theta_2)\leq C_1\Vert \bm\theta_1-\bm\theta_2\Vert,
\end{align*}
where $C_1=2U_\delta^2|\mathcal{A}|L_\pi(1+\lceil \log_\rho m^{-1}\rceil+\frac{1}{1-\rho})+2U_\delta L_J+2U_\delta L_{\ast}$.

\noindent\textbf{Step 2:} show that for any $\bm\theta,\bm\omega_1,\bm\omega_2$ and tuple $O(s,a,s')$, we have
\begin{align}\label{ae3-step2}
    |\Psi(O,\bm\omega_1,\bm\theta)-\Psi(O,\bm\omega_2,\bm\theta)|\leq 6U_\delta \Vert \bm\omega_1-\bm\omega_2\Vert.
\end{align}
By definition, we have
\begin{align*}
    &\ |\Psi(O,\bm\omega_1,\bm\theta)-\Psi(O,\bm\omega_2,\bm\theta)|\\
    =&\  |\langle \bm\omega_1-\bm\omega^\ast, g(O,\bm\omega_1,\bm\theta)-\bar{g}(\bm\omega_1,\bm\theta)\rangle-\langle \bm\omega_2-\bm\omega^\ast, g(O,\bm\omega_2,\bm\theta)-\bar{g}(\bm\omega_2,\bm\theta)\rangle|\\
    \leq &\  |\langle \bm\omega_1-\bm\omega^\ast, g(O,\bm\omega_1,\bm\theta)-\bar{g}(\bm\omega_1,\bm\theta)\rangle-\langle \bm\omega_1-\bm\omega^\ast, g(O,\bm\omega_2,\bm\theta)-\bar{g}(\bm\omega_2,\bm\theta)\rangle|\\
    &\ +|\langle \bm\omega_1-\bm\omega^\ast, g(O,\bm\omega_2,\bm\theta)-\bar{g}(\bm\omega_2,\bm\theta)\rangle-\langle \bm\omega_2-\bm\omega^\ast, g(O,\bm\omega_2,\bm\theta)-\bar{g}(\bm\omega_2,\bm\theta)\rangle|\\
    \leq &\  2U_{\bm\omega}\Vert (g(O,\bm\omega_1,\bm\theta)-g(O,\bm\omega_2,\bm\theta))-(\bar{g}(\bm\omega_1,\bm\theta)-\bar{g}(\bm\omega_2,\bm\theta))\Vert+2U_\delta \Vert \bm\omega_1-\bm\omega_2\Vert\\
    \leq&\  6U_\delta \Vert \bm\omega_1-\bm\omega_2\Vert,
\end{align*}
where the last inequality holds due to $\Vert g(O,\bm\omega_1,\bm\theta)-g(O,\bm\omega_2,\bm\theta)\Vert\leq 2\Vert \bm\omega_1-\bm\omega_2\Vert$, $\Vert \bar{g}(\bm\omega_1,\bm\theta)-\bar{g}(\bm\omega_2,\bm\theta)\Vert\leq 2\Vert \bm\omega_1-\bm\omega_2\Vert$, and $2U_{\bm\omega}\leq U_\delta$.

\noindent\textbf{Step 3:} show that for tuples $O_t=(s_t,a_t,s_{t+1})$ and $\widetilde{O}_t=(\widetilde{s}_t,\widetilde{a}_t,\widetilde{s}_{t+1})$. Conditioning on $s_{t-\tau+1}$ and $\bm\theta_{t-\tau}$, we have
\begin{align}\label{ae3-step3}
    \mathbb{E}[\Psi(O_t,\bm\omega_{t-\tau},\bm\theta_{t-\tau})-\Psi(\widetilde{O}_t,\bm\omega_{t-\tau},\bm\theta_{t-\tau})]\leq U_\delta^2|\mathcal{A}|L_\pi\sum\limits_{k=t-\tau}^t\mathbb{E}\Vert \bm\theta_k-\bm\theta_{t-\tau}\Vert.
\end{align}
By the definition of total variation norm, we have
\begin{align*}
    &\ \mathbb{E}[\Psi(O_t,\bm\omega_{t-\tau},\bm\theta_{t-\tau})-\Psi(\widetilde{O}_t,\bm\omega_{t-\tau},\bm\theta_{t-\tau})]\\
    \leq&\  \mathbb{E}[\langle \bm\omega_{t-\tau}-\bm\omega_{t-\tau}^\ast,g(O_t,\bm\omega_{t-\tau},\bm\theta_{t-\tau})-g(\widetilde{O}_t,\bm\omega_{t-\tau},\bm\theta_{t-\tau}))]\\
    \leq &\ 2U_\delta^2d_{TV}(\mathbb{P}(O_t\in\cdot |s_{t-\tau+1},\bm\theta_{-\tau}),\mathbb{P}(\widetilde{O}_t\in\cdot |s_{t-\tau+1},\bm\theta_{t-\tau}))\\
    \overset{(1)}{\leq} &\  U_\delta^2|\mathcal{A}|L_\pi\sum\limits_{k=t-\tau}^t\mathbb{E}\Vert \bm\theta_k-\bm\theta_{t-\tau}\Vert\\
    \leq &\  U_\delta^2|\mathcal{A}|L_\pi G\tau(\tau+1) \alpha_{t-\tau},
\end{align*}
where (1) follows from \eqref{totalv1}.

\noindent\textbf{Step 4:} show that conditioning on $s_{t-\tau+1}$ and $\bm\theta_{t-\tau}$,
\begin{align}\label{ae3-step4}
    \mathbb{E}[\Psi(\widetilde{O}_t,\bm\omega_{t-\tau},\bm\theta_{t-\tau})]\leq 2U_\delta^2m\rho^{\tau-1}
\end{align}
From the definition of $\Psi(O,\bm\omega,\bm\theta)$, we have
\begin{align*}
    \mathbb{E}[\Psi(O'_{t-\tau},\bm\omega_{t-\tau},\bm\theta_{t-\tau})|s_{t-\tau+1},\bm\theta_{t-\tau}]=0,
\end{align*}
where $O'_{t-\tau}$ is the tuple generated by $s_{t-\tau}'\sim\mu_{\bm\theta_{t-\tau}},a_{t-\tau}'\sim\pi_{\bm\theta_{t-\tau}},s_{t-\tau+1}'\sim\mathcal{P}$. From Assumption \ref{ass2}, we have
\begin{align*}
    d_{TV}(\mathbb{P}(\widetilde{s}_t=\cdot|s_{t-\tau+1},\bm\theta_{t-\tau}),\mu_{\bm\theta_{t-\tau}})\leq m\rho^{\tau-1}.
\end{align*}
Then, it holds that
\begin{align*}
    \mathbb{E}[\Psi(\widetilde{O}_t,\bm\omega_{t-\tau},\bm\theta_{t-\tau})]=&\ \mathbb{E}[\Psi(\widetilde{O}_t,\bm\omega_{t-\tau},\bm\theta_{t-\tau})-\Psi(O_{t-\tau}',\bm\omega_{t-\tau},\bm\theta_{t-\tau})]\\
    =&\ \mathbb{E}\langle \bm\omega_{t-\tau}-\bm\omega^\ast_{t-\tau},g(\widetilde{O}_{t},\bm\omega_{t-\tau},\bm\theta_{t-\tau})-g(O_{t-\tau}',\bm\omega_{t-\tau},\bm\theta_{t-\tau})\rangle\\
    \leq &\  4U_{\bm\omega}U_{\delta}d_{TV}(\mathbb{P}(\widetilde{O}_t=\cdot|s_{t-\tau+1},\bm\theta_{t-\tau}),\mu_{\bm\theta_{t-\tau}}\otimes \pi_{\bm\theta_{t-\tau}}\otimes \mathcal{P})\\
    \leq &\  2U_\delta^2d_{TV}(\mathbb{P}(\widetilde{O}_t=\cdot|s_{t-\tau+1},\bm\theta_{t-\tau}),\mu_{\bm\theta_{t-\tau}}\otimes \pi_{\bm\theta_{t-\tau}}\otimes \mathcal{P})\\
    = &\  2U_\delta^2d_{TV}(\mathbb{P}((\widetilde{s}_t,\widetilde{a}_t)\in\cdot|s_{t-\tau+1},\bm\theta_{t-\tau}),\mu_{\bm\theta_{t-\tau}}\otimes \pi_{\bm\theta_{t-\tau}})\\
    = &\ 2U_\delta^2d_{TV}(\mathbb{P}(\widetilde{s}_t=\cdot |s_{t-\tau+1},\bm\theta_{t-\tau}),\mu_{\bm\theta_{t-\tau}})\\
    \leq &\  2U_\delta^2m\rho^{\tau-1}.
\end{align*}
Combining \eqref{ae3-step1}, \eqref{ae3-step2}, \eqref{ae3-step3}, and \eqref{ae3-step4}, we have
\begin{align*}
    \mathbb{E}[\Psi(O_t,\bm\omega_t,\bm\theta_t)]=&\ \mathbb{E}[\Psi(O_t,\bm\omega_t,\bm\theta_t)-\Psi(O_t,\bm\omega_t,\bm\theta_{t-\tau})]\\
    &\ +\mathbb{E}[\Psi(O_t,\bm\omega_t,\bm\theta_{t-\tau})-\Psi(O_t,\bm\omega_{t-\tau},\bm\theta_{t-\tau})]\\
    &\ +\mathbb{E}[\Psi(O_t,\bm\omega_{t-\tau},\bm\theta_{t-\tau})-\Psi(\widetilde{O}_t,\bm\omega_{t-\tau},\bm\theta_{t-\tau})]\\
    &\ +\mathbb{E}[\Psi(\widetilde{O}_t,\bm\omega_{t-\tau},\bm\theta_{t-\tau})]\\
    \leq &\  C_1\Vert \bm\theta_{t}-\bm\theta_{t-\tau}\Vert+U_\delta^2|\mathcal{A}|L_\pi G\tau(\tau+1) \alpha_{t-\tau}\\
    &\ +2U_{\delta}^2 m\rho^{\tau-1}+6U_\delta\Vert \bm\omega_t-\bm\omega_{t-\tau}\Vert,
\end{align*}
where $C_1=2U_\delta^2|\mathcal{A}|L_\pi(1+\lceil \log_\rho m^{-1}\rceil+\frac{1}{1-\rho})+2U_\delta(L_J+L_{\ast})$.
\end{proof}
\noindent\textbf{Proof of \Cref{new_added}}.
\begin{proof}
    We will divide the proof of this lemma into four steps.
    
\noindent\textbf{Step 1:} show that for any $O,\bm\omega,\bm\theta_1,\bm\theta_2$, we have
\begin{align}\label{new_added_step1}
    \|\Xi(O,\bm\omega,\bm\theta_1)-\Xi(O,\bm\omega,\bm\theta_{2})\|\leq (3U_\delta L_h+2U_\delta BL_\ast)\|\bm\theta_1-\bm\theta_{2}\|
\end{align}
Since $\Xi(O,\bm\omega,\bm\theta)=\langle \bm\omega-\bm\omega^\ast, (\nabla \bm\omega^\ast_{\bm\theta})^\top(\mathbb{E}_{O'}[h(O',\bm\theta)]-h(O,\bm\theta))\rangle$, we define $\mathbb{E}_{\bm\theta} [h(O',\bm\theta)]:=\mathbb{E}_{O'}[h(O',\bm\theta)]$, where $\mathbb{E}_{\bm\theta}$ is the shorthand of $\mathbb{E}_{O'\sim (\mu_{\bm\theta},\pi_{\bm\theta},\mathcal{P})}$. In the following, we will show that each term in $\Xi(O,\bm\omega,\bm\theta)$ is Lipschitz with respect to $\bm\theta$.

Term $\bm\omega$ is not related to $\bm\theta$, term $\bm\omega^\ast:=\bm\omega^\ast(\bm\theta)$ is $L_\ast$-Lipschitz, and term $\nabla \bm\omega_{\bm\theta}^\ast$ is $L_s$-Lipschitz.

For term $h(O,\bm\theta)$, denote $\delta(O,\bm\theta):=r(s,a)-J(\bm\theta)+(\bm\phi(s')-\bm\phi(s))^\top \bm\omega^\ast$, we have
\begin{align*}
    &\ \Vert h(O,\bm\theta_1)-h(O,\bm\theta_2)\Vert\\=&\ \Vert \delta(O,\bm\theta_1)\nabla \log\pi_{\bm\theta_1}(a|s)-\delta(O,\bm\theta_2)\nabla \log\pi_{\bm\theta_2}(a|s)\Vert\\
    \leq &\  \Vert \delta(O,\bm\theta_1)\nabla \log\pi_{\bm\theta_1}(a|s)-\delta(O,\bm\theta_1)\nabla \log\pi_{\bm\theta_2}(a|s)\Vert\\
    &\ +\Vert \delta(O,\bm\theta_1)\nabla \log\pi_{\bm\theta_2}(a|s)-\delta(O,\bm\theta_2)\nabla \log\pi_{\bm\theta_2}(a|s)\Vert\\
    \leq &\  U_\delta L_l\Vert \bm\theta_1-\bm\theta_2\Vert+B|\delta(O,\bm\theta_1)-\delta(O,\bm\theta_2)|\\
    \leq &\ U_\delta L_l\Vert \bm\theta_1-\bm\theta_2\Vert+B(|J(\bm\theta_1)-J(\bm\theta_2)|+\Vert \bm\phi(s')-\bm\phi(s)\Vert \cdot \Vert \bm\omega^\ast(\bm\theta_1)-\bm\omega^\ast(\bm\theta_2)\Vert)\\
    \leq &\ (U_\delta L_l+2BL_{\ast})\Vert \bm\theta_1-\bm\theta_2\Vert+B|\mathbb{E}_{s\sim \mu_{\bm\theta_1},a\sim\pi_{\bm\theta_1}}[r(s,a)]-\mathbb{E}_{s\sim \mu_{\bm\theta_2},a\sim\pi_{\bm\theta_2}}[r(s,a)]|\\
    \leq &\ (U_\delta L_l+2BL_{\ast})\Vert \bm\theta_1-\bm\theta_2\Vert+2BU_rd_{TV}(\mu_{\bm\theta_1}\otimes \pi_{\bm\theta_1},\mu_{\bm\theta_2}\otimes \pi_{\bm\theta_2})\\
    \leq &\ (U_\delta L_l+2BL_{\ast}+2BU_\delta|\mathcal{A}|L_\pi(1+\lceil \log_{\rho}m^{-1}\rceil+\frac{1}{1-\rho}))\Vert \bm\theta_1-\bm\theta_2\Vert.
\end{align*}
Hence we have $h(O,\bm\theta)$ is $L_h$-Lipschitz, where $L_h$ denotes the above coefficient.

For term $\mathbb{E}_{\bm\theta}[h(O',\bm\theta)]$, we have
\begin{align*}
    &\ \Vert \mathbb{E}_{\bm\theta_1}[h(O',\bm\theta_1)]-\mathbb{E}_{\bm\theta_2}[h(O',\bm\theta_2)]\Vert\\ \leq &\ \Vert \mathbb{E}_{\bm\theta_1}[h(O',\bm\theta_1)]-\mathbb{E}_{\bm\theta_1}[h(O',\bm\theta_2)]\Vert+\Vert \mathbb{E}_{\bm\theta_1}[h(O',\bm\theta_2)]-\mathbb{E}_{\bm\theta_2}[h(O',\bm\theta_2)]\Vert\\
    \leq &\  \mathbb{E}_{\bm\theta_1}[\Vert h(O',\bm\theta_1)-h(O',\bm\theta_2)\Vert]+\Vert \mathbb{E}_{\bm\theta_1}[h(O',\bm\theta_2)]-\mathbb{E}_{\bm\theta_2}[h(O',\bm\theta_2)]\Vert\\
    \leq &\  L_h\Vert \bm\theta_1-\bm\theta_2\Vert+\Vert \mathbb{E}_{\bm\theta_1}[h(O',\bm\theta_2)]-\mathbb{E}_{\bm\theta_2}[h(O',\bm\theta_2)]\Vert\\
    \leq &\   L_h\Vert \bm\theta_1-\bm\theta_2\Vert+2BU_\delta d_{TV}(\mu_{\bm\theta_1}\otimes \pi_{\bm\theta_1},\mu_{\bm\theta_2}\otimes \pi_{\bm\theta_2})\\
    \leq &\  (L_h+2BU_\delta|\mathcal{A}|L_\pi(1+\lceil \log_{\rho}m^{-1}\rceil+\frac{1}{1-\rho}))\Vert \bm\theta_1-\bm\theta_2\Vert\\
    \leq &\  2L_h\Vert \bm\theta_1-\bm\theta_2\Vert.
\end{align*}
Then we have $\bm\omega-\bm\omega_{\bm\theta}^\ast$ is $U_\delta$-bounded and $L_\ast$-Lipschitz; $\nabla \bm\omega^\ast_{\bm\theta}$ is $L_\ast$-bounded and $L_s$-Lipschitz; $\mathbb{E}_{\bm\theta}[h(O',\bm\theta)]-h(O,\bm\theta)$ is  $2U_\delta B$-bounded and $3L_h$-Lipschitz. By the triangle inequality, we have
\begin{align*}
    \|\Xi(O,\bm\omega,\bm\theta_1)-\Xi(O,\bm\omega,\bm\theta_2)\|\leq (2U_\delta BL_\ast^2+2U_\delta^2BL_s+3U_\delta L_\ast L_h)\|\bm\theta_1-\bm\theta_2\|\leq C_2\|\bm\theta_1-\bm\theta_2\|,
\end{align*}
where $C_2:=6BU_\delta^2|\mathcal{A}|L_\ast L_\pi(1+\lceil \log_{\rho}m^{-1}\rceil+\frac{1}{1-\rho})+3U_\delta^2 L_lL_\ast+8U_\delta BL_{\ast}^2+2U_\delta^2BL_s$.

\noindent\textbf{Step 2:} show that
\begin{align}\label{new_added_step2}
    \|\Xi(O,\bm\omega_1,\bm\theta)-\Xi(O,\bm\omega_{2},\bm\theta)\|\leq 2U_\delta B L_\ast\Vert \bm\omega_1-\bm\omega_{2}\Vert.
\end{align}
Actually, we have
\begin{align*}
    \|\Xi(O,\bm\omega_1,\bm\theta)-\Xi(O,\bm\omega_2,\bm\theta)\|&=\|\langle \bm\omega_1-\bm\omega_2, (\nabla \bm\omega_{\bm\theta}^\ast)^\top\mathbb{E}_{O'}[h(O',\bm\theta)]-h(O,\bm\theta)\rangle\|\\
    &\leq 2U_\delta BL_\ast \|\bm\omega_1-\bm\omega_2\|.
\end{align*}
\noindent\textbf{Step 3:} show that for tuples $O_t=(s_t,a_t,s_{t+1})$ and $\widetilde{O}_t=(\widetilde{s}_t,\widetilde{a}_t,\widetilde{s}_{t+1})$. Conditioning on $s_{t-\tau+1}$ and $\bm\theta_{t-\tau}$, we have
\begin{align}\label{new_added_step3}
    \mathbb{E}[\Xi(O_t,\bm\omega_{t-\tau},\bm\theta_{t-\tau})-\Xi(\widetilde{O}_t,\bm\omega_{t-\tau},\bm\theta_{t-\tau})]\leq 2U_\delta^2B|\mathcal{A}|L_\pi\sum\limits_{k=t-\tau}^t\mathbb{E}\Vert \bm\theta_k-\bm\theta_{t-\tau}\Vert.
\end{align}
By definition of $\Xi(O,\bm\omega,\bm\theta)$, we have
\begin{align}\label{new_added_eq1}
    &\ \|\mathbb{E}[\Xi(O_t,\bm\omega_{t-\tau},\bm\theta_{t-\tau})-\Xi(\widetilde{O}_t,\bm\omega_{t-\tau},\bm\theta_{t-\tau})]\|\nonumber \\=&\ \|\mathbb{E}[\langle \bm\omega_{t-\tau}-\bm\omega^\ast_{t-\tau},(\nabla \bm\omega^\ast_{t-\tau})^\top (h(\widetilde{O}_t,\bm\theta_{t-\tau})-h(O_t,\bm\theta_{t-\tau}))]\|\nonumber\\
    \leq&\ 4U_\delta^2 BL_\ast d_{TV}(\mathbb{P}(O_t\in\cdot |s_{t-\tau+1},\bm\theta_{t-\tau}),\mathbb{P}(\widetilde{O}_t\in\cdot |s_{t-\tau+1},\bm\theta_{t-\tau})),
\end{align}
where the inequality comes from the definition of total variation distance. The total variation norm between $O_t$ and $\widetilde{O}_t$ has been computed in \eqref{totalv1}. Plugging \eqref{totalv1} into \eqref{new_added_eq1}, we get
\begin{align*}
    \|\mathbb{E}[\Xi(O_t,\bm\omega_{t-\tau},\bm\theta_{t-\tau})-\Xi(\widetilde{O}_t,\bm\omega_{t-\tau},\bm\theta_{t-\tau})]\|\leq &\ 2U_\delta^2 B|\mathcal{A}|L_\ast L_\pi \sum\limits_{k=t-\tau}^t\mathbb{E}\Vert \bm\theta_k-\bm\theta_{t-\tau}\Vert\\
    \leq &\ 2U_\delta^2 B|\mathcal{A}|L_\ast L_\pi G\tau(\tau+1)\alpha_{t-\tau}.
\end{align*}
\noindent\textbf{Step 4:} Show that conditioning on $s_{t-\tau+1}$ and $\bm\theta_{t-\tau}$, we have
\begin{align}\label{new_added_step4}
    \|\mathbb{E}[\Xi(\widetilde{O}_t,\bm\omega_{t-\tau},\bm\theta_{t-\tau})]\|\leq 4U_\delta^2 Bm\rho^{\tau-1}.
\end{align}
It can be shown that
\begin{align*}
    \|\mathbb{E}[\Xi(\widetilde{O}_t,\bm\omega_{t-\tau},\bm\theta_{t-\tau})]\|\overset{(1)}{=}\ &\|\mathbb{E}[\Xi(\widetilde{O}_t,\bm\omega_{t-\tau},\bm\theta_{t-\tau})-\Xi(O'_{t-\tau},\bm\omega_{t-\tau},\bm\theta_{t-\tau})]\|\\
    \overset{(2)}{\leq} \  &4U^2_\delta BL_\ast d_{TV}(\mathbb{P}(\widetilde{O}_t\in\cdot|s_{t-\tau+1},\bm\theta_{t-\tau}),\mu_{\bm\theta_{t-\tau}}\otimes \pi_{\bm\theta_{t-\tau}}\otimes \mathcal{P}),
\end{align*}
where (1) is due to the fact that $O'_t$ is from the stationary distribution which satisfies $\mathbb{E}[\Xi(O_{t-\tau}',\bm\omega_{t-\tau},\bm\theta_{t-\tau})|\bm\theta_{t-\tau}]=0$ and (2) follows from the definition of total variation distance. From Assumption \ref{ass2}, we know that
\begin{align*}
    d_{TV}(\mathbb{P}(\widetilde{s}_t\in\cdot),\mu_{\bm\theta_{t-\tau}})\leq m\rho^{\tau-1}.
\end{align*}
Therefore, we have
\begin{align*}
    \|\mathbb{E}[\Xi(\widetilde{O}_t,\bm\omega_{t-\tau},\bm\theta_{t-\tau})\|\leq &\  4U_\delta^2BL_\ast d_{TV}(\mathbb{P}(\widetilde{O}_t=\cdot|s_{t-\tau+1},\bm\theta_{t-\tau}),\mu_{\bm\theta_{t-\tau}}\otimes \pi_{\bm\theta_{t-\tau}}\otimes \mathcal{P})\\
    = &\  4U_\delta^2BL_\ast d_{TV}(\mathbb{P}((\widetilde{s}_t,\widetilde{a}_t)\in\cdot|s_{t-\tau+1},\bm\theta_{t-\tau}),\mu_{\bm\theta_{t-\tau}}\otimes \pi_{\bm\theta_{t-\tau}})\\
    = &\ 4U_\delta^2BL_\ast d_{TV}(\mathbb{P}(\widetilde{s}_t=\cdot |s_{t-\tau+1},\bm\theta_{t-\tau}),\mu_{\bm\theta_{t-\tau}})\\
    \leq &\  4U_\delta^2BL_\ast m\rho^{\tau-1}.
\end{align*}
Combining \eqref{new_added_step1}-\eqref{new_added_step4}, we can decompose the Markovian bias as
\begin{align*}
    \mathbb{E}[\Xi(O_t,\bm\omega_t,\bm\theta_t)]=&\ \mathbb{E}[\Xi(O_t,\bm\omega_t,\bm\theta_t)-\Xi(O_t,\bm\omega_t,\bm\theta_{t-\tau})]\\
    &\ +\mathbb{E}[\Xi(O_t,\bm\omega_t,\bm\theta_{t-\tau})-\Xi(O_t,\bm\omega_{t-\tau},\bm\theta_{t-\tau})]\\
    &\ +\mathbb{E}[\Xi(O_t,\bm\omega_{t-\tau},\bm\theta_{t-\tau})-\Xi(\widetilde{O}_t,\bm\omega_{t-\tau},\bm\theta_{t-\tau})]\\
    &\ +\mathbb{E}[\Xi(\widetilde{O}_t,\bm\omega_{t-\tau},\bm\theta_{t-\tau})]\\
    \leq &\ C_2\|\bm\theta_t-\bm\theta_{t-\tau}\|+2U_\delta BL_\ast \Vert \bm\omega_t-\bm\omega_{t-\tau}\Vert\\
    &\ +2U_\delta^2 B|\mathcal{A}|L_\ast L_\pi G\tau(\tau+1)\alpha_{t-\tau}+4U^2_\delta B L_\ast m\rho^{\tau-1}.
\end{align*}
Thus we conclude our proof.
\end{proof}
\noindent\textbf{Proof of \Cref{e7}}.
\begin{proof}
We will divide the proof of this lemma into three steps.

\noindent\textbf{Step 1:} show that
\begin{align}\label{eq:step1}
    |\Theta(O,\bm\theta_{1})-\Theta(O,\bm\theta_{2})|\leq (2U_\delta BL_{J'}+3L_JL_h)\Vert \bm\theta_1-\bm\theta_{2}\Vert,
\end{align}
where $L_h=U_\delta L_l+2BL_{\ast}+2BU_\delta|\mathcal{A}|L_\pi(1+\lceil \log_{\rho}m^{-1}\rceil+\frac{1}{1-\rho})$ is defined in the proof of \Cref{new_added}.

Since $\Theta(O,\bm\theta)=\langle \nabla J(\bm\theta),\mathbb{E}_{\bm\theta}[h(O',\bm\theta)]-h(O,\bm\theta)\rangle$, we will show that each term in $\Theta(O,\bm\theta)$ is Lipschitz. 

For the term $\nabla J(\bm\theta)$, we know it's $L_J$-bounded and $L_{J'}$-Lipschitz. For term $\langle \nabla J(\bm\theta),\mathbb{E}_{\bm\theta}[h(O',\bm\theta)]-h(O,\bm\theta)\rangle$, we have shown in the proof of \Cref{new_added} that it's $2U_\delta B$-bounded and $3L_h$-Lipschitz. By the triangle inequality, we have
\begin{align*}
    |\Theta(O,\bm\theta_1)-\Theta(O,\theta_2)|\leq (2U_\delta BL_{J'}+3L_JL_h)\|\bm\theta_1-\bm\theta_2\|
\end{align*}
\noindent\textbf{Step 2:} show that conditioning on $s_{t-\tau+1}$ and $\bm\theta_{t-\tau}$, we have
\begin{align}\label{eq:step2}
    |\mathbb{E}[\Theta(O_t,\bm\theta_{t-\tau})-\Theta(\widetilde{O}_t,\bm\theta_{t-\tau})]|\leq 2U_\delta BL_J|\mathcal{A}|L_\pi\sum\limits_{k=t-\tau}^t\Vert \bm\theta_k-\bm\theta_{t-\tau}\Vert
\end{align}
By definition of $\Theta(O,\bm\theta)$, we have
\begin{align}\label{eq1}
    &\ |\mathbb{E}[\Theta(O_t,\bm\theta_{t-\tau})-\Theta(\widetilde{O}_t,\bm\theta_{t-\tau})]|\nonumber \\=&\ |\mathbb{E}[\langle \nabla J(\bm\theta_{t-\tau}),h(\widetilde{O}_t,\bm\theta_{t-\tau})-h(O_t,\bm\theta_{t-\tau})]|\nonumber\\
    =&\ |\mathbb{E}[\langle \nabla J(\bm\theta_{t-\tau}),h(\widetilde{O}_t,\bm\theta_{t-\tau})\rangle-\mathbb{E}[\langle \nabla J(\bm\theta_{t-\tau}),h(O_t,\bm\theta_{t-\tau})\rangle]|\nonumber \\
    \leq&\ 4U_\delta BL_Jd_{TV}(\mathbb{P}(O_t\in\cdot |s_{t-\tau+1},\bm\theta_{t-\tau}),\mathbb{P}(\widetilde{O}_t\in\cdot |s_{t-\tau+1},\bm\theta_{t-\tau})),
\end{align}
where the inequality comes from the definition of total variation distance. The total variation distance between $O_t$ and $\widetilde{O}_t$ has been computed in \eqref{totalv1}. Plugging \eqref{totalv1} into \eqref{eq1}, we get
\begin{align*}
     |\mathbb{E}[\Theta(O_t,\bm\theta_{t-\tau})-\Theta(\widetilde{O}_t,\bm\theta_{t-\tau})]|\leq 2U_\delta BL_J|\mathcal{A}|L_\pi\sum\limits_{k=t-\tau}^t\Vert \bm\theta_k-\bm\theta_{t-\tau}\Vert.
\end{align*}
\noindent\textbf{Step 3:} show that conditioning on $s_{t-\tau+1}$ and $\bm\theta_{t-\tau}$, we have
\begin{align}\label{eq:step3}
    |\mathbb{E}[\Theta(\widetilde{O}_t,\bm\theta_{t-\tau})-\Theta(O'_{t-\tau},\bm\theta_{t-\tau})]|\leq 4U_\delta BL_Jm\rho^{\tau-1}.
\end{align}
From the definition of $\Theta(O,\bm\theta)$, we have
\begin{align*}
    |\mathbb{E}[\Theta(\widetilde{O}_t,\bm\theta_{t-\tau})-\Theta(O'_{t-\tau},\bm\theta_{t-\tau})]|=&\ |\mathbb{E}[\langle \nabla J(\bm\theta_{t-\tau}), h(O'_t,\bm\theta_{t-\tau})\rangle-\langle \nabla J(\bm\theta_{t-\tau}), h(\widetilde{O}_t,\bm\theta_{t-\tau})\rangle]|\\
    \leq &\  4U_\delta BL_Jd_{TV}(\mathbb{P}(\widetilde{O}_t\in\cdot|s_{t-\tau+1},\bm\theta_{t-\tau}),\mu_{\bm\theta_{t-\tau}}\otimes \pi_{\bm\theta_{t-\tau}}\otimes \mathcal{P})\\
    = &\  4U_\delta BL_Jd_{TV}(\mathbb{P}((\widetilde{s}_t,\widetilde{a}_t)\in\cdot|s_{t-\tau+1},\bm\theta_{t-\tau}),\mu_{\bm\theta_{t-\tau}}\otimes \pi_{\bm\theta_{t-\tau}})\\
    = &\ 4U_\delta BL_J d_{TV}(\mathbb{P}(\widetilde{s}_t=\cdot |s_{t-\tau+1},\bm\theta_{t-\tau}),\mu_{\bm\theta_{t-\tau}})\\
    \leq &\  4U_\delta BL_J m\rho^{\tau-1},
\end{align*}
where the last inequality follows from Assumption \ref{ass2}. Therefore, we have
\begin{align*}
    |\mathbb{E}[\Theta(\widetilde{O}_t,\bm\theta_{t-\tau}-\Theta(O'_{t-\tau},\bm\theta_{t-\tau})]|\leq 4U_\delta BL_Jm\rho^{\tau-1}.
\end{align*}
Combining \eqref{eq:step1}, \eqref{eq:step2}, and \eqref{eq:step3}, we can decompose the Markovian bias as
\begin{align*}
    \mathbb{E}[\Theta(O_t,\bm\theta_t)]=&\ \mathbb{E}[\Theta(O_t,\bm\theta_t)-\Theta(O_t,\bm\theta_{t-\tau})]\\
    &\ +\mathbb{E}[\Theta(O_t,\bm\theta_{t-\tau})-\Theta(\widetilde{O}_t,\bm\theta_{t-\tau})]\\
    &\ +\mathbb{E}[\Theta(\widetilde{O}_t,\bm\theta_{t-\tau})-\Theta(O'_{t-\tau},\bm\theta_{t-\tau})]\\
    &\ +\mathbb{E}[\Theta(O_{t-\tau}',\bm\theta_{t-\tau})],
\end{align*}
where $\widetilde{O}_t$ is from the auxiliary Markovian chain defined in \eqref{chain-au} and $O'_t$ is from the stationary distribution which satisfies $\mathbb{E}[\Theta(O_{t-\tau}',\bm\theta_{t-\tau})]=0$.

Then we have
\begin{align*}
    \mathbb{E}[\Theta(O_t,\bm\theta_t)]\leq&\  (2U_\delta BL_{J'}+3L_JL_h)\mathbb{E}\Vert \bm\theta_t-\bm\theta_{t-\tau}\Vert\\
    &\ +2U_\delta BL_J|\mathcal{A}|L_\pi\sum\limits_{k=t-\tau}^t\Vert \bm\theta_k-\bm\theta_{t-\tau}\Vert+4U_\delta BL_Jm\rho^{\tau-1}\\
    \leq &\  (2U_\delta BL_{J'}+3L_JL_h)\sum\limits_{k=t-\tau+1}^t\mathbb{E}\Vert \bm\theta_k-\bm\theta_{k-1}\Vert\\
    &\ +2U_\delta BL_J|\mathcal{A}|L_\pi\sum\limits_{k=t-\tau+1}^t\sum\limits_{j=t-\tau+1}^k\mathbb{E}\Vert \bm\theta_j-\bm\theta_{j-1}\Vert+4U_\delta BL_Jm\rho^{\tau-1}\\
    \leq &\ (2U_\delta BL_{J'}+3L_JL_h)\sum\limits_{k=t-\tau+1}^t\mathbb{E}\Vert \bm\theta_k-\bm\theta_{k-1}\Vert\\
    &\ +2U_\delta BL_J|\mathcal{A}|L_\pi\tau\sum\limits_{j=t-\tau+1}^t\mathbb{E}\Vert \bm\theta_j-\bm\theta_{j-1}\Vert+4U_\delta BL_Jm\rho^{\tau-1}\\
    \leq &\  D_1(\tau+1)\sum\limits_{k=t-\tau+1}^t\mathbb{E}\Vert \bm\theta_k-\bm\theta_{k-1}\Vert+D_2m\rho^{\tau-1}\\
    \leq &\ D_1(\tau+1)^2G\alpha+D_2m\rho^{\tau-1}
\end{align*}
where $D_1=\max\{2U_\delta BL_{J'}+3L_JL_h, 2U_\delta BL_J|\mathcal{A}|L_\pi\}$ and $D_2=4U_\delta BL_J$. Thus we conclude the proof.
\end{proof}
% \newpage
\section{IID Sampling Analysis}\label{iid analysis}
\begin{algorithm}[H]
\caption{Single-timescale Actor-Critic (i.i.d. sampling)}\label{alg2}            
\begin{algorithmic}[1]
\STATE \textbf{Input} initial actor parameter $\bm\theta_0$, initial critic parameter $\bm\omega_0$, initial reward estimator $\eta_0$, stepsize $\alpha_t$ for actor, $\beta_t$ for critic and $\gamma_t$ for reward estimator.
\FOR{$t=0,1,2,\cdots,T-1$}
    \STATE Sample $s_t \sim \mu_{\bm\theta_t}$
    \STATE Take the action $a_t \sim \pi_{\bm\theta_t}(\cdot |s_t)$ 
    \STATE Observe next state $s_{t}'\sim \mathcal{P}(\cdot |s_t,a_t)$ and the reward $r_t=r(s_t,a_t)$
    \STATE $\delta_t=r_t-\eta_t+\bm\phi(s'_{t})^\top \bm\omega_t-\bm\phi(s_t)^\top \bm\omega_t$
    \STATE $\eta_{t+1}=\eta_t+\gamma_t(r_t-\eta_t)$
    \STATE $\bm\omega_{t+1}=\Pi_{U_{\bm\omega}}(\bm\omega_{t} + \beta_t \delta_t \bm\phi(s_{t}))$
    \STATE $\bm\theta_{t+1}=\bm\theta_t+\alpha_t\delta_t \nabla_{\bm\theta}\log \pi_{\bm\theta_t}(a_t|s_t)$
\ENDFOR
\end{algorithmic} 
\end{algorithm}
Note that under i.i.d. sampling in Algorithm \ref{alg2}, we denote by $s_t$ the samples from the stationary distribution and $s_t'$ the subsequent state following transition kernel $s'_t\sim\mathcal{P}(\cdot |s_t,a_t)$. Correspondingly, we redefine the observation tuple as $O_t=(s_t,a_t,s'_t)$ (in contrast to  $O_t=(s_t,a_t,s_{t+1})$ in the Markovian sampling case). This modification implies the decoupling of $O_t$ and $O_{t+1}$ since $s_{t+1}$ in tuple $O_{t+1}$ is a new state sampled from the stationary distribution rather than inherited from $O_t$. This intuitively elucidates the vanishment of Markovian noise under i.i.d. sampling.
\begin{lemma}\label{iid}
Under i.i.d sampling, we have
\begin{align*}
    \mathbb{E}[\Phi(O_t,\eta_t,\bm\theta_t)]=& \ 0,\\
    \mathbb{E}[\Psi(O_t,\bm\omega_t,\bm\theta_t)]=& \ 0,\\
    \mathbb{E}[\Theta(O_t,O'_t,\bm\theta_t)]=& \ 0,\\
    \mathbb{E}[\Xi(O_t,\bm\omega_t,\bm\theta_t)]=& \ 0.
\end{align*}
\end{lemma}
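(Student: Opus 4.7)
The plan is to exploit the fundamental difference between i.i.d.\ sampling and Markovian sampling: in Algorithm \ref{alg2}, the tuple $O_t=(s_t,a_t,s'_t)$ is, conditional on the past, drawn freshly from $\mu_{\theta_t}\otimes\pi_{\theta_t}\otimes\mathcal{P}$. I would introduce the natural filtration $\mathcal{F}_t$ generated by $\{O_0,\ldots,O_{t-1}\}$ (together with $O'_0,\ldots,O'_{t-1}$ where applicable), so that the iterates $\eta_t,\omega_t,\theta_t$ are all $\mathcal{F}_t$-measurable via the deterministic updates in Lines 6--9. Each of the four quantities is then a pairing of an $\mathcal{F}_t$-measurable factor with a second factor whose conditional expectation under the stationary draw is zero by construction, and the tower property delivers the claim.

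I would then dispatch the four identities in order. For $\Phi(O_t,\eta_t,\theta_t)=(\eta_t-J(\theta_t))(r(s_t,a_t)-J(\theta_t))$, factor out the $\mathcal{F}_t$-measurable $(\eta_t-J(\theta_t))$ and observe that
\[
\mathbb{E}[r(s_t,a_t)-J(\theta_t)\mid\mathcal{F}_t]
=\mathbb{E}_{s\sim\mu_{\theta_t},a\sim\pi_{\theta_t}}[r(s,a)]-J(\theta_t)=0
\]
by the definition of $J(\theta)$ in Section 2. For $\Psi(O_t,\omega_t,\theta_t)=\langle \omega_t-\omega^\ast_{\theta_t},g(O_t,\omega_t,\theta_t)-\bar g(\omega_t,\theta_t)\rangle$, pull the $\mathcal{F}_t$-measurable factor $\omega_t-\omega^\ast_{\theta_t}$ outside the conditional expectation and use that $\bar g(\omega,\theta)=\mathbb{E}_{(s,a,s')\sim\mu_\theta\otimes\pi_\theta\otimes\mathcal{P}}[g(O,\omega,\theta)]$ exactly cancels $\mathbb{E}[g(O_t,\omega_t,\theta_t)\mid\mathcal{F}_t]$.

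For $\Theta(O_t,O'_t,\theta_t)=\langle \nabla J(\theta_t),\mathbb{E}_{O'}[h(O',\theta_t)]-h(O_t,\theta_t)\rangle$, note that $\mathbb{E}_{O'}[h(O',\theta_t)]$ is deterministic given $\theta_t$, so it coincides with $\mathbb{E}[h(O_t,\theta_t)\mid\mathcal{F}_t]$ because $O_t$ is sampled from the same stationary distribution. Hence the bracketed term has zero conditional mean, and pairing it against the $\mathcal{F}_t$-measurable $\nabla J(\theta_t)$ gives $\mathbb{E}[\Theta]=0$. The argument for $\Xi(O_t,\omega_t,\theta_t)=\langle \omega_t-\omega^\ast_{\theta_t},(\nabla\omega^\ast_{\theta_t})^\top(\mathbb{E}_{O'}[h(O',\theta_t)]-h(O_t,\theta_t))\rangle$ is identical: both $\omega_t-\omega^\ast_{\theta_t}$ and $(\nabla\omega^\ast_{\theta_t})^\top$ are $\mathcal{F}_t$-measurable, and the same centering of $h(O_t,\theta_t)$ under the stationary law yields zero.

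There is no real obstacle here; the only mild care is bookkeeping the filtration so that (i) the parameter iterates are past-measurable and (ii) $O_t$ is conditionally fresh from $\mu_{\theta_t}\otimes\pi_{\theta_t}\otimes\mathcal{P}$. This is precisely the structural property that collapses each of the Markovian-bias lemmas (\Cref{e2}, \Cref{e4}, \Cref{new_added}, \Cref{e7}) to an exact zero and explains why the logarithmic mixing-time factor $\tau_T$ disappears from \Cref{main theorem2}.
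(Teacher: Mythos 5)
Your proposal is correct and is essentially the paper's own argument: the paper likewise conditions on the history $v_{0:t-1}$ (your $\mathcal{F}_t$), pulls out the past-measurable factors $\eta_t-J(\theta_t)$, $\omega_t-\omega^\ast_{\theta_t}$, $\nabla J(\theta_t)$, and $(\nabla\omega^\ast_{\theta_t})^\top$, and uses that $O_t$ is conditionally a fresh draw from $\mu_{\theta_t}\otimes\pi_{\theta_t}\otimes\mathcal{P}$ so that the remaining factor is exactly centered. No meaningful difference in approach.
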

\begin{proof}
Note that the expectation is taken over all the random variables. We use the notation $O_t$ to denote the tuple $(s_t,a_t,s'_{t})$ and $v_{0:t}$ to denote the sequence $(s_t,a_t,s'_{t}), (s_t,a_t,s'_{t}), \cdots, (s_t,a_t,s'_{t})$. By definition in \eqref{notation2}, it can be shown that
 \begin{align*}
     \mathbb{E}[\Phi(O_t,\eta_t,\bm\theta_t)]=&\ \mathbb{E}_{v_{0:t}}[\Phi(O_t,\eta_t,\bm\theta_t)]\\
     =&\ \mathbb{E}_{v_{0:t-1}}\mathbb{E}_{v_{0:t}}[(\eta_t-J(\bm\theta_t))(r_t-J(\bm \theta_t))|v_{0:t-1}],
 \end{align*}
where is second equality is due to law of total expectation. Once we know $v_{0:t-1}$, $\eta_t$ and $J(\bm\theta_t)$ is not a random variable any more. It holds that
\begin{align*}
    \mathbb{E}[\Phi(O_t,\eta_t,\bm\theta_t)]=&\ \mathbb{E}_{v_{0:t-1}}\mathbb{E}_{v_{0:t}}[(\eta_t-J(\bm\theta_t))(r_t-J(\bm\theta_t))|v_{0:t-1}]\\
    =&\ \mathbb{E}_{v_{0:t-1}}(\eta_t-J(\bm\theta_t))\mathbb{E}_{v_{0:t}}[(r_t-J(\bm\theta_t))|v_{0:t-1}]\\
    =&\ \mathbb{E}_{v_{0:t-1}}(\eta_t-J(\bm\theta_t))\mathbb{E}_{O_{t}}[(r_t-J(\bm\theta_t))|v_{0:t-1}]\\
    =&\ 0,
\end{align*}
where the last equation is due to $\mathbb{E}_{O_{t}}[(r_t-J(\bm\theta_t))|v_{0:t-1}]=0$ under i.i.d. sampling.

By a similar argument, we have
\begin{align*}
    \mathbb{E}[\Psi(O_t,\eta_t,\bm\theta_t)]=&\ \mathbb{E}_{v_{0:t}}[\langle \bm\omega_t-\bm\omega^\ast_t,g(O,\bm\omega,\bm\theta)-\bar{g}(\bm\omega_t,\bm\theta_t)\rangle]\\
    =&\ \mathbb{E}_{v_{0:t-1}}\mathbb{E}_{v_{0:t}}[\langle \bm\omega_t-\bm\omega^\ast_t,g(O_t,\bm\omega_t,\bm\theta_t)-\bar{g}(\bm\omega_t,\bm\theta_t)\rangle|v_{0:t-1}]\\
    =&\ \mathbb{E}_{v_{0:t-1}}\langle \bm\omega_t-\bm\omega^\ast_t,\mathbb{E}_{v_{0:t}}[g(O_t,\bm\omega_t,\bm\theta_t)-\bar{g}(\bm\omega_t,\bm\theta_t)\rangle|v_{0:t-1}]\\
    =&\ \mathbb{E}_{v_{0:t-1}}\langle \bm\omega_t-\bm\omega^\ast_t,\mathbb{E}_{O_t}[g(O_t,\bm\omega_t,\bm\theta_t)-\bar{g}(\bm\omega_t,\bm\theta_t)\rangle|v_{0:t-1}]\\
    =&\ 0,
\end{align*}
where we use the fact that $\mathbb{E}_{O_t}[g(O_t,\bm\omega_t,\bm\theta_t)-\bar{g}(\bm\omega_t,\bm\theta_t)\rangle|v_{0:t-1}]=0$.

Similarly, we have
\begin{align*}
     \mathbb{E}[\Theta(O_t,O'_t,\bm\theta_t)]=\ &\mathbb{E}_{v_{0:t}}[\langle \nabla J(\bm\theta_t),\mathbb{E}_{O'_t}[h(O'_t,\bm\theta_t)]-h(O_t,\bm\theta_t)\rangle]\\
     =\ & \mathbb{E}_{v_{0:t-1}}\mathbb{E}_{v_{0:t}}[\langle \nabla J(\bm\theta_t),\mathbb{E}_{O'_t}[h(O'_t,\bm\theta_t)]-h(O_t,\bm\theta_t)\rangle|v_{0:t-1}]\\
     =\ &\mathbb{E}_{v_{0:t-1}}\langle \nabla J(\bm\theta_t), \mathbb{E}_{v_{0:t}}[\mathbb{E}_{O'_t}[h(O'_t,\bm\theta_t)]-h(O_t,\bm\theta_t)\rangle|v_{0:t-1}]\\
     =\ &\mathbb{E}_{v_{0:t-1}}\langle \nabla J(\bm\theta_t), \mathbb{E}_{O_t}[\mathbb{E}_{O'_t}[h(O'_t,\bm\theta_t)]-h(O_t,\bm\theta_t)\rangle|v_{0:t-1}]\\
     =\ &0,
\end{align*}
where we use fact that $O_t=O'_t$ under i.i.d. sampling. 

The proof of $\mathbb{E}[\Xi(O_t,\bm\omega_t,\bm\theta_t)]= 0$ is the same as the above argument, which concludes the proof.
\end{proof}
\noindent\textbf{Proof of \Cref{main theorem2}}.
\begin{proof}
The proof follows similarly to the Markovian sampling case. Specifically, all the Markovian noises (see the definitions in~\eqref{notation2}) present in the former analysis reduce to zero after taking expectations. The detailed results and proof are presented in~\Cref{iid}. Then, replacing \Cref{e2}, \Cref{e4}, and \Cref{e7} with \Cref{iid}, we will get the desired $\mathcal{O}(T^{-\frac{1}{2}})$ convergence rate and thus an $\mathcal{O}(\epsilon^{-2})$ sample complexity accordingly.
\end{proof}
%%%%%%%%%%%%%%%%%%%%%%%%%%%%%%%%%%%%%%%%%%%%%%%%%%%%%%%%%%%%%%%%%%%%%%%%%%%%%%%
%%%%%%%%%%%%%%%%%%%%%%%%%%%%%%%%%%%%%%%%%%%%%%%%%%%%%%%%%%%%%%%%%%%%%%%%%%%%%%%

\end{document}